\definecolor{lightblue}{rgb}{0.8, 0.8, 1}
\newcommand{\cost}{{\alpha}} 
\newcommand{\dtest}{Q}
\newcommand{\bx}{{\mathbf{x}}}
\newcommand{\by}{{\mathbf{y}}}
\newcommand{\hby}{{\mathbf{\hat{y}}}}
\newcommand{\hy}{{\hat{y}}}
\newcommand{\ntrain}{{n}}
\newcommand{\dtrain}{{P}}
\newcommand{\xtrain}{{\bar{x}}}
\newcommand{\bxtrain}{{\mathbf{\xtrain}}}
\newcommand{\ytrain}{{\bar{y}}}
\newcommand{\bytrain}{{\mathbf{\ytrain}}}
\newcommand{\xalt}{{z}}
\newcommand{\bxalt}{\mathbf{\xalt}}
\newcommand{\VS}{{\mathrm{VS}}}
\newcommand{\indices}{{A}}
\newcommand{\abfrac}{{a}}
\newcommand{\babfrac}{{\mathbf{\abfrac}}}
\newcommand{\loss}{{\ell}}
\newcommand{\Loss}{{\ell}}
\newcommand{\LOSS}{{\mathrm{L}}}
\newcommand{\poly}{\mathrm{poly}}
\newcommand{\RAD}{\mathsf{RAD}}
\newcommand{\ERM}{\mathsf{ERM}}
\newcommand{\VC}{\mathsf{VC}}
\newcommand{\ba}{\mathbf{a}}
\newcommand{\bb}{\mathbf{b}}
\newcommand{\bv}{\mathbf{v}}
\newcommand{\ind}[1]{\mathbf{1}\!\left[#1\right]}
\newcommand{\nats}{\mathbb{N}}
\newcommand{\Osep}{\mathsf{SEP}}
\newcommand{\Omaxloss}{{\mathcal{O}}}
\newcommand{\kdiv}{{\mathrm{D}}}
\newcommand{\alg}{{\mathsf{MMA}}}
\newcommand{\FLIP}{{\mathsf{FLIP}}}
\newcommand{\CDT}{{\mathsf{CDT}}}
\DeclareMathOperator*{\argmax}{arg\,max}
\DeclareMathOperator*{\argmin}{arg\,min}
\renewcommand{\P}{%
	\@ifnextchar\bgroup%
	{\@Pwithargs}
	{\@Pnoargs}
}
\newcommand{\@Pwithargs}[1]{%
	\@ifnextchar\bgroup%
	{\@Ptwoargs{#1}}
	{\@Ponearg{#1}}
}
\newcommand{\@Pnoargs}{\mathbb{P}}
\newcommand{\@Ponearg}[1]{\mathbb{P}\left[ #1 \right]}
\newcommand{\@Ptwoargs}[2]{\mathbb{P}_{#1}\left[ #2 \right]}
\newcommand{\E}{%
	\@ifnextchar\bgroup%
	{\@Ewithargs}
	{\@Enoargs}
}
\newcommand{\@Ewithargs}[1]{%
	\@ifnextchar\bgroup%
	{\@Etwoargs{#1}}
	{\@Eonearg{#1}}
}
\newcommand{\@Enoargs}{\mathbb{E}}
\newcommand{\@Eonearg}[1]{\mathbb{E}\left[ #1 \right]}
\newcommand{\@Etwoargs}[2]{\underset{#1}{\mathbb{E}}\left[ #2 \right]}
\newcommand{\eps}{{\epsilon}}
\newcommand{\bz}{{\mathbf{z}}}
\newcommand{\reals}{{\mathbb{R}}}
\newcommand{\OPT}{\mathrm{OPT}}
\newcommand{\TV}{\mathrm{TV}}
\newcommand{\defeq}{:=}
\newtheorem{theorem}{Theorem}
\newtheorem{lemma}{Lemma}
\title{Towards optimally abstaining from prediction\\ with OOD test examples}
\author{%
 Adam Tauman Kalai \\
 Microsoft Research  \\
 \And
 Varun Kanade \\
 University of Oxford \\
}
\begin{document}

\maketitle

\begin{abstract}
	A common challenge across all areas of machine learning is that training data is not distributed like test data, due to natural shifts, ``blind spots,'' or adversarial examples; such test examples are referred to as \emph{out-of-distribution} (OOD) test examples. We consider a model where one may abstain from predicting, at a fixed cost. In particular, our \textit{transductive} abstention algorithm takes labeled training examples and unlabeled test examples as input, and provides predictions with optimal prediction loss guarantees. The loss bounds match standard generalization bounds when test examples are i.i.d. from the training distribution, but add an additional term that is the cost of abstaining times the statistical distance between the train and test distribution (or the fraction of adversarial examples). For linear regression, we give a polynomial-time algorithm based on Celis-Dennis-Tapia optimization algorithms. For binary classification, we show how to efficiently implement it using a proper agnostic learner (i.e., an Empirical Risk Minimizer) for the class of interest. Our work builds on a recent abstention algorithm of Goldwasser, Kalais, and Montasser \cite{GKKM:2020} for transductive binary classification.
\end{abstract}


\section{Introduction}

For learning of a class of functions $F$ of bounded complexity, statistical learning theory guarantees low error if test examples are distributed like training examples. Thus abstention is not necessary for standard realizable prediction.  However, when the test distribution $Q$ is not the same as the distribution of training examples $P$, abstaining from prediction may be beneficial if the cost of abstaining $\alpha$ is substantially less than the cost of an error. This is particularly important when there are ``blind spots'' where $Q(x)>P(x)=0$. Such blind spots may occur because of natural distribution shifts, or indeed may be due to adversarial attacks. Collectively, test examples that deviate from the training distribution are referred to as \emph{out of distribution} (OOD) test examples.  
Such an extreme \textit{covariate shift} scenario was analyzed for binary classification in recent work by Goldwasser, Kalais, and Montasser \cite{GKKM:2020} (henceforth GKKM). 


Abstaining from predicting may be useful because it is well-known to be
impossible to guarantee accuracy on test examples from arbitrary $Q\neq P$
(abstention is unnecessary under the common assumption that $Q(x)/P(x)$ is upper-bounded) \citep{cortes2010learning}. Whether one is classifying images or
predicting the probability of success of a medical procedure, training data may
miss important regions of test examples. As an example, \citet{Fangetal2020}
observed noticeable signs of COVID-19 in many lung scans; any model trained on
data pre-2019 would not have any such instances in its training dataset. In the
case of image classification, a classifier may be trained on publicly available
data but may be also used on people's private images or even adversarial
spam/phishing images \citep{Yuan2019StealthyPU}. In settings such as medical
diagnoses or content moderation, abstention may be much less costly than a
misclassification. In practice, if a model abstains from making a prediction,
it could be followed up by a more elaborate (and costly) model or direct human
intervention. The extra costs of human intervention, or as may be the case in a
medical setting, a more expensive test, needs to be traded off with the
potential costs of misclassification. For a regression example, a model may be
used to predict the success probability of a medical treatment across a
population, yet examples from people that are considered high-risk for the
treatment may be absent from the training data.

We begin by discussing binary classification, and then move to regression. In
particular, for any distribution $P$ over examples $x \in X$ and any true
classifier $f: X \rightarrow \{0,1\}$ in $F$ of VC-dimension $d$, the so-called
Fundamental Theorem of Statistical Learning (FTSL) guarantees w.h.p. $\tilde{O}(d/n)$
error rate\footnote{The $\tilde{O}$ notation hides logarithmic factors.} on
future examples from $P$ using any classifier $h \in F$ that agrees with $f$ on
$n$ noiseless labeled examples \cite[see, e.g.,][]{Vapnik1998}.

In Chow's original abstention model \cite{chow1957optimum}, a \textit{selective classifier} is allowed to either make a prediction $\hy$ at a loss of $\ell(y, \hy) \geq 0$ or abstain from predicting at a fixed small loss $\cost>0$. Following GKKM, we consider a \textit{transductive} abstention algorithm that takes as input $n$ unlabeled test examples and $n$ labeled training examples and predicts on a subset of the $n$ test labels, abstaining on the rest. The goal is to minimize the average loss on the test set. 
(Unfortunately, the natural idea to abstain on test points whose labels are not uniquely determined by the training data can lead to abstaining on all test points even when $P=Q$.) GKKM give a algorithm with guarantees that naturally extend the FTSL to $Q\neq P$ albeit at an additional cost. The word transductive refers to the prediction model where one wishes to classify a given test set rather a standard classifier that generalizes to future examples, though the two models are in fact equivalent in terms of expected error as we discuss. The term \textit{covariate shift} is appropriate here as it describes settings in which $Q\neq P$ but both train and test labels are consistent with the same $f \in F$. Without abstention, both transductive learning and covariate shift have been extensively studied \cite[see,e.g.,][]{arnold2007comparative}.



The principle behind our approach is illustrated through an example of Figure \ref{fig:pic}. Suppose $P$ is the distribution $Q$ restricted to a set $S \subset X$ which contains, say, 90\% of the test set, so that 10\% of the unlabeled test examples are in blind spots. Say we have learned a standard classifier $h \in F$ from the $n$ labeled training examples. Hypothetically, if we knew $S$, then we could predict $h(x_i)$ for test $x_i \in S$ and abstain from predicting on $x_i \notin S$. If abstaining costs $\alpha$, then the FTSL would naturally guarantee test loss $\leq 0.1 \alpha + \tilde{O}(d/n)$, because one abstains on 10\% of the test examples and the remaining test examples are distributed just like $P$. The difficulty is that $S$ may be too complex to learn. To circumvent this, we suggest a conceptually simple but theoretically powerful approach: choose the set of points to abstain on so as to minimize the worst-case loss over any true function $f \in F$ that is also consistent with the training data. In particular, given a predictor $h$ and a set of test points not abstained on, (ignoring efficiency) one could compute the worst-case loss over all $f \in F$ that are consistent with the labeled training examples. This approach achieves optimal worst-case guarantees and, in particular inherits the natural loss guarantee one attains from abstaining outside of $S$ discussed above. Converting this theoretical insight into an efficient algorithm is the focus of this paper, and different algorithms are needed for the case of classification and regression.

\paragraph{Interpretation.} In the case of known $P,Q$, one may say there is a \textit{known unknown}: the region of large $Q(x)/P(x)$. In our model, however, even though this region is an \textit{unknown unknown}, we achieve essentially the same bounds as if we knew $P$ and $Q$. Thus, perhaps surprisingly, there is little additional loss for not knowing $P$ and $Q$. GKKM give related guarantees, but which suffer from not knowing $P,Q$. In particular, even when $P=Q$ their guarantees are $\tilde{O}(\sqrt{d/n})$ compared to the $\tilde{O}(d/n)$ of FTSL, and they give a lower-bound showing $\Omega(\sqrt{d/n})$ is inherent in their ``PQ'' learning model. In the PQ model, the error rate on $Q$ (mistakes that are not abstained on) and the abstention rate \textit{on future examples from} $P$ are separately bounded. In Appendix \ref{sec:PQcomparison}, we show that the Chow model is stronger than PQ-learning in the sense that our guarantees imply PQ-bounds similar to theirs, but the reverse does not hold. Hence, the algorithms and guarantees in this paper extend the FTSL in both the Chow and PQ models. We also note that abstaining on predictions can either help reduce inaccuracies on marginal groups or be a source of unfairness towards a person or group of people, and it should not serve as an excuse not to collect representative data.

\begin{figure}
    \centering
    \includegraphics[width=3.5in]{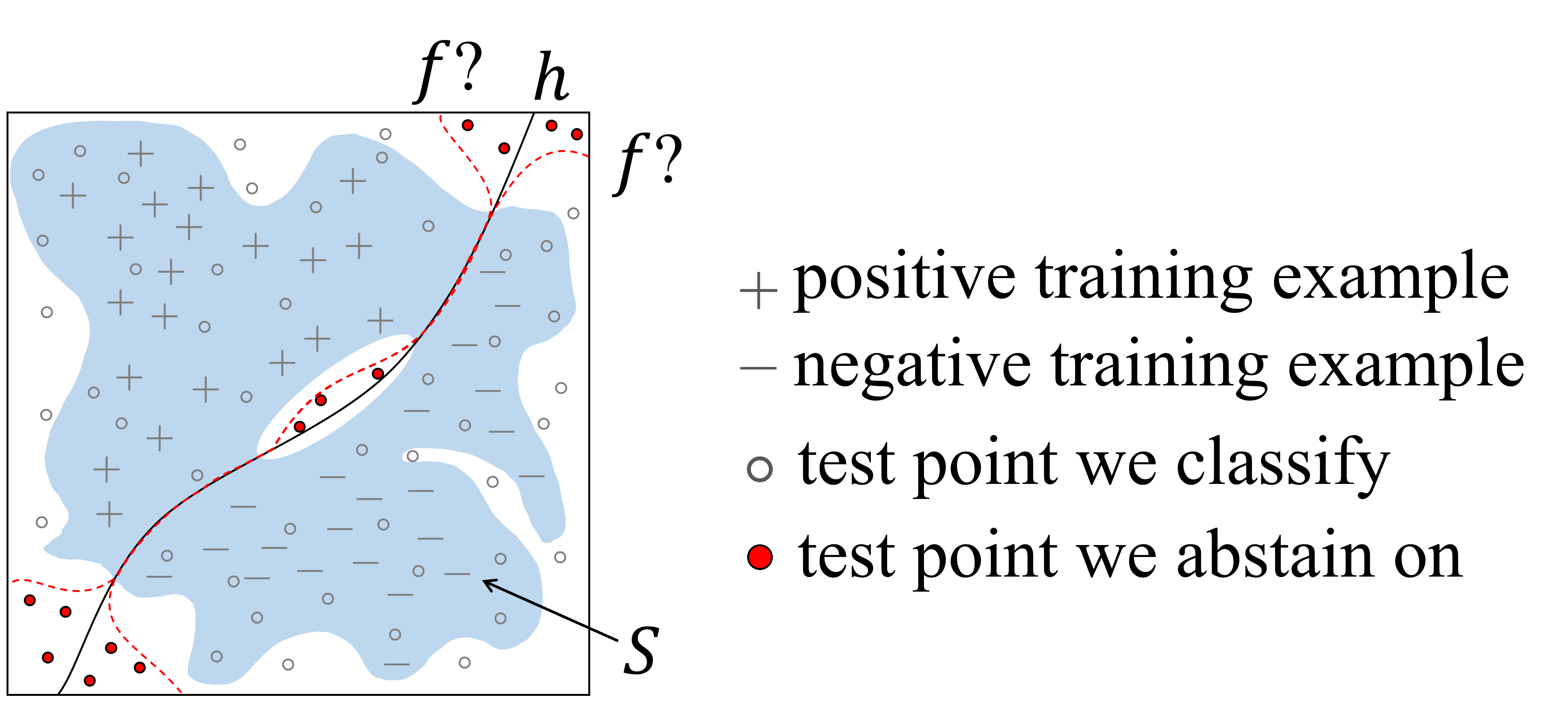}
    \caption{A simple illustration of our approach when the training distribution $P$ is the test distribution $Q$ restricted to a (blue) set $S$. We use classifier $h$ fit on the labeled training data. We choose the subset of test examples to abstain on so as to minimize the worst-case loss over possible $f$'s consistent with the training data. This gives a better worst-case loss than if we knew $S$ and did the obvious thing of abstaining on all test $x \notin S$.}
    \label{fig:pic}
\end{figure}

\subsection{Classification results} 
For classification, suppose $Y=\{0,1\}$, $\ell(y, \hat{y})\defeq |y-\hat{y}|$ is the 0-1 loss (more general losses are considered in the body), and $F$ is a family of functions of VC-dimension $d$. There is also a 
deterministic Empirical Risk Minimization ($\ERM$) oracle that computes $\ERM(\bx, \by) \in \argmin_{g \in F} \sum_i \ell(y_i,g(x_i))$ on any dataset $\bx \in X^n, \by \in Y^n$ (even noisy). While it is NP-hard to efficiently compute $\ERM$ for many simple classes like disjunctions, previous reductions to $\ERM$ have proven useful with off-the-shelf classifiers (e.g., neural networks) albeit without theoretical guarantees. 

The inputs to the learner are labeled training examples $\bxtrain \in X^n, \bytrain=f(\bxtrain) \in Y^n$, and unlabeled test examples $\bx \in X^n$.  
For transductive learning, the goal is to predict labels for the $n$ test examples $\bx$. A transductive abstention algorithm is also given the predictor $h\defeq \ERM(\bxtrain, \bytrain)$, which has 0 training error, and selects a vector $\ba \in [0,1]^n$ for the probability of abstaining on test examples $\bx$. 
Its loss is defined to be,
\begin{equation}
\label{eq:lossdef}
\ell_\bx(f, h, \ba) \defeq \frac{1}{n}\sum_{i=1}^n a_i \, \alpha + (1-a_i) \ell(f(x_i), h(x_i)).    
\end{equation}
Our Min-Max Abstention ($\alg$) reduction, mentioned above, attempts to minimize the maximum test loss among classifiers consistent with the training data. In particular, it (approximately) solves the following convex optimization problem for $\ba$:
\begin{equation}\label{eq:mma}
\min_{\ba \in [0,1]^n} \max_{g \in V} \loss_\bx(g, h, \ba).\end{equation}
Here, $V\defeq \{g \in F:\forall i~ g(\xtrain_i) = f(\xtrain_i)\}$ is the \textit{version space} of classifiers consistent with the training labels; thus $h, f \in V$. In other words, $\alg$ minimizes the worst-case test loss it could possibly incur based on the labeled training and unlabeled test data. A key insight is that (\ref{eq:mma}) has \textit{no unknowns}, so $\alg$ will achieve a $\max$ in (\ref{eq:mma}) as low as if it knew $\dtrain, \dtest$ and even $f$. 

To solve (\ref{eq:mma}), one must be able to \textit{maximize} loss over $V$. Fortunately, GKKM showed how to solve that using a simple subroutine, which we call $\FLIP$, that calls $\ERM$. 
\begin{theorem}[Classification]\label{thm:pq-main}
For $Y=\{0,1\}$, any $n, d \in \nats$, any $F$ of VC dimension $d$, any $f \in F$, and any distributions $\dtrain, \dtest$ over $X$,
$$\E{\bxtrain\sim P^n, \bx\sim Q^n}{\Loss_\bx(f, h, \hat\ba)} \leq  \alpha  |\dtrain-\dtest|_{TV} + \frac{2d\lg 3n}{n},$$
where $h=\ERM(\bxtrain, f(\bxtrain))$ and $\hat\ba=\alg(\bxtrain, f(\bxtrain), \bx, h, \FLIP) \in [0,1]^n$ can be computed in time $\poly(n)$ using the $\ERM$ oracle for $F$. 
\end{theorem}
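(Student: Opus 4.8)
The plan is to reduce the loss bound to two classical ingredients: a maximal coupling of $P$ and $Q$, which produces the $\alpha\,|P-Q|_{TV}$ term, and the realizable ghost-sample bound underlying the FTSL, which produces the $\frac{2d\lg 3n}{n}$ term. Write $\delta\defeq|P-Q|_{TV}$, $V\defeq\{g\in F:g(\bxtrain_i)=f(\bxtrain_i)\ \forall i\}$, and let $\OPT\defeq\min_{\ba\in[0,1]^n}\max_{g\in V}\Loss_\bx(g,h,\ba)$ denote the value of \eqref{eq:mma}. Since the training labels are noiseless we have $f\in V$ and $h=\ERM(\bxtrain,f(\bxtrain))\in V$, and since $\alg$ solves \eqref{eq:mma} to additive accuracy $o(1/n)$ (poly-time; see below) its output $\hat\ba$ satisfies $\Loss_\bx(f,h,\hat\ba)\le\max_{g\in V}\Loss_\bx(g,h,\hat\ba)\le\OPT+o(1/n)$. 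So it suffices to bound $\E[\OPT]$, and since $\OPT\le\max_{g\in V}\Loss_\bx(g,h,\ba^\star)$ for \emph{any} fixed $\ba^\star\in[0,1]^n$, I only need one good competitor $\ba^\star$.

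To build $\ba^\star$ I would enrich the probability space with a coupled ``twin'' of the test set. Using the decomposition $P=(1-\delta)R+\delta P'$, $Q=(1-\delta)R+\delta Q'$ with $R,P',Q'$ probability measures (if $\delta=1$ the bound is immediate, since $\OPT\le\alpha$ by abstaining everywhere), generate the test sample as follows: independently for each $i$, with probability $1-\delta$ draw $x_i\sim R$ and set $x_i^P\defeq x_i$ (call index $i$ \emph{good}); with probability $\delta$ draw $x_i\sim Q'$ and $x_i^P\sim P'$ independently (call $i$ \emph{bad}). Then $\bx\sim Q^n$, $\bx^P\sim P^n$, $\bx^P$ is independent of $\bxtrain$, and $x_i=x_i^P$ on every good $i$. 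Set $\ba^\star_i\defeq\ind{i\text{ bad}}$ --- the paper's principle made concrete: abstain precisely on the ``unknown unknowns.'' ($\ba^\star$ may legitimately depend on this auxiliary randomness, since it is used only to upper bound $\OPT$.)

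Plugging in, $\max_{g\in V}\Loss_\bx(g,h,\ba^\star)=\frac{\alpha}{n}\#\{i\text{ bad}\}+\max_{g\in V}\frac1n\sum_{i\text{ good}}\ell(g(x_i),h(x_i))$, and the first term has expectation exactly $\alpha\delta=\alpha|P-Q|_{TV}$. For the second, substitute $x_i=x_i^P$ on good $i$, drop the restriction to good indices (which only enlarges the sum), and apply the triangle inequality $\ell(g(x),h(x))\le\ell(g(x),f(x))+\ell(f(x),h(x))$ together with $h\in V$ to get $\max_{g\in V}\frac1n\sum_{i\text{ good}}\ell(g(x_i),h(x_i))\le 2\max_{g\in V}\frac1n\sum_{i=1}^n\ell(g(x_i^P),f(x_i^P))$. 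Since $V$ depends only on $\bxtrain\sim P^n$ while $\bx^P\sim P^n$ is independent, the expectation of the right-hand side is exactly the ghost-sample expression from the realizable FTSL: pool the $2n$ points $\bxtrain,\bx^P$, average over which $n$ are ``training,'' note that a labeling disagreeing with $f$ in $t$ of the $2n$ places survives in the version space with probability $\le 2^{-t}$, bound the number of realizable labelings by $(3n)^d$ via Sauer--Shelah, and sum the resulting tail, giving $\E[\max_{g\in V}\frac1n\sum_i\ell(g(x_i^P),f(x_i^P))]\le\frac{d\lg 3n}{n}$ (modulo lower-order terms, which get absorbed). Adding the two pieces yields $\E[\OPT]\le\alpha|P-Q|_{TV}+\frac{2d\lg 3n}{n}$, as claimed.

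For efficiency, \eqref{eq:mma} minimizes over the box $[0,1]^n$ a pointwise maximum of linear functions of $\ba$, hence a convex piecewise-linear objective, and a subgradient at any $\ba$ is obtained from a maximizing $g\in V$ --- the hypothesis in $F$ consistent with $\bytrain$ that disagrees with $h$ on a set of test points of maximum total $(1-a_i)$-weight --- which GKKM's $\FLIP$ routine computes using $\poly(n)$ calls to $\ERM$ (by relabeling test points to reward disagreement and invoking $\ERM$). Projected subgradient descent (or the ellipsoid method) with $\FLIP$ as the oracle then solves \eqref{eq:mma} to the required accuracy in $\poly(n)$ time. I expect the coupling step to be the main obstacle: it must simultaneously make the competitor's abstention cost equal $\alpha|P-Q|_{TV}$ (rather than the weaker $|P-Q|_{TV}$) \emph{and} recast the non-abstaining part of the loss as a genuinely realizable problem on i.i.d.\ $P$-draws, so that the $\tilde O(d/n)$ generalization bound applies with no $\sqrt{\cdot}$ loss --- exactly the place where GKKM's weaker PQ model is forced to pay $\Omega(\sqrt{d/n})$. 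Pinning the constant down to $2d\lg 3n$ is then a matter of care in the Sauer--Shelah estimate and the tail sum.
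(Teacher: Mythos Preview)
Your proposal is correct and follows essentially the same route as the paper: the maximal coupling between $P$ and $Q$ to produce the competitor $\ba^\star$ is exactly the paper's Lemma~\ref{lem:general-pq}, the ghost-sample/permutation bound is the paper's Lemma~\ref{lem:gen-bounds-classification}, and the efficiency argument via $\FLIP$ and convex optimization is Lemmas~\ref{lem:implementation-details} and~\ref{lem:erm}. The only cosmetic difference is that where the paper bounds $\LOSS_{\bx^P}(V,h)$ directly by a union bound over \emph{pairs} $(g,h)\in V^2$ (incurring $\lg N^2$), you instead use the triangle inequality $\ell(g,h)\le\ell(g,f)+\ell(f,h)$ and $h\in V$ to reduce to $2\,\LOSS_{\bx^P}(V,f)$ and then union-bound over single classifiers (incurring $2\lg N$)---the same factor of $2$ either way; also, the paper budgets the algorithm's $1/n$ optimization slack explicitly (absorbing it via $\frac{2d\lg 2n}{n}+\frac{1}{n}\le\frac{2d\lg 3n}{n}$) rather than claiming $o(1/n)$ accuracy.
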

The total variation distance $|P-Q|_\TV$, also called the statistical distance, is a natural measure of non-overlap that ranges from $0$ when $P=Q$ to 1 when $P$ and $Q$ have disjoint supports. The $\alpha |P-Q|_\TV$ term arises because the learner may need to abstain where $P$ and $Q$ do not overlap, and the other term derives directly from the same bound one gets in the case where $P=Q$ (so $|P-Q|_\TV=0$). These bounds are stated in terms of expected loss, because unfortunately as we show in Lemma \ref{lem:lower}, high probability bounds require $\Omega(\alpha/\sqrt{n})$ loss because of the variance in how $Q$ samples are distributed.
All proofs, unless otherwise stated, are deferred to Appendix \ref{sec:deferred_proofs}.


We generalize Theorem \ref{thm:pq-main} in multiple ways. First, we point out a stronger bound in terms of a divergence $\kdiv_k(P\|Q)$, $k \geq 1$, that measures the excess of $Q(x)$ over $k\cdot P(x)$. It generalizes the total variation distance $|P-Q|_\TV$ between distributions $P, Q$:
\begin{align}
    \kdiv_k(P\|Q) &\defeq \sum_{x} \max\bigl(Q(x)-k\cdot P(x), 0\bigr)  \in [0,1] \label{eq:div} 
    \\
    |P-Q|_\TV &\defeq \kdiv_1(P\|Q) = \frac{1}{2} \sum_{x} |P(x)-Q(x)|  \label{eq:tvdef}
\end{align}
Note that if $Q(x) \leq k\cdot P(x)$ for all $x$, then $\kdiv_k(P\|Q)=0$.

Second, we show this implies generalization by using a transductive abstaining algorithm to bound the expected loss with respect to future examples from $\dtest$. That is, to go alongside classifier $h: X \rightarrow \{0,1\}$, we can output \textit{abstainer} $\mathcal{A}: X \rightarrow [0,1]$ that gives a probability of abstaining on each test example. Here, the generalization loss is, 
$$\Loss_Q(f, h, \mathcal{A}) \defeq \E{x \sim Q}{\alpha \,\mathcal{A}(x) + (1-\mathcal{A}(x))\ell\bigl(f(x),h(x)\bigr)}.$$
These two generalizations are summarized by the following theorem, which we state for classification but also has a regression analog.
\begin{theorem}[Generalization for classification]\label{thm:pq-main-gen}
Fix $Y=\{0,1\}$ and $F$ of VC dimension $d$. For any $f \in F$, and any distributions $\dtrain, \dtest$ over $X$,
$$\E{\bxtrain \sim \dtrain^n, \bx \sim \dtest^n}{\Loss_Q(f, h, \mathcal{A})} \leq  \min_{k \geq 1} \alpha D_k(P\|Q) + \frac{2dk\lg 3n}{n}\leq \alpha  |\dtrain-\dtest|_{TV} + \frac{2d\lg 3n}{n}.$$
where $h=\ERM(\bxtrain, f(\bxtrain))$  and abstainer $\mathcal{A}: X \rightarrow [0,1]$ can be computed in time $\poly(n)$ using $\ERM$ and is defined by $\mathcal{A}(x') \defeq \alg\bigl(\xtrain, f(\xtrain), (x', x_2, \ldots, x_n), h, \FLIP\bigr)$.
\end{theorem}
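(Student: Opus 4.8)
The plan is to deduce the generalization statement from the transductive guarantee of Theorem~\ref{thm:pq-main} (and its $D_k$ refinement noted above) by a ghost-sample (``leave-one-in'') argument. Recall how the abstainer is built: on a query point $x'$, the value $\alpha(x')$ is the first coordinate of the vector returned by $\alg\bigl(\xtrain, f(\xtrain), (x', x_2, \ldots, x_n), h, \FLIP\bigr)$, where $x_2, \ldots, x_n$ are $n-1$ fresh i.i.d.\ samples from $\dtest$, drawn independently of the training sample and of $x'$. Since $x'$ is itself distributed according to $\dtest$ when we evaluate $\Loss_Q$, the padded tuple $(x', x_2, \ldots, x_n)$ has law $\dtest^{\,n}$; rename it $\bx = (x_1, \ldots, x_n)$.

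\textbf{Step 1 (reduce to the first coordinate).} With $h = \ERM(\xtrain, f(\xtrain))$ depending only on the training data and $\hat\ba = \alg(\xtrain, f(\xtrain), \bx, h, \FLIP)$, unfolding the definition of $\Loss_Q$ and of the abstainer gives
\[
\E{\bxtrain \sim \dtrain^n,\, \bx \sim \dtest^n}{\Loss_Q(f, h, \alpha)} \;=\; \E{\bxtrain \sim \dtrain^n,\, \bx \sim \dtest^n}{\hat a_1\, \alpha + (1-\hat a_1)\,\loss\bigl(f(x_1), h(x_1)\bigr)} .
\]

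\textbf{Step 2 (symmetrize).} The convex program~(\ref{eq:mma}) solved by $\alg$ is invariant under simultaneously permuting the test points $x_i$ and the abstention probabilities $a_i$: the objective $\loss_\bx$ is a symmetric average over $i$, and the version space $V$ does not depend on $\bx$ at all; the $\ERM$ calls made inside $\FLIP$ inherit the same equivariance. Fixing any permutation-symmetric tie-breaking rule among optimal solutions, we may assume $\alg$ is permutation-equivariant, so that, since $x_1, \ldots, x_n$ are i.i.d.\ from $\dtest$, the pair $(\hat\ba, \bx)$ is exchangeable. Hence $\E{}{\hat a_i\, \alpha + (1-\hat a_i)\loss(f(x_i), h(x_i))}$ is the same for every $i$, and averaging over $i = 1, \ldots, n$ yields
\[
\E{\bxtrain,\, \bx}{\Loss_Q(f, h, \alpha)} \;=\; \E{\bxtrain,\, \bx}{\frac1n \sum_{i=1}^n \hat a_i\, \alpha + (1-\hat a_i)\,\loss\bigl(f(x_i), h(x_i)\bigr)} \;=\; \E{\bxtrain \sim \dtrain^n,\, \bx \sim \dtest^n}{\Loss_\bx(f, h, \hat\ba)} .
\]

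\textbf{Step 3 (invoke the transductive bound).} The right-hand side is exactly the quantity controlled by the $D_k$-strengthened form of Theorem~\ref{thm:pq-main}, giving $\min_{k \ge 1} \alpha D_k(\dtrain \| \dtest) + \frac{2 d k \lg 3n}{n}$; the final inequality of the statement is the special case $k = 1$, since $D_1(\dtrain \| \dtest) = |\dtrain - \dtest|_{\TV}$ by~(\ref{eq:tvdef}). The only delicate point is Step~2: one must pin down a canonical, permutation-symmetric selection rule for the min--max solution (and check $\FLIP$ respects it) so that $(\hat\ba, \bx)$ is genuinely exchangeable, and one must make explicit the coupling that identifies the abstainer's internal padding samples with the ``ghost'' coordinates $x_2, \ldots, x_n$ of the transductive test set. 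Steps~1 and~3 are then just bookkeeping together with a black-box application of the (strengthened) transductive theorem.
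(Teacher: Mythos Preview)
Your proposal is correct and follows essentially the same approach as the paper. The paper's proof also reduces the generalization loss to the transductive loss via exchangeability (packaged there as Lemma~\ref{lem:trans2gen}, with the remark ``if the algorithm $\alg$ is not symmetric, then one can shuffle the inputs first''), and then invokes the $D_k$ bound by replacing Lemma~\ref{lem:general-pq} with Lemma~\ref{lem:tight-pq} in the proof of Theorem~\ref{thm:pq-main}; your Steps~1--2 spell out that exchangeability argument more explicitly, and your Step~3 bundles the remaining ingredients into a single appeal to the strengthened transductive theorem.
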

These guarantees use the same algorithm---to predict on a new test example $x'$, it simply runs $\alg$ with a modified test set where we have replaced the first test example by $x'$ and returns the probability of abstaining on it.

Third, as in GKKM, guarantees hold with respect to a ``white-box'' adversarial model in which there is only a training distribution $\dtrain$ and an adversary who may corrupt any number of test examples (but they are still labeled by $f$). More specifically,  natural train and test sets $\bxtrain, \bxalt \sim \dtrain^n$ are drawn, and an adversary may form an arbitrary test set $\bx \in X^n$. We achieve guarantees as low as if one knew exactly which examples were corrupted and abstained on those:
\begin{theorem}[Adversarial classification]\label{thm:adv-main}
For any $f \in F$ with $d=\VC(F)$, any $n \in \nats, \delta \geq 0$ and any distribution $\dtrain$ over $X$, with probability $\geq 1-\delta$ over $\bxtrain, \bxalt \sim \dtrain^n$, the following holds simultaneously for all $\bx \in X^n$:
$$\Loss_\bx(f, h, \hat\ba) \leq \frac{\alpha}{n}  \left|\{i: x_i \neq \xalt_i\}\right| + \frac{2d \lg 2n + \lg 1/\delta}{n}.$$
where $h=\ERM(\bxtrain, f(\bxtrain))$ and $\hat\ba= \alg(\bxtrain, f(\bxtrain), \bx, h,\FLIP)\in [0,1]^n$ can be computed in time $\poly(n)$ using $\ERM$.
\end{theorem}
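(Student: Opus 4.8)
The argument has two ingredients: the ``no unknowns'' property of $\alg$, which lets us bound its loss by that of \emph{any} abstention vector we care to exhibit in the analysis, and a realizable-case transductive uniform-convergence bound for the symmetric-difference class of $F$. First I would record that, because $\alg$ solves the min-max program~(\ref{eq:mma}) with the inner maximization carried out by $\FLIP$, its output $\hat\ba$ satisfies $\max_{g\in V}\loss_\bx(g,h,\hat\ba)\le\min_{\ba\in[0,1]^n}\max_{g\in V}\loss_\bx(g,h,\ba)$, where $V=\{g\in F:g(\xtrain_i)=f(\xtrain_i)\ \forall i\}$ is the version space, which depends only on the (uncorrupted) training data. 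Since $h=\ERM(\bxtrain,f(\bxtrain))$ has zero training error we have $h\in V$, and trivially $f\in V$; hence $\loss_\bx(f,h,\hat\ba)\le\max_{g\in V}\loss_\bx(g,h,\ba^\star)$ for every fixed $\ba^\star\in[0,1]^n$, even though $\alg$ itself never sees $\ba^\star$.

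Next I would plug in the natural choice $\ba^\star$ that abstains exactly on the adversarially perturbed coordinates: with $I\defeq\{i:x_i\ne\xalt_i\}$, set $a^\star_i=1$ for $i\in I$ and $a^\star_i=0$ otherwise. Then for any $g\in V$, using $x_i=\xalt_i$ for $i\notin I$ and then extending the sum to all $i\in[n]$ (which only adds nonnegative terms),
\[
\loss_\bx(g,h,\ba^\star)=\frac{\alpha|I|}{n}+\frac1n\sum_{i\notin I}\ind{g(\xalt_i)\ne h(\xalt_i)}\le\frac{\alpha|I|}{n}+\frac1n\bigl|G_{g,h}\cap\{\xalt_1,\dots,\xalt_n\}\bigr|,
\]
where $G_{g,h}\defeq\{x:g(x)\ne h(x)\}$. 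Since $g,h\in V$ agree with $f$ on all of $\bxtrain$, the set $G_{g,h}$ contains no point of $\{\xtrain_1,\dots,\xtrain_n\}$. So it remains to prove that, with probability $\ge1-\delta$ over $\bxtrain,\bxalt\sim\dtrain^n$, \emph{every} set in the symmetric-difference class $\mathcal{G}\defeq\{\{x:g(x)\ne g'(x)\}:g,g'\in F\}$ that is disjoint from $\bxtrain$ contains at most $2d\lg 2n+\lg(1/\delta)$ of the $n$ points $\bxalt$. Crucially this event depends only on $(\bxtrain,\bxalt)$ and not on the adversary's $\bx$, which is exactly why the stated bound holds simultaneously for all $\bx\in X^n$.

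For this last step I would run the standard ``swapping''/symmetrization argument underlying the realizable VC bound. Conditioned on the size-$2n$ multiset $\bxtrain\cup\bxalt$, the split into $\bxtrain$ and $\bxalt$ is uniformly random; by Sauer's lemma $\mathcal{G}$ induces at most $\Pi_F(2n)^2\le(2n)^{2d}$ distinct subsets of these $2n$ points, and for any fixed such subset occupying $\ell$ of the points, the probability that all $\ell$ land in the $\bxalt$ half is $\binom{2n-\ell}{n}/\binom{2n}{n}=\prod_{j=0}^{\ell-1}\tfrac{n-j}{2n-j}\le 2^{-\ell}$. A union bound then gives that the probability some $G\in\mathcal{G}$ disjoint from $\bxtrain$ meets $\bxalt$ in $\ge k$ points is at most $(2n)^{2d}\,2^{-k}$, which is $\le\delta$ once $k=2d\lg 2n+\lg(1/\delta)$ (the degenerate case $\delta=0$ being vacuous). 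Combining this with the two bounds above yields $\loss_\bx(f,h,\hat\ba)\le\frac{\alpha}{n}\lvert\{i:x_i\ne\xalt_i\}\rvert+\frac{2d\lg 2n+\lg(1/\delta)}{n}$, as claimed.

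I expect the only real friction to be bookkeeping rather than ideas: making the symmetrization clean when sample points repeat, and pinning the constant so that $\lg\Pi_{\mathcal{G}}(2n)$ is genuinely at most $2d\lg 2n$ rather than $2d\lg 2n+O(d)$ (for instance by absorbing the $(e/d)^{2d}$ factor from Sauer's inequality using $d\ge1$, or by bounding the growth function of $\mathcal{G}$ directly instead of through $\Pi_F^2$). The conceptual steps — reducing $\alg$'s guarantee to a version-space maximum via $\FLIP$, and choosing $\ba^\star$ to abstain exactly on the corrupted coordinates — follow immediately from the development already in the paper, and the transductive uniform-convergence lemma is in essence the realizable FTSL applied to the class $\mathcal{G}$.
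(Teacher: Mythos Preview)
Your approach is essentially the paper's: the paper proves Theorem~\ref{thm:adv-main} by combining Theorem~\ref{thm:adv-loss} (which in turn is Lemma~\ref{lem:general-adv} plus the double-sample VC bound of Lemma~\ref{lem:gen-bounds-classification}) with Lemmas~\ref{lem:implementation-details} and~\ref{lem:erm}. Your ``abstain on corrupted coordinates'' step is exactly Lemma~\ref{lem:general-adv}, and your symmetrization over the pair class $\mathcal{G}$ is exactly the $\binom{N}{2}$ union bound in Lemma~\ref{lem:gen-bounds-classification}.

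There is one bookkeeping slip you did not flag: you assert that $\alg$ achieves $\max_{g\in V}\loss_\bx(g,h,\hat\ba)\le\min_{\ba}\max_{g\in V}\loss_\bx(g,h,\ba)$ exactly, but Lemma~\ref{lem:implementation-details} only guarantees this up to an additive $1/n$ (the ellipsoid is run to finite precision, and $\FLIP$ is only a $1/(3n)$-approximate maximizer). The paper absorbs this $1/n$ by doing the union bound over $\binom{N}{2}<N^2/2$ pairs rather than $N^2$, which yields $2d\lg 2n+\lg(1/2\delta)$ in Theorem~\ref{thm:adv-loss}; adding the $1/n$ then gives exactly $2d\lg 2n+\lg(1/\delta)$. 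With your $N^2$ union bound and no slack for the approximate optimization, your argument as written would land at $\frac{2d\lg 2n+\lg(2/\delta)}{n}$ once the $1/n$ is honestly accounted for. The fix is trivially the one the paper uses.
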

In their adversarial setting, GKKM again has $\tilde{O}(\sqrt{d/n})$ bounds. Our presentation actually begins in this adversarial framework as it is in some sense most general--adversarial robustness implies robustness to covariate shift.

\subsection{Linear regression results}
For selective linear regression, the classic problem with transductive  abstention, we give a more involved but fully polynomial-time algorithm. (GKKM did not discuss regression.) Here, $Y=[-1,1]$, $X=B_d(1)$ is taken to be the unit ball in $d$ dimensions. There is now a joint distribution $\nu$ over $X \times Y$ such that: $f(x)\defeq\E{\nu}{y|x}=w \cdot x$ for some vector $w$ with $\|w\|\leq 1$. We write $(\bxtrain, \bytrain)\sim \nu^n$ to indicate that the $n$ labeled training examples are drawn from $\nu$. The loss for selective regression $\ell_\bx(h, f, \ba)$ is still defined as in (\ref{eq:lossdef}) except that  $\ell(f(x), h(x)) \defeq |f(x)-h(x)|^2$. Note that we are considering loss with respect to $f$ rather than $y$ which means that it may approach 0 for identical train and test distributions. Indeed, the additional loss term we will face due to the covariate shift is again $\alpha|P-Q|_\TV$, where $P$ and $Q$ are the marginal distributions over $X$ for $\nu$ and the test distribution, respectively.

\begin{theorem}[Linear regression]\label{thm:regression}
Let $Y=[-1,1]$, $n,d \in \nats$, $\delta>0$, and $X=B_d(1)$. Let $P,Q$ be distributions over $X$ and $\nu$ be a distribution over $X\times Y$ with marginal $P$ over $X$. Let $f(x) \defeq \E{(x,y)\sim \nu}{y|x} = w \cdot x$ for some $w \in B_d(1)$. Then,

$$\E{(\bxtrain, \bytrain) \sim \nu^n, \bx \sim \dtest^n}{\ell_\bx(f, h, \hat{\ba})} \leq \alpha |P-Q|_\TV + \frac{\kappa \log n}{\sqrt{n}}.$$

where $\kappa$ is a constant, $h=\ERM(\bxtrain, \bytrain)$ and
$\hat{\ba}=\alg(\bxtrain, \bytrain, \bx, h, \CDT)\in [0,1]^n$ can be computed in time $\poly(n,d)$.
\end{theorem}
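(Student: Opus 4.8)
The plan is to mirror the proof of Theorem~\ref{thm:pq-main}, replacing the combinatorial version space by an \emph{approximate} one---a sublevel set of the empirical squared loss, which $\alg$ constructs from the training data and $h$---and replacing $\FLIP$ by $\CDT$. Identify a linear predictor with its weight vector, write $\hat w$ for the weight of $h=\ERM(\bxtrain,\bytrain)$, let $\hat L(v)\defeq\frac1n\sum_{i=1}^n(\ytrain_i-v\cdot\xtrain_i)^2$, and set
$$V\ \defeq\ \Bigl\{\,v\in B_d(1)\ :\ \hat L(v)\le\hat L(\hat w)+\eps\,\Bigr\},\qquad \eps=\Theta\!\bigl(\log n/\sqrt n\bigr).$$
Since $\hat L$ is a convex quadratic, $V$ is the intersection of two ellipsoids (one of them the unit ball), and it is nonempty because $\hat w\in V$. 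First I would show $w\in V$ with probability $\ge1-\delta$ for $\delta=1/\sqrt n$: the population loss $L(v)\defeq\E{(x,y)\sim\nu}{(y-v\cdot x)^2}$ is minimized at $w$ because $f(x)=\E{\nu}{y\mid x}=w\cdot x$, and the class $\{x\mapsto v\cdot x:\|v\|\le1\}$ on $B_d(1)$ has \emph{dimension-free} Rademacher complexity $O(1/\sqrt n)$, so (the squared loss being Lipschitz on $[-2,2]$) $\sup_{v\in B_d(1)}|\hat L(v)-L(v)|=\tilde{O}(1/\sqrt n)$; choosing $\eps$ above this deviation forces $\hat L(w)\le\hat L(\hat w)+\eps$. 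On the event $w\in V$ the target $f$ lies in $V$ (viewed as a function on $\bx$), so
$$\ell_\bx(f,h,\hat\ba)\ \le\ \max_{g\in V}\ell_\bx(g,h,\hat\ba)\ \le\ \min_{\ba\in[0,1]^n}\max_{g\in V}\ell_\bx(g,h,\ba)+\eps_{\mathrm{opt}},$$
where $\eps_{\mathrm{opt}}$ is the optimization error of $\alg$, which I will argue can be driven below $1/\sqrt n$ in $\poly(n,d)$ time; off this event $\ell_\bx(f,h,\hat\ba)\le\max(\cost,4)$, contributing $O(1/\sqrt n)$ in expectation.

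It remains to bound $\mathbb{E}\bigl[\min_{\ba}\max_{g\in V}\ell_\bx(g,h,\ba)\bigr]$, which I would do by exhibiting---for the analysis only---a good abstention vector $\ba^\star$. Let $\beta\defeq|P-Q|_\TV$ and use the maximal coupling $Q=(1-\beta)\mu+\beta Q'$ with $\mu(x)\propto\min(P(x),Q(x))$: realize each test point by drawing a tag $b_i\sim\mathrm{Bernoulli}(\beta)$, then $x_i\sim\mu$ if $b_i=0$ and $x_i\sim Q'$ if $b_i=1$, and set $a^\star_i=b_i$. Then $\max_{g\in V}\ell_\bx(g,h,\ba^\star)=\frac1n\sum_i b_i\cost+\max_{v\in V}(v-\hat w)^\top M_{\mathrm{te}}(v-\hat w)$ with $M_{\mathrm{te}}\defeq\frac1n\sum_{i:b_i=0}x_ix_i^\top$, and $\mathbb{E}\bigl[\tfrac1n\sum_i b_i\bigr]=\beta$. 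For the quadratic term: since $\hat L$ is quadratic with Hessian $2\hat\Sigma_P$ (where $\hat\Sigma_P\defeq\frac1n\sum_i\xtrain_i\xtrain_i^\top$) and $\hat w$ minimizes $\hat L$ over $B_d(1)\supseteq V$, for every $v\in V$ we get $(v-\hat w)^\top\hat\Sigma_P(v-\hat w)=\hat L(v)-\hat L(\hat w)-\langle\nabla\hat L(\hat w),v-\hat w\rangle\le\hat L(v)-\hat L(\hat w)\le\eps$; and dimension-free second-moment concentration for vectors in $B_d(1)$ gives $M_{\mathrm{te}}\preceq(1-\beta)\E{x\sim\mu}{xx^\top}+\tilde{O}(1/\sqrt n)\,I\preceq\E{x\sim P}{xx^\top}+\tilde{O}(1/\sqrt n)\,I\preceq\hat\Sigma_P+\tilde{O}(1/\sqrt n)\,I$ (using $\Sigma_P\succeq(1-\beta)\Sigma_\mu$). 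Combining, $(v-\hat w)^\top M_{\mathrm{te}}(v-\hat w)\le\eps+\tilde{O}(1/\sqrt n)\|v-\hat w\|^2=\tilde{O}(1/\sqrt n)$ for all $v\in V$, so $\mathbb{E}\bigl[\max_{g\in V}\ell_\bx(g,h,\ba^\star)\bigr]\le\cost\beta+\tilde{O}(1/\sqrt n)=\cost|P-Q|_\TV+\tilde{O}(1/\sqrt n)$. Collecting logarithmic factors and dimension-free constants into $\kappa$ yields the claimed bound, and explains why $d$ does not appear in the loss, only in the running time.

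Finally, the algorithmic content. $\alg$ must approximately solve $\min_{\ba\in[0,1]^n}\Phi(\ba)$ with $\Phi(\ba)\defeq\max_{g\in V}\ell_\bx(g,h,\ba)$, which is convex in $\ba$. For fixed $\ba$, $\ell_\bx(g,h,\ba)$ is a convex quadratic in the weight vector of $g$ and $V$ is an intersection of two ellipsoids, so evaluating $\Phi(\ba)$---and extracting a maximizer, hence a subgradient of $\Phi$ at $\ba$---is exactly a Celis--Dennis--Tapia instance: minimize a concave quadratic subject to two convex quadratic constraints, solvable in $\poly(n,d)$ time by the $\CDT$ oracle. Feeding these values and subgradients to a cutting-plane/ellipsoid method over $[0,1]^n$ then minimizes $\Phi$ to accuracy $1/\sqrt n$ in $\poly(n,d)$ time. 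I expect this algorithmic step to be the main obstacle: making the $\CDT$-based inner maximization rigorous (global optimality, and graceful behavior when $\hat\Sigma_P$ is rank-deficient so that one ``ellipsoid'' degenerates into an unbounded cylinder---the ball constraint still keeps $V$ compact), and verifying that the resulting oracle is accurate enough to drive the outer convex program. By contrast, the statistical half is a fairly routine adaptation of the min-max coupling argument once squared-loss uniform convergence and second-moment concentration---both dimension-free on the unit ball---are in hand.
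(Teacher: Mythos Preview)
Your proposal is essentially correct and shares the paper's high-level architecture: build an approximate version space, invoke the general min--max inequality $\ell_\bx(f,h,\hat\ba)\le\min_\ba\LOSS_\bx(V,h,\ba)+\eps_{\mathrm{opt}}$ on the event $f\in V$, bound the min--max via a coupling that abstains on the $|P-Q|_\TV$ ``excess'' portion of $Q$, and realize the inner maximization as a CDT instance.

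There are two genuine differences from the paper worth noting. First, the paper's version space is $\VS_\eps(\bxtrain,h)=\{g:\ell_{\bxtrain}(g,h)\le\eps\}$, i.e.\ a sublevel set of \emph{distance from $h$} on the training inputs, whereas you use an excess-empirical-risk sublevel set. Your first-order optimality argument shows your $V$ is contained in the paper's, and both contain $f$ with high probability, so either works; the CDT constraint differs only by a shift of center. Second, the paper does not route the non-abstained loss through covariance matrices at all. Instead it applies the generic coupling Lemma~\ref{lem:general-pq}, which \emph{resamples} the abstained coordinates so that the non-abstained test points are embedded in a full sample $\bxalt\sim P^n$, and then bounds $\LOSS_{\bxalt}(V,h)$ directly by a function-class Rademacher argument (Lemma~\ref{lem:rad}). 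This is more modular (it uses nothing about linearity beyond $\RAD_n(F)=O(1/\sqrt n)$) and sidesteps the matrix step entirely.

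That matrix step is the one place your write-up is loose. The assertion ``dimension-free second-moment concentration for vectors in $B_d(1)$ gives $M_{\mathrm{te}}\preceq(1-\beta)\Sigma_\mu+\tilde O(1/\sqrt n)\,I$'' is not true as an operator-norm statement: matrix Bernstein for bounded vectors gives $\|M_{\mathrm{te}}-(1-\beta)\Sigma_\mu\|_{\mathrm{op}}=O(\sqrt{\log d/n})$, which would leak a $\log d$ into $\kappa$. What you actually need, and what survives dimension-free, is uniform convergence of the scalar map $u\mapsto \frac1n\sum_{i:b_i=0}(u\cdot x_i)^2$ over $\|u\|\le 2$; this follows from Talagrand contraction applied to the linear class (Rademacher $O(1/\sqrt n)$), exactly as in the paper's Lemma~\ref{lem:rad}. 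If you rephrase your chain $M_{\mathrm{te}}\to\Sigma_\mu\to\Sigma_P\to\hat\Sigma_P$ as three uniform-convergence statements for the quadratic form in the direction $v-\hat w$ (with $\hat w$ fixed by conditioning on training data), your argument goes through with a truly dimension-free $\kappa$, matching the theorem.
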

A technical challenge is that, as a subroutine, we need to find a linear model maximizing squared error, which is a non-convex problem. Fortunately, this step can be formulated as a Celis-Dennis-Tapia ($\CDT$) problem, for which efficient algorithms are known~\citep{celis1985trust, bienstock2016note}. Our information theoretic results generalize to more general regression in a straightforward fashion, however the algorithm for maximizing loss is non-convex and we only know how to implement it efficiently for linear regression.
\paragraph{Contributions and organization.} The main contributions of this work are: (1) introducing and analyzing an approach for optimally abstaining in classification and regression in the transductive setting, and (2) giving an efficient abstention algorithm for linear regression and an efficient  reduction for binary classification (optimal in our model). After reviewing related work, we formulate the problem in a manner that applies to both types of prediction: binary classification and regression. Section \ref{sec:inf-theory} gives information-theoretic bounds for selective prediction. Section \ref{sec:algorithms} covers the general $\alg$ reduction. We then focus on classification, where Section \ref{sec:classification} gives bounds and an efficient reduction to $\ERM$. Finally, Section \ref{sec:regression} gives bounds and a polynomial-time abstention algorithm for linear regression. 

\subsection{Related work}

There is relatively little theoretical work on selective classification tolerant to distributional shift. Early work had algorithms that could learn certain binary classes $F$ with very strong absolute abstention and error guarantees \cite{RS88b,li2011knows,sayedi2010trading}. However, it was impossible to learn other simple concept classes, even rectangles, in their models. GKKM was the first work to provide abstention guarantees for arbitrary classes of bounded VC dimension, achieving this with bounds that depended on $|P-Q|_\TV$. They showed that unlabeled examples (i.e., transductive learning) were provably necessary. In their PQ-learning model, specific to binary classification, they simultaneously guarantee $\tilde{O}(\sqrt{d/n})$ errors ($h(x)\neq f(x)$ which are \textit{not} abstentions) from distribution $\dtest$ and $\leq \eps$ rejections \textit{on future examples from distribution} $\dtrain$. While this latter condition may seem counter-intuitive at first, it implies guarantees in terms of total variation distance similar to the ones we give. However, they show that this approach to deriving bounds necessarily suffers from a $\Omega(\sqrt{d/n})$ inherent loss. We circumvent this lower-bound in the Chow cost model by directly optimizing over $\dtest$ without regard to rejections on $\dtrain$. Appendix \ref{sec:PQcomparison} details this comparison.

Other related work is either about selective classification or distributional
shift, but not both.  Work on selective classification, also called ``reliable
learning'' and ``classification with a reject option,'' \cite{chow1957optimum,
bartlett2008classification, cortes2016learning} study the benefits of
abstaining in a model with fixed small cost $\alpha > 0$. The goal is to
abstain in regions where the model is inaccurate. Recent work by~\citet{BZ21}
gives a (not necessarily computationally efficient) algorithm for obtaining
fast rates with the \emph{Chow loss} whenever the cost of abstention is bounded
away from $1/2$. They use this to characterize when fast rates for
classification may be possible in the agnostic setting with improper learning.
Finally, among the large body of prior work on transductive learning, the
closest is work on transfer learning (without abstentions)
\cite{arnold2007comparative}. Prior work on distributional shift without
abstention \cite[see, e.g., the book][]{quionero2009dataset} has developed
refined notions of distributional discrepancy with respect to classes of binary
or real-valued functions \citep{MansourMR09}. Since the focus of our work is on
abstention, we use total-variation distance. In future work, it would be
interesting to see if these notions could be adapted to our setting.

Note that while our work addresses adversarial examples, it is very different from the recent line of work focused on robustness to small perturbations of the input \citep{szegedy2013intriguing,goodfellowICLR15}. While our algorithms are not robust to such perturbations, they may at least abstain from predicting on such examples rather than misclassifying them. 

\subsection{Definitions} 
A table of our notation is given in Appendix \ref{sec:nonation} for convenience. Many of the definitions are common to classification and regression. Let $X$ denote the instance space\footnote{To avoid measure-theoretic issues, for simplicity we assume that $X$ is finite or countably infinite.} and $Y$ denote the label space, and let $F$ be a known family of functions from $X$ to $Y$. There is an unknown target function $f \in F$ to be learned. 

We use boldface to indicate vectors (indexing examples), e.g., $\ba = (a_1, \ldots, a_k)$. We denote by $f(\ba) = \left(f(a_1), \ldots, f(a_k)\right)$ for function $f$. Throughout the paper, we assume that $\alpha$ and $F$ are fixed and known. Let $[n]=\{1,2,\ldots,n\}$ and $\ind{\phi}$ denotes the indicator that is 1 if predicate $\phi$ holds and 0 otherwise.  

There is a general base loss $\ell: Y \times Y \rightarrow \reals_+$. We extend the definition of loss to loss on a sequence $\bx$ and with abstention indices $A \subseteq [n]$:
$$\Loss_\bx(f, h) \defeq \frac{1}{n}\sum_{i =1}^n \loss(f(x_i), h(x_i)), ~~~ \Loss_\bx(f, h, \indices) \defeq \frac{\alpha}{n}|\indices| + \frac{1}{n}\sum_{i \notin \indices} \loss(f(x_i), h(x_i)).$$
We assume access to a deterministic algorithm returning $\ERM(\bx, \by) \in \argmin_{g \in F} \sum_i \ell(y_i,g(x_i))$ that runs in unit time on any dataset $\bx \in X^n, \by \in Y^n$ (even noisy).

The algorithms all take labeled training examples $\bxtrain\in X^n, \bytrain\in Y^n$ and unlabeled test examples $\bx\in X^n$ as input. 
For binary classification, as discussed, $Y=\{0,1\}$, there are train and test distributions $P,Q$ over $X$, and all examples are assumed to be labeled by $f$, so $\bytrain = f(\bxtrain)$. Here, the version space of classifiers with 0 training error is defined to be $\VS(\bxtrain, \bytrain) \defeq \{ g \in F: g(\bxtrain) = \bytrain\}$. 

For regression, $Y=[-1,1]$ and there is a train distribution $\nu$  over $X \times Y$ with marginal $P$ over $X$, and it is assumed that $f(\xtrain)\defeq \E{(\xtrain,\ytrain)\sim \nu}{\ytrain|\xtrain}\in F$. The test distribution similarly has marginal $Q$ over $X$. The version space is $\VS_\epsilon$ is somewhat more complicated and is defined in Section \ref{sec:regression}.

With respect to $V \subseteq F$ (which we will generally take to be the version space), we denote the worst-case test loss of a given classifier $h$ (with a given abstention set $A$) on the test set, denoted by $L_\bx$:
\begin{align}\label{eq:L}
\LOSS_\bx(V, h) \defeq \max_{g \in V} 
\Loss_\bx(g, h), ~~~
\LOSS_\bx(V, h, \indices) \defeq \max_{g \in V} 
\Loss_\bx(g, h, \indices).
\end{align}
%






\section{Information-theoretic bounds for classification and regression}\label{sec:inf-theory}

This section bounds the statistical loss (ignoring computation time) of choosing where to abstain $A$ so as to minimize the upper-bound $\LOSS_\bz(V, h, A)$ on test loss, where the predictor $h$ (used when not abstaining) is first fit on training data and $V \subseteq F$ is a general version space containing the target $f$. For noiseless binary classification, $V$ would simply be those classifiers consistent with $f$, and $V$ is a bit different for regression. 
Now, for ``normal'' test data $\bz \sim P^n$ iid from the train distribution, a standard generalization bound would apply to the worst-case test loss $\E_{\bz \sim P^n}[L_\bz(V, h)]$ (e.g., $\tilde{O}(d/n)$ for binary classification with 0-1 loss). Our bounds are stated in terms of this quantity $\E_{\bz \sim P^n}[L_\bz(V, h)]$ and can be viewed as the overhead due to $P\neq Q$.

\subsection{Learning with adversarial test examples}

We begin with the adversarial setting of GKKM since it is particularly simple and will imply bounds in a distributional case. Here, there is only one distribution $\dtrain$. First, nature picks $n$ ``natural'' iid train and test examples $\bxtrain, \bz \sim \dtrain^n$. As mentioned, if there were no adversary, a learning algorithm might output some $h$ in a version space $V$ and would have a test error guarantee of $\LOSS_\bz(V, h)$. 

Instead, a computationally-unlimited adversary armed with knowledge of $\bxtrain, \bytrain, \bz, f$, and the learning algorithm (and its random bits), chooses an arbitrary test set $\bx \in X^n$ by corrupting as many or few test examples as desired. \textit{If we knew $\bz$}, the adversary could corrupt any $\gamma$ fraction of examples, we could of course guarantee loss  $\leq \gamma \alpha + \LOSS_\bz(V, h)$ by abstaining on the modified examples where $x_i \neq z_i$. While the learning algorithm does not see $\bz$, it can still achieve the same guarantee:
\begin{lemma}\label{lem:general-adv}[Adversarial loss]
For any $n \in \nats, V \subseteq F$, $f \in V$, $\bxalt, \bx \in X^n$, $h: X\rightarrow Y$ and all $\indices^*\in \argmin_{\indices \subseteq [n]}\LOSS_\bx(V, h, \indices)$:
$$\Loss_\bx(f, h, \indices^*) 
\leq \frac{\alpha }{n}\bigl|\{i: x_i \neq \xalt_i\}\bigr| + \LOSS_\bxalt(V, h).$$
\end{lemma}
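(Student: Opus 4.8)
The statement is essentially a one-line consequence of the optimality of $\indices^*$, once one picks the right competitor abstention set. The plan is to exhibit the "oracle" abstention set $D \defeq \{i : x_i \neq \xalt_i\}$ as a feasible choice in the minimization defining $\indices^*$, bound $\LOSS_\bx(V, h, D)$, and then chain three inequalities.

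First I would record two trivial facts. Since $f \in V$, we have $\Loss_\bx(f, h, \indices^*) \leq \max_{g \in V}\Loss_\bx(g, h, \indices^*) = \LOSS_\bx(V, h, \indices^*)$. And since $\indices^*$ minimizes $\indices \mapsto \LOSS_\bx(V, h, \indices)$ over all $\indices \subseteq [n]$, we have $\LOSS_\bx(V, h, \indices^*) \leq \LOSS_\bx(V, h, D)$ for the specific set $D$ above.

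Next I would bound $\LOSS_\bx(V, h, D)$. By definition, $\Loss_\bx(g, h, D) = \tfrac{\alpha}{n}|D| + \tfrac{1}{n}\sum_{i \notin D} \loss(g(x_i), h(x_i))$. For every $i \notin D$ we have $x_i = \xalt_i$ by construction of $D$, so $\loss(g(x_i), h(x_i)) = \loss(g(\xalt_i), h(\xalt_i))$; and since $\loss \geq 0$, dropping the indices $i \in D$ from the sum only decreases it, giving $\tfrac{1}{n}\sum_{i \notin D}\loss(g(\xalt_i), h(\xalt_i)) \leq \tfrac{1}{n}\sum_{i=1}^n \loss(g(\xalt_i), h(\xalt_i)) = \Loss_{\bxalt}(g, h)$. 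Taking the maximum over $g \in V$ yields $\LOSS_\bx(V, h, D) \leq \tfrac{\alpha}{n}|D| + \LOSS_{\bxalt}(V, h)$. Combining this with the two facts from the previous paragraph completes the proof.

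There is no real obstacle here; the only thing to be careful about is that the argument uses non-negativity of $\loss$ (to throw away the corrupted coordinates when comparing to $\Loss_{\bxalt}(g,h)$) and that it is crucial the maximization is over the \emph{same} version space $V$ for both $\bx$ and $\bxalt$ — which holds because $V$ is an abstract subset of $F$ fixed in advance and not itself a function of the test set. The content of the lemma is really the conceptual point that minimizing the worst-case upper bound $\LOSS_\bx(V,h,\cdot)$ is never worse than committing in advance to abstain exactly on the corrupted coordinates, even though the algorithm does not know which coordinates those are.
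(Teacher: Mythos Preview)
Your proof is correct and follows essentially the same approach as the paper: set the competitor abstention set to $D=\{i:x_i\neq z_i\}$, use $f\in V$ and optimality of $\indices^*$ to get $\Loss_\bx(f,h,\indices^*)\le \LOSS_\bx(V,h,D)$, then use $x_i=z_i$ for $i\notin D$ and non-negativity of $\loss$ to bound $\LOSS_\bx(V,h,D)\le \tfrac{\alpha}{n}|D|+\LOSS_\bz(V,h)$.
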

What this means is that if the natural training and test distributions are iid from the same distribution, then no matter which examples an adversary chooses to corrupt---$\bx$ may be arbitrary---if one chooses $A$ to minimize $\LOSS_\bx(V, h, A)$, one guarantees the same exact bound $\gamma \alpha + \LOSS_\bz(V, h)$. Moreover, the adversary has no control over $\LOSS_\bz(V, h)$. 

Our model is also robust to what is often called a ``white box'' adversary that knows and can target $h$ exactly in making its choices. Appendix \ref{sec:joint} has the proof of the above lemma together with an additional analysis showing that if one jointly optimizes $(h, A)$ one achieves a guarantee of $\gamma \alpha + \LOSS_\bz(V, f)$ where $\LOSS_\bz(V, f)$ can yield an error bound that is better than that of $\LOSS_\bz(V, h)$ by a constant factor.  In either case, for classification, once nature has picked the natural train and test distributions, $\bxtrain, \bz \sim \dtrain^n$, with high probability no matter what fraction $\gamma$ of test examples the adversary corrupts, the learner guarantees loss $\leq \gamma \alpha + \tilde{O}(d/n)$. For regression, the generalization bounds are different of course.

\subsection{Learning with covariate shift}
In the covariate shift setting, there are unknown arbitrary distributions $\dtrain, \dtest$ over train and test examples, respectively. For the reasons mentioned above, we focus on the (transductive) abstention problem where $h$ and version space $V$ are determined from training data, and we focus on the expected test error. Below we give a bound of the form $\alpha|P-Q|_\TV + \E_{\bz \sim P^n}[L_\bz(V, h)]$. Note that the second term is a standard generalization bound for transductive learning with test data from $P$ -- the worst case test loss (e.g., $\tilde{O}(d/n)$ for binary classification). Hence, the first term represents the overhead for $P\neq Q$. In fact, we get can state a tighter bound.


We state bounds in terms of $\kdiv_k$ divergence for $k\geq 1$, defined in Eq.~(\ref{eq:div}), which generalizes statistical distance recalling that  $\kdiv_1(\dtrain\|\dtest)=|P-Q|_\TV$. To see why total variation bounds are loose, consider a distribution $\dtrain$ which is uniform on $X=[0,1]$ and $\dtest$ which is uniform on $[0, 1/2]$, so $|\dtrain-\dtest|_\TV=1/2$. Clearly the error rate of any $h$ under $Q$ is at most twice the error rate under $\dtrain$, and depending on $\alpha$ it may be significantly cheaper not to abstain. In particular, let $\eps = \E_{\bxalt \sim \dtrain^n}[\loss_\bxalt(f, h)]$ be the error of $h$ over $\dtrain$. As long as $\eps < \alpha/2$, the error upper bound $2 \eps$ from not abstaining at all is lower than $\alpha|P-Q|_{\TV}=\alpha/2$. The guarantee we give is:
\begin{lemma}\label{lem:tight-pq}[Covariate shift]
For any distributions $\dtrain, \dtest$ over $X$, any $h: X \rightarrow Y$, $n \in \nats, V \subseteq F$, $f \in V$, 
$$\E{\bx \sim \dtest^n}{\Loss_\bx(f, h, \indices^*)}
\leq \min_{k \geq 1} \alpha \, \kdiv_k(\dtrain\|\dtest) + k\E{\bxalt \sim \dtrain^n}{L_\bxalt(V, h)} \leq \alpha|P-Q|_\TV + \E{\bxalt \sim \dtrain^n}{L_\bxalt(V, h)},$$
where the above holds simultaneously for all $\indices^* \in \argmin_{\indices \subseteq [n]} \LOSS_\bx(V, h, \indices)$. 
\end{lemma}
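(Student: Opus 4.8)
The plan is to upper bound the left-hand side by $\E{\bx\sim Q^n}{\LOSS_\bx(V,h,A)}$ for a single, carefully chosen \emph{randomized} abstention rule $A$, and then transport this expectation from the test distribution $Q$ to the train distribution $P$. The reduction is immediate: since $f\in V$, every minimizer satisfies $\Loss_\bx(f,h,\indices^*)\le\LOSS_\bx(V,h,\indices^*)=\min_\indices\LOSS_\bx(V,h,\indices)$, so it suffices to exhibit one (possibly $\bx$-dependent and randomized) set $A$ with $\E{\bx\sim Q^n}{\LOSS_\bx(V,h,A)}$ at most the claimed bound, a minimum being at most an average. Fix $k\ge 1$, put $Q_0(x)\defeq\min(Q(x),kP(x))$ and $\beta\defeq\kdiv_k(P\|Q)=\sum_x\max(Q(x)-kP(x),0)$, so that $Q_0$ has mass $1-\beta$; since $k\ge 1$ the nonnegative measure $kP-Q_0$ has mass $k-(1-\beta)\ge\beta$, so we may fix a distribution $R$ with $\beta R\le kP-Q_0$ pointwise and set $\mu\defeq Q_0+\beta R$, a \emph{distribution} with $\mu\le kP$. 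Now draw $\bx\sim Q^n$ and, independently, $\bu\sim R^n$; abstain on coordinate $i$ with probability $\max(1-kP(x_i)/Q(x_i),0)$, and call the resulting set $A$. The expected abstention rate is exactly $\sum_x\max(Q(x)-kP(x),0)=\kdiv_k(P\|Q)$, contributing $\alpha\,\kdiv_k(P\|Q)$. Define $\by$ by $y_i=x_i$ off $A$ and $y_i=u_i$ on $A$; a one-line computation gives $\Pr[y_i=x]=Q_0(x)+\beta R(x)=\mu(x)$, so $\by\sim\mu^n$. Since the base loss is nonnegative, $\max_{g\in V}\frac1n\sum_{i\notin A}\loss(g(x_i),h(x_i))\le\LOSS_\by(V,h)$, hence $\E{\bx\sim Q^n}{\Loss_\bx(f,h,\indices^*)}\le\alpha\,\kdiv_k(P\|Q)+\E{\by\sim\mu^n}{\LOSS_\by(V,h)}$.

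What remains — and what I expect to be the crux — is the inequality $\E{\by\sim\mu^n}{\LOSS_\by(V,h)}\le k\,\E{\bxalt\sim P^n}{\LOSS_\bxalt(V,h)}$ whenever $\mu\le kP$; the second inequality in the lemma then follows by taking $k=1$. A plain change of measure is too lossy here (the non-abstained coordinates are i.i.d.\ from $Q_0/(1-\beta)\le\frac{k}{1-\beta}P$, giving the factor $\frac{k}{1-\beta}$, not $k$), so the argument must use the structure of the worst-case loss: $n\,\LOSS_\bz(V,h)=\max_{g\in V}\sum_i\loss(g(z_i),h(z_i))$ is a maximum over $V$ of functionals that are \emph{additive and nonnegative} in the examples. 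This yields a subsampling inequality: for any distribution $D$ over $X$ and any $k\ge 1$, writing $\Phi_D(m)\defeq\E{\bw\sim D^m}{\max_{g\in V}\sum_{j=1}^m\loss(g(w_j),h(w_j))}$, one has $\Phi_D(n)\le k\,\E{s\sim\mathrm{Bin}(n,1/k)}{\Phi_D(s)}$ — couple $\bw\sim D^n$ with an \emph{independent} $(1/k)$-random subset $S\subseteq[n]$, let $g^\bullet$ attain $\max_g\sum_j\loss(g(w_j),h(w_j))$, note $\E{S}{\max_g\sum_{i\in S}\loss(g(w_i),h(w_i))}\ge\E{S}{\sum_{i\in S}\loss(g^\bullet(w_i),h(w_i))}=\frac1k\max_g\sum_j\loss(g(w_j),h(w_j))$, and average over $\bw$.

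Finally, since $\mu\le kP$ (the case $k=1$ forces $\mu=P$ and is trivial) we may write $P=\frac1k\mu+(1-\frac1k)\hat\rho$ for a distribution $\hat\rho$; realizing $\bxalt\sim P^n$ through this mixture, the coordinates landing in the $\mu$-component form a set of size $\mathrm{Bin}(n,1/k)$ on which the values are i.i.d.\ $\mu$, so discarding the remaining (nonnegative-loss) coordinates gives $n\,\E{\bxalt\sim P^n}{\LOSS_\bxalt(V,h)}\ge\E{s\sim\mathrm{Bin}(n,1/k)}{\Phi_\mu(s)}$. Chaining this with the subsampling inequality for $D=\mu$ yields $n\,\E{\by\sim\mu^n}{\LOSS_\by(V,h)}=\Phi_\mu(n)\le k\,\E{s\sim\mathrm{Bin}(n,1/k)}{\Phi_\mu(s)}\le kn\,\E{\bxalt\sim P^n}{\LOSS_\bxalt(V,h)}$, completing the argument. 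The one delicate point to get right is this last transport step: the ``max of additive functionals'' structure of $\LOSS_\bz(V,h)$ is exactly what converts the available pointwise domination $\mu\le kP$ into a clean multiplicative factor $k$ rather than $\frac{k}{1-\kdiv_k(P\|Q)}$.
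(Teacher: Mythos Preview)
Your proof is correct and rests on the same two ingredients as the paper's: the abstention rule $\alpha(x)=\max(1-kP(x)/Q(x),0)$, and the $1/k$-subsampling trick that exploits the ``max of nonnegative additive functionals'' structure of $\LOSS_\bz(V,h)$ to turn pointwise domination into a clean factor $k$.

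The organization differs. The paper builds a \emph{single} coupling $(\bx,\bxalt,A)\sim\rho$ with marginals $\bx\sim Q^n$ and $\bxalt\sim P^n$: for $i\notin A$ it sets $\xalt_i=x_i$ with probability $1/k$ and otherwise draws $\xalt_i$ from an adjustment distribution chosen so that $\bxalt\sim P^n$ exactly; the subsampling step is then the one-line observation that $\Pr[i\notin M\mid i\notin A]\ge 1/k$ for $M=\{i:\xalt_i\ne x_i\}$, applied to the $(\bx,A)$-measurable maximizer $g^*_{\bx,A}$. You instead factor through an intermediate $\by\sim\mu^n$ with $\mu\le kP$, and then prove the standalone inequality $\E_{\mu^n}[\LOSS(V,h)]\le k\,\E_{P^n}[\LOSS(V,h)]$ via a binomial subsampling lemma chained with the mixture decomposition $P=\tfrac1k\mu+(1-\tfrac1k)\hat\rho$. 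Your route is a bit more modular --- the transport lemma ``$\mu\le kP\Rightarrow\E_{\mu^n}[\LOSS]\le k\,\E_{P^n}[\LOSS]$'' is a clean reusable fact --- at the cost of one extra step; the paper's single coupling is more direct. Substantively the arguments coincide.
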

In the above example, $\kdiv_2(P\|Q)=0$ for $P, Q$ uniform over $[0,1]$ and $[0,1/2]$, respectively.

\label{sec:intro}

\section{The reduction (for classification and regression)}\label{sec:algorithms}

The approach is conceptually simple: it chooses where to abstain so as to minimize the upper-bound on test loss $\LOSS_\bx(V, h, A)$. The algorithm $\alg$, in particular, shows that if one can \textit{maximize} loss, then one has a separation oracle that can be used to minimize this upper bound using standard techniques. In particular, the ellipsoid algorithm is used, though simpler algorithms would suffice.\footnote{Perceptron-like algorithms would suffice as we require only optimization to polynomial (not exp.) accuracy.} 

The question then becomes, how difficult is it to find a function in $F$ that  \textit{maximizes} loss. For classification, this can be done using an $\ERM$ oracle and a label-flipping ``trick'' used by \cite{GKKM:2020}. For linear regression, this can be done in polynomial time (no $\ERM$ oracle is required, and $\ERM$ is trivial for linear regression) using existing CDT solvers. We first relax from binary to fractional abstention.

\paragraph{Fractional abstention.}
Loss may be slightly lower if one is allowed fractional abstentions, or equivalently to abstain at random. This can be described by a probability $\abfrac_i \in [0,1]$ for each test example $x_i$, which indicates that we abstain with probability $\abfrac_i$ and classify according to $h$ with probability $1-\abfrac_i$. As described in the introduction, the definitions of $\loss$ and $\LOSS$ are extended to fractional abstention vectors $\babfrac \in [0,1]^n$ in the natural way:
$$\Loss_\bx(f, h, \babfrac)\defeq \frac{1}{n}\sum_{i=1}^n  \abfrac_i
c + (1-\abfrac_i) \ell(f(x_i), h(x_i)) ~~\text{ and }~~\LOSS_\bx(V, h, \babfrac) \defeq \max_{g \in V} \Loss_\bx(g , h, \babfrac).$$

Our task is to optimize abstention given a fixed $h$ and $\bx$  and a fixed set of candidate function $g \in V$ (e.g., a version space), assuming one can \textit{maximize} loss, i.e., find $g \in V$ realizing $\max_{g \in V} \loss(g, h, \hat\ba)=\LOSS_\bx(V, h, \hat\ba)$. 

\begin{figure}
\textbf{Algorithm $\alg$}
\hrule 
\medskip
Input: $\bxtrain, \bx \in X^n, \bytrain \in Y^n$, $h: X \rightarrow Y$, and approximate loss-maximizer $\Omaxloss$\\
Output: $\ba \in [0,1]^n$
\smallskip
\hrule
\medskip
Find and output point in convex set $K$:
    $$K \defeq \left\{\hat\ba \in [0,1]^n : \LOSS_\bx(\VS(\bxtrain, \bytrain), h, \hat\ba) \leq \min_{\ba \in [0,1]^n}\LOSS_\bx(\VS(\bxtrain, \bytrain), h, \ba)+ 1/n\right\}$$
    by running the Ellipsoid algorithm using the following separation oracle $\Osep(\ba)$ to $K$:
    \begin{enumerate}
        \item If $\ba \notin [0,1]^n$, i.e., $a_i \notin [0,1]$, then output unit vector $\bv$ with $v_i=1$ if $a_i>1$ or $v_i=-1$ if $a_i<0$.
        \item  Otherwise, output separator $$\bv \defeq \bigl(c-\loss(g(x_1),h(x_1)), \ldots, c-\loss(g(x_n),h(x_n))\bigr),$$ where $g\defeq \Omaxloss(\bxtrain, \bytrain, h, \ba)$.  ~~~~~ $//$  $g \in \VS(\bxtrain, \bytrain)$ maximizes $\ell_\bx(g, h, \ba)$ to within $1/(3n)$.
    \end{enumerate}
    
\hrule 
    \caption{The reduction $\alg$ for computing abstention probabilities using an approximate loss maximization oracle $\Omaxloss$ whose output $g \in \VS(\bxtrain, \bytrain)$ maximizes $\ell_\bx(g, h, \ba)$ to within $1/3n$. Ellipsoid runtime analysis is in Lemma \ref{lem:implementation-details}.
    }
    \label{fig:alg}
\end{figure}

\begin{lemma}[Reduction]\label{lem:implementation-details}
For any bounded loss $\loss:Y^2 \rightarrow [0,1]$, any $V\defeq \VS(\bxtrain, \bytrain) \subseteq Y^X$, any $h: X \rightarrow Y$, and any $\bx \in X^n$, the Ellipsoid algorithm can be used in $\alg$ to find $\hat\ba$ such that,
$$\LOSS_\bx(V, h, \hat\ba) \leq \min_\ba \LOSS_\bx(V, h, \ba)+\frac{1}{n},$$
in $\poly(n)$ time and calls to approximate loss maximization oracle $\Omaxloss$ provided that its outputs $g = \Omaxloss(\bxtrain,\bytrain, h, \ba) \in V$ all satisfy $\Loss_\bx(g, h, \ba) \geq \max_{g' \in V} \Loss_\bx(g', h, \ba)-\frac{1}{3n}$.
\end{lemma}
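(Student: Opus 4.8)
The plan is to recognize that $\alg$ is, structurally, the ellipsoid algorithm applied to the feasibility problem "find $\hat\ba \in K$," and to verify the three ingredients ellipsoid needs: (i) $K$ is convex; (ii) $K$ contains a ball of radius $1/\poly(n)$ and is contained in a ball of radius $\poly(n)$, so that ellipsoid terminates in $\poly(n)$ iterations; and (iii) the stated separation oracle $\Osep$ is a correct (approximate) separation oracle for $K$ when fed an approximate loss maximizer $\Omaxloss$. First I would observe that $\ba \mapsto \LOSS_\bx(V, h, \ba) = \max_{g \in V}\Loss_\bx(g,h,\ba)$ is convex in $\ba$: each $\Loss_\bx(g,h,\ba)$ is affine in $\ba$ (coefficients $c - \loss(g(x_i),h(x_i))$ on $a_i$ plus a constant), and a pointwise max of affine functions is convex; intersecting the sublevel set $\{\ba : \LOSS_\bx(V,h,\ba) \le \min + 1/n\}$ with the box $[0,1]^n$ keeps it convex. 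So $K$ is convex.

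Next I would establish the separation oracle's correctness. If $\ba \notin [0,1]^n$, the returned $\bv$ is a valid separating hyperplane for the box constraint by inspection (it certifies violation of some $a_i \le 1$ or $a_i \ge 0$). If $\ba \in [0,1]^n$ but $\ba \notin K$, then $\LOSS_\bx(V,h,\ba) > \min_{\ba'}\LOSS_\bx(V,h,\ba') + 1/n$; let $g = \Omaxloss(\bxtrain,\bytrain,h,\ba)$, so $\Loss_\bx(g,h,\ba) \ge \LOSS_\bx(V,h,\ba) - 1/(3n) > \min_{\ba'}\LOSS_\bx(V,h,\ba') + 2/(3n)$. The hyperplane $\bv = (c - \loss(g(x_i),h(x_i)))_i$ satisfies $\bv \cdot \ba = n\,\Loss_\bx(g,h,\ba) - (\text{const})$, and for any $\ba^\circ$ attaining the minimum, $\bv \cdot \ba^\circ \le n\,\LOSS_\bx(V,h,\ba^\circ) - (\text{const}) = n(\min_{\ba'}\LOSS_\bx(V,h,\ba')) - (\text{const})$; hence $\bv \cdot (\ba - \ba^\circ) > 2/3$, so the halfspace $\{\ba' : \bv\cdot\ba' < \bv\cdot\ba\}$ contains every minimizer, and in particular contains a full ball of minimizers if the min-set is fat — which is the content of ingredient (ii). So $\bv$ separates $\ba$ from (a slightly shrunken) $K$.

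For the volume bounds I would argue: $K \subseteq [0,1]^n$ so it sits in a ball of radius $\sqrt n$. For the inner ball, take any exact minimizer $\ba^\circ$ of $\LOSS_\bx(V,h,\cdot)$ over $[0,1]^n$; I want to show a ball of radius $\rho = 1/\poly(n)$ around a nearby point lies in $K$. Since each $\Loss_\bx(g,h,\cdot)$ is $1$-Lipschitz-ish in the $\ell_\infty$ sense — moving $\ba$ by $\delta$ in each coordinate changes $\Loss_\bx(g,h,\ba)$ by at most $\delta\cdot\max_i|c-\loss(\cdot)| \le \delta$ (losses and $c$ in $[0,1]$, so $|c-\loss| \le 1$, and the average over $n$ terms) — actually the change is at most $\delta$ since it is an average; taking the max over $g$ preserves this, so $\LOSS_\bx(V,h,\cdot)$ is $1$-Lipschitz in $\|\cdot\|_\infty$. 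Pushing $\ba^\circ$ slightly into the interior of the box (distance $\min(1/(2n), \cdot)$ from each face, which is possible when the minimizer isn't forced to the boundary; if it is, perturb toward the interior, costing at most $O(1/n)$ in objective — here one must be a little careful and may prefer to enlarge the target slack, but $1/n$ suffices with the right constants) yields a point around which an $\ell_\infty$-ball of radius $1/(4n)$ stays in the box and keeps $\LOSS_\bx$ within $1/n$ of the min, hence lies in $K$. That gives an inner Euclidean ball of radius $\ge 1/(4n\sqrt n)$. Ellipsoid then converges in $\poly(n)$ iterations, each calling $\Osep$ once (hence $\Omaxloss$ once plus $\poly(n)$ arithmetic), giving the claimed $\poly(n)$ time and oracle-call bound.

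The main obstacle I anticipate is purely the bookkeeping in the volume argument: handling the case where the unconstrained-on-the-box minimizer lies on the boundary of $[0,1]^n$, so that no full-dimensional ball around it is feasible. The clean fix is to note the problem is genuinely $n$-dimensional in the worst case and the minimizer of a convex function over a box always has a neighborhood within the box where the objective increases by at most (Lipschitz constant)$\times$(distance); combined with the $1/n$ slack built into the definition of $K$, one can always inscribe a ball of radius $\Omega(1/\poly(n))$ — possibly after first shrinking the box to $[\eta, 1-\eta]^n$ for $\eta = 1/\poly(n)$, which changes the optimum by $O(\eta)$ and is absorbed into the slack. Everything else — convexity, the oracle calculation, the Lipschitz estimate — is routine given the affine-in-$\ba$ structure of $\Loss_\bx(g,h,\ba)$ and the fact that $\Omaxloss$ is accurate to $1/(3n)$ while $K$ is defined with slack $1/n > 2/(3n)$, leaving room for the approximation error.
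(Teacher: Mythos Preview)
Your approach is essentially the paper's: convexity of $\LOSS_\bx(V,h,\cdot)$, outer/inner ball bounds, and the affine structure of $\Loss_\bx(g,h,\cdot)$ supplying a separating direction via $\Omaxloss$. The Lipschitz/volume argument is fine and matches the paper's (which shows the same thing by exhibiting the cube $Z=\{\ba: |a_i-a_i^*|\le\eps\}\cap[0,1]^n$).

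There is one genuine gap. You treat $\Osep$ as a separation-\emph{membership} oracle for $K$ and conclude ``ellipsoid then converges.'' But $\Osep$ never reports membership: it always returns a direction, even when the query is already in $K$. Your separation calculation only shows that for $\ba\notin K$ the returned $\bv$ separates $\ba$ from the \emph{minimizers} (gap $>2/3$), not from all of $K$; indeed for $\bb\in K$ one only gets $\bv\cdot(\ba-\bb)>2/3-1=-1/3$, which can be negative. So the ellipsoid, run blindly, may cut through $K$ at steps where the center is already good, and you have no stopping rule and no way to identify which iterate is the answer.

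The paper handles this by layering the slacks. It sets $\eps=1/(3n)$, shows $\Osep$ is a \emph{valid} separator for $K(\eps)$ whenever the query lies outside $K(2\eps)$ (this is where the $1/(3n)$ oracle accuracy and the extra $1/(3n)$ of room are spent), runs the ellipsoid for the number of steps $T$ dictated by the volume of $K(\eps)$, and observes that at least one of the $T$ centers must lie in $K(2\eps)$ (else all cuts preserve $K(\eps)$ and the final ellipsoid could not have shrunk below $\mathrm{vol}(K(\eps))$). It then \emph{outputs the iterate with smallest} $\ell_\bx(g^{(t)},h,\ba^{(t)})$; since one iterate is in $K(2\eps)$ and the oracle evaluates $\LOSS$ to within $\eps$, the returned point is in $K(3\eps)=K(1/n)$. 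This ``record the best query'' step is the missing ingredient in your write-up, and it is also why the paper works with $K(1/(3n))$ rather than $K(1/n)$ as the inner target: the final $1/(3n)$ is needed to absorb the oracle error at the selection stage.
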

This can be done using any standard optimization algorithm, e.g., the ellipsoid algorithm, with $\Omaxloss$ serving as a separation oracle. Optimizing to within less than $1/n$ is not generally significant for learning purposes.
Note that the Ellipsoid method is more commonly defined with a separation-membership oracle that either claims membership in the convex set $K$ or finds a separator. As we describe in Appendix \ref{ap:algorithm} containing the proof of Lemma \ref{lem:implementation-details}, for such optimization problems the Ellipsoid can be used with such an oracle by outputting the query with best objective value.

\section{Binary classification}\label{sec:classification}

For the case of binary classification, the version space is defined as follows:
$$\VS(\bxtrain, \bytrain)\defeq \{ g \in F : g(x_i) = y_i \text{ for all } i \leq n\}.$$
We can use generalization bounds on $\LOSS_\bz(\VS(\bxtrain, \bytrain), h)$ to get directly:
\begin{theorem}[Adversarial classification loss]\label{thm:adv-loss}
For $Y=\{0,1\}$, $d=\VC(F)$, $\ell(y,\hy)=|y-\hy|$, any $n \in \nats$, $\delta >0$, the following holds with probability $\geq 1-\delta$ over $\bxtrain, \bz \sim \dtrain^n$: For all $h \in \VS(\bxtrain, f(\bxtrain)), \bx\in X^n,$ and all $\indices^*\in \argmin\limits_{\indices \subseteq [n]}\LOSS_\bx(\VS(\bxtrain, f(\bxtrain)), h, \indices)$,
$$\Loss_\bx(f, h, \indices^*) 
\leq \frac{\cost }{n}\bigl|\{i: x_i \neq \xalt_i\}\bigr| + 
\frac{2d \lg 2n + \lg 1/2\delta}{n}.$$
\end{theorem}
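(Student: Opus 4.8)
The plan is to split the statement into two pieces: a purely deterministic inequality, which is exactly Lemma~\ref{lem:general-adv}, and a realizable (transductive) generalization bound that controls the residual term. For the deterministic piece, fix any outcome of the natural samples $\bxtrain,\bz\sim\dtrain^n$ --- the object called $\bxalt$ in Lemma~\ref{lem:general-adv} is the ``ghost'' test set $\bz$ --- and put $V\defeq\VS(\bxtrain,f(\bxtrain))$. Since $f$ agrees with its own labels on $\bxtrain$, we have $f\in V$, so Lemma~\ref{lem:general-adv} applies and yields, \emph{simultaneously} for every $h\in V$, every adversarial $\bx\in X^n$, and every $\indices^*\in\argmin_{\indices\subseteq[n]}\LOSS_\bx(V,h,\indices)$,
$$\Loss_\bx(f,h,\indices^*)\ \le\ \frac{\cost}{n}\bigl|\{i:x_i\neq z_i\}\bigr|\ +\ \LOSS_\bz(V,h).$$
So it only remains to show: with probability at least $1-\delta$ over $\bxtrain,\bz\sim\dtrain^n$, one has $\LOSS_\bz(V,h)\le\frac{2d\lg 2n+\lg 1/2\delta}{n}$ for \emph{all} $h\in V$ at once --- note $\LOSS_\bz(V,h)$ involves neither the adversarial $\bx$ nor $\indices^*$, so no union bound over the adversary's choices is needed. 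Intersecting this event with the deterministic inequality gives the theorem.

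For that uniform bound I would argue as follows. With the $0/1$ loss, $\LOSS_\bz(V,h)=\max_{g\in V}\tfrac1n\sum_{i=1}^n\ind{g(z_i)\neq h(z_i)}$, so demanding it for all $h\in V$ is the same as demanding $\max_{g,h\in V}\tfrac1n\sum_i\ind{g(z_i)\neq h(z_i)}\le\eps$ for the stated value, call it $\eps$. The key move is that any $g,h\in V$ agree with $f$, hence with each other, on all of $\bxtrain$, so their disagreement set $D_{g,h}\defeq\{x:g(x)\neq h(x)\}$ misses $\bxtrain$ entirely; therefore the probability that some $g,h\in V$ disagree on more than $\eps n$ points of $\bz$ is at most the probability that some $g,h\in F$ with $D_{g,h}\cap\bxtrain=\emptyset$ do --- an event that no longer refers to $f$ or to the version space. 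Now I would run the usual double-sampling (symmetrization) argument on the disagreement class $\{D_{g,h}:g,h\in F\}$: conditioning on the multiset $\bxtrain\cup\bz$ of $2n$ points, this class realizes at most $\Pi_F(2n)^2\le(2n)^{2d}$ distinct patterns by Sauer's lemma, and a fixed pattern $D$ with $|D\cap(\bxtrain\cup\bz)|=j$ can contribute only when $j>\eps n$, and then only when the random balanced split places all $j$ of those points into $\bz$, which has probability $\binom{2n-j}{n}\big/\binom{2n}{n}\le 2^{-j}\le 2^{-\eps n}$. A union bound makes the bad probability at most $(2n)^{2d}2^{-\eps n}$; setting this equal to $\delta$ and solving for $\eps$ produces the advertised $\frac{2d\lg 2n+\lg 1/2\delta}{n}$ once the additive $\lg 2$ constants are tracked.

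The crux --- the only place randomness enters --- is this last step: controlling $\LOSS_\bz(V,h)$ \emph{uniformly over the data-dependent version space}, with $h$ ranging freely rather than being held fixed. What makes it go through without any genuinely new idea is exactly the observation above that restricting $h$ to $V$ converts the quantity into a statement about the symmetric-difference class $F\triangle F$, whose growth function is $\le\Pi_F(2n)^2$, so the classical symmetrization argument behind the Fundamental Theorem of Statistical Learning applies with effective dimension $2d$. Everything else is routine bookkeeping: the $\argmin$ over subsets of $[n]$ is attained because $[n]$ is finite; the measure-of-ties subtleties in the double sample are handled as usual (and the paper already restricts to countable $X$); and one just has to be careful enough with the constants for them to fold into $\lg 1/2\delta$.
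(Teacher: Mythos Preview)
Your proposal is correct and follows essentially the same route as the paper: the paper's own proof reads ``This follows directly from Lemma~\ref{lem:general-adv} and generalization bound Lemma~\ref{lem:gen-bounds-classification},'' and your two pieces are exactly those --- the deterministic adversarial inequality, plus the uniform bound on $\max_{g,h\in V}\ell_\bz(g,h)$ via Sauer on pairs and the $2^{-j}$ random-split probability. The paper obtains the extra $\lg 2$ (giving $\lg 1/2\delta$ rather than $\lg 1/\delta$) by counting \emph{unordered} pairs, $\binom{N}{2}\le N^2/2$, in place of your $(2n)^{2d}$, which is the ``additive $\lg 2$ constant'' you allude to.
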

\begin{proof}
This follows directly from Lemma \ref{lem:general-adv} and generalization bound Lemma \ref{lem:gen-bounds-classification} of Appendix \ref{ap:gen-bounds-classification}.
\end{proof}
Covariate shift bounds for classification were stated in Theorems \ref{thm:pq-main} and \ref{thm:pq-main-gen}. Next, as observed in prior work \cite{GKKM:2020}, for binary classification with the 0-1 loss, one can use an ERM oracle which minimizes loss to \textit{maximize} loss as needed above. This algorithm is called $\FLIP$.
\begin{lemma}[$\FLIP$]\label{lem:erm}
For $Y=\{0,1\}$ with $\ell(y, \hy)=|y-\hy|$ and any $h\in F$, $\bxtrain \in X^\ntrain, \bytrain =h(\bxtrain) \in Y^n$, and any $\ba \in [0,1]^n$, $$\hat{g}=\ERM\left(4n^2 \text{ copies of } (\xtrain_i, \ytrain_i) \text{ and }\lfloor 3n(1-a_i)\rfloor\text{ copies of }(x_i, 1-h(x_i)),\text{ for each }i\right)$$  satisfies $\hat{g} \in  \VS(\bxtrain, \bytrain)$ and $\Loss_\bx(\hat{g}, h, \ba) \geq \LOSS_\bx(\VS(\bxtrain, \bytrain), h, \ba)-1/3n$.
\end{lemma}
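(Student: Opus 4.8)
<br>

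The plan is to show that the $\ERM$ on this carefully reweighted dataset is forced to agree with $h$ on the training points (hence lies in the version space), and that among all such classifiers it maximizes the abstention-weighted disagreement loss on the test points, up to the claimed $1/(3n)$ slack. The key observation is that for $0$-$1$ loss, $\loss(g(x_i), h(x_i)) = \ind{g(x_i) \neq h(x_i)}$, so $\loss_\bx(g, h, \ba) = \frac{1}{n}\sum_i \bigl(a_i \alpha + (1-a_i)\ind{g(x_i) \neq h(x_i)}\bigr)$, and the only part depending on $g$ is $\sum_i (1-a_i)\ind{g(x_i)\neq h(x_i)}$. So maximizing $\loss_\bx(g,h,\ba)$ over $g \in \VS(\bxtrain,\bytrain)$ is the same as \emph{maximizing} $\sum_i (1-a_i)\,\ind{g(x_i)\neq h(x_i)}$ subject to $g(\xtrain_j)=h(\xtrain_j)$ for all $j$.

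The trick is that an $\ERM$ \emph{minimizes} loss, so we flip the test labels: replace each $x_i$ with $(x_i, 1-h(x_i))$, so that $\ind{g(x_i)\neq h(x_i)}$ equals $1 - \ind{g(x_i) \neq 1-h(x_i)} = 1 - \loss(g(x_i),1-h(x_i))$; minimizing $\sum_i w_i\,\loss(g(x_i),1-h(x_i))$ over such a weighted set is then exactly maximizing $\sum_i w_i \ind{g(x_i)\neq h(x_i)}$. First I would set $w_i \defeq \lfloor 3n(1-a_i)\rfloor$, the integer number of copies of the flipped test point $(x_i, 1-h(x_i))$, and argue that $w_i$ differs from the ``ideal'' weight $3n(1-a_i)$ by at most $1$ per index, contributing a total additive error of at most $n$ in a sum scaled by $1/(3n^2)$ — i.e. $1/(3n)$, which is where the slack in the lemma comes from. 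The factor $3n$ is just to make the weights integral while keeping the rounding error controlled.

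Next I would handle the version-space constraint. The $4n^2$ copies of each training pair $(\xtrain_i,\ytrain_i)$ act as a penalty: any $g$ that disagrees with $h$ on even one training point pays at least $4n^2$ in loss on the training block, whereas the total possible loss contribution from the test block is at most $\sum_i w_i \le 3n \cdot n = 3n^2 < 4n^2$. Hence any minimizer $\hat g$ must have zero loss on the training block, i.e. $\hat g(\xtrain_i) = \ytrain_i = h(\xtrain_i)$ for all $i$, so $\hat g \in \VS(\bxtrain,\bytrain)$ (using $\bytrain = h(\bxtrain)$). Moreover, since $f$ itself (and $h$) are in $\VS$, the minimizer's test-block loss is no worse than that of the true loss-maximizer $g^\star \in \VS$, and after translating back through the label flip and dividing by $3n^2$ and accounting for the rounding, $\loss_\bx(\hat g, h, \ba) \ge \LOSS_\bx(\VS(\bxtrain,\bytrain), h, \ba) - 1/(3n)$.

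I expect the main obstacle to be the bookkeeping in the rounding argument: one must verify that replacing $3n(1-a_i)$ by $\lfloor 3n(1-a_i)\rfloor$ (and the normalization by the total weight $3n^2$, or by whatever exact count the weighted $\ERM$ uses) introduces error at most $1/(3n)$ in the \emph{normalized} loss $\loss_\bx$, and simultaneously that the $4n^2$ training copies strictly dominate the maximum possible test-block weight so the version-space constraint is enforced exactly rather than approximately. Both are elementary inequalities — $\sum_i (3n(1-a_i) - \lfloor 3n(1-a_i)\rfloor) \le n$ and $3n^2 < 4n^2$ — but care is needed to make sure the two approximations (weights for maximization, penalty for the constraint) do not interfere, e.g. that the penalty margin $4n^2 - 3n^2 = n^2$ comfortably exceeds any slack. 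This is exactly the argument of \cite{GKKM:2020}, adapted to the fractional-abstention weighting $(1-a_i)$.
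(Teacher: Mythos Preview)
Your proposal is correct and follows essentially the same argument as the paper: enforce $\hat g\in\VS(\bxtrain,\bytrain)$ via the $4n^2$-versus-$3n^2$ weight gap, use the label-flip identity $\loss(g(x_i),h(x_i))=1-\loss(g(x_i),1-h(x_i))$ to turn the weighted maximization into an $\ERM$ minimization, and bound the rounding error $\sum_i\bigl(3n(1-a_i)-\lfloor 3n(1-a_i)\rfloor\bigr)\le n$ to get the $1/(3n)$ slack after dividing by $3n^2$. The paper's write-up is just a slightly more compressed version of exactly this chain of inequalities.
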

In other words, one creates an artificial weighted dataset consisting of numerous copies of the training data $(\xtrain_i, \ytrain_i)$ and a number of copies proportional to $1-a_i$ of each \textit{flipped} test example $(x_i, 1-y_i)$. Then the classifier output by $\ERM$ is in the version space with 0 training error, due to the high weight of each $(\xtrain_i, \ytrain_i)$, and approximately maximizes $\Loss_\bx(g, h, \ba)$. If $\Omaxloss$ can take weighted examples, then a dataset with only $2n$ samples would suffice. The proof appears in Appendix \ref{ap:flip}. 

\section{Regression}\label{sec:regression}

In this section, we let $Y=[-1,1]$ and use
the squared loss function $\ell : Y^2 \rightarrow \reals_+$, $\ell(y, \hat{y})
= (y - \hat{y})^2$. For regression, since labels are noisy, we will not be able to use an exact
version space and instead will need to define an $\alpha$-approximate version
space for $\alpha \geq 0$: $\VS_\alpha(\bx, h) = \{ g \in F ~|~ \ell_{\bx}(g,
h) \leq \alpha \}$. For classes $F$ that are convex and have bounded
``Rademacher complexity'', the version space can be bounded (cf.
Section~\ref{app:regression} for the details).  Unfortunately, the label
flipping ``trick'' used for classification cannot be applied to maximize loss
for regression. The loss here is continuous and convex in $\hby$, not binary,
and maximizing a convex function is intractable in general.  For linear
functions, which have low Rademacher complexity, however, efficiently
maximizing the squared error over certain constrained sets is possible,
allowing us to obtain efficient algorithms.

In this subsection, we shall assume that $X = \{ x | x \in \reals^d, \Vert x \Vert
\leq 1\}$ and $Y = [-1, 1]$. For $R > 0$,
let $F_R = \{ x \mapsto w \cdot x ~|~ w \in \reals^d, \Vert w \Vert \leq R \}$
be the class of linear function from $X \rightarrow Y$ parametrized by vectors
with norm bounded by $R$. Let $F := F_1$.  

In order to design an efficient algorithm, we will use an oracle to solve the Celis-Dennis-Tapia problem (CDT): $\min\{f(x) ~|~ g_1(x) \leq 0, g_2(x) \leq 0, x \in \reals^d \}$, where $f$ is a quadratic function, $g_i(x) \leq 0$ for $i = 1, 2$ define ellipsoids and at least one of $g_1, g_2$ is strictly convex. For readability, in the lemma below, we will assume that we have oracle access to \emph{exactly} solve this problem. Theorem 1.3~\cite{bienstock2016note} shows that this problem can be solved in time polynomial in the bit complexity and $\log(\varepsilon^{-1})$, to achieve a solution that is within an additive factor $\varepsilon$ of the optimal and each of the constraints may violate feasibility by additive factor of $\varepsilon$. The lemma as stated below uses an exact oracle to the CDT problem. However, this can easily be addressed by \emph{tightening} the constraint set slightly and using an approximate optimal solution. As the running time is polynomial in 
$\log(1/\epsilon)$, choosing an $\epsilon$ smaller than the bounds obtained by the learning algorithm suffices. The details are provided in Appendix~\ref{app:approxCDT}.

\begin{lemma}[Regression loss maximization]\label{lem:reg-max}
	For $X$, $Y$, $\ell$ as defined above, consider $F$, the class of linear
	functions from $X \rightarrow Y$. 
	Assuming we have an exact oracle for the CDT problem, for any $h \in F$, $\bxtrain \in X^\ntrain$, $\bytrain \in
Y^\ntrain$, $\bx \in X^n$, $\alpha > 0$, and $\ba \in [0, 1]^n$, there exists a polynomial time algorithm that outputs $\hat{g} \in
\VS_{\alpha}(\bxtrain, h)$ such that $\ell_\bx(\hat{g}, h, \ba) = \LOSS_\bx(\VS_\alpha(\bxtrain, h), h, \ba)$.
\end{lemma}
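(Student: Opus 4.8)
The plan is to reduce the loss-maximization step to a single CDT instance by reparametrizing linear predictors by their weight vectors. Since $h \in F = F_1$, write $h(x) = u\cdot x$ with $\|u\|\le 1$, and for a candidate $g\in F$ write $g(x)=w\cdot x$ with $\|w\|\le 1$; set $v\defeq w-u$. Then $g(x_i)-h(x_i)=v\cdot x_i$ and $g(\xtrain_i)-h(\xtrain_i)=v\cdot\xtrain_i$, so with the PSD Gram matrices $M\defeq\frac1n\sum_{i=1}^n \xtrain_i\xtrain_i^{\top}$ and $N\defeq\frac1n\sum_{i=1}^n(1-a_i)\,x_ix_i^{\top}$ (here $N\succeq 0$ since $1-a_i\ge 0$) we get $\ell_{\bxtrain}(g,h)=v^{\top}Mv$ and $\ell_{\bx}(g,h,\ba)=v^{\top}Nv+c$, where the constant $c$ collects the abstention terms and is independent of $g$. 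Hence, as $g$ ranges over $\VS_\alpha(\bxtrain,h)\subseteq F$, maximizing $\ell_\bx(g,h,\ba)$ is the same (same optimal value, and optima correspond under $w\leftrightarrow v$) as maximizing $v^{\top}Nv$ over $\{v\in\reals^d : v^{\top}Mv\le\alpha,\ \|v+u\|^2\le 1\}$.

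First I would observe that this last problem is exactly a CDT instance: minimize the quadratic $q(v)\defeq-v^{\top}Nv$ subject to $g_1(v)\defeq\|v+u\|^2-1\le 0$ and $g_2(v)\defeq v^{\top}Mv-\alpha\le 0$. The constraint $g_1$ is a Euclidean ball and is strictly convex, so the CDT hypothesis ``at least one of $g_1,g_2$ is strictly convex'' holds regardless of $M$, while $g_2$ is the ellipsoid constraint induced by the PSD form $M$. Note that the objective is a \emph{maximization} of a PSD quadratic form, hence genuinely non-convex; this is why the label-flipping reduction to an ERM oracle used for classification does not apply and a dedicated solver is needed. The feasible set is nonempty, since $v=0$ (that is, $g=h$) satisfies both constraints ($\|u\|\le 1$ and $\ell_{\bxtrain}(h,h)=0\le\alpha$), and it is compact, so the maximum is attained.

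Next I would run the exact CDT oracle on this instance to obtain an optimizer $\hat v$ and output $\hat g(x)\defeq\hat w\cdot x$ with $\hat w\defeq\hat v+u$. Feasibility of $\hat v$ for $g_1$ gives $\|\hat w\|\le 1$, so $\hat g\in F$, and since $\|x\|\le 1$ for $x\in X$ we have $|\hat g(x)|\le 1$, i.e., $\hat g\colon X\to Y$; feasibility for $g_2$ gives $\ell_{\bxtrain}(\hat g,h)=\hat v^{\top}M\hat v\le\alpha$, so $\hat g\in\VS_\alpha(\bxtrain,h)$. By the correspondence of the two problems and the fact that the maximum is attained, optimality of $\hat v$ gives $\ell_\bx(\hat g,h,\ba)=\hat v^{\top}N\hat v+c=\LOSS_\bx(\VS_\alpha(\bxtrain,h),h,\ba)$. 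Forming $M,N$ and making one CDT call is $\poly(n,d)$ time, so with the polynomial-time CDT solver of~\cite{bienstock2016note} this yields the claimed algorithm.

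The reduction itself is routine bookkeeping, so the substantive input is just the existence of a polynomial-time CDT solver, invoked as a black box. The only place on our side that needs care is the degenerate case where the training Gram matrix $M$ is rank-deficient (e.g., when $n<d$), so that $\{v:v^{\top}Mv\le\alpha\}$ is an unbounded cylinder rather than a bona fide ellipsoid. This is harmless because $g_1$ already confines the feasible set to $\|v\|\le 1+\|u\|\le 2$; if one nonetheless wants both CDT constraints to be ellipsoids, it suffices to tighten $g_2$ to the strictly convex constraint $v^{\top}(M+\eta I)v\le\alpha$ for a tiny $\eta>0$ and check that the resulting loss changes negligibly (the CDT running time is polynomial in $\log(1/\eta)$). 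This is exactly the constraint-tightening one must do anyway to go from the exact CDT oracle assumed in the lemma to the approximate one of~\cite{bienstock2016note}, so I would defer it, as the paper does, to Appendix~\ref{app:approxCDT}.
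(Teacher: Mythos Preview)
Your proposal is correct and takes essentially the same approach as the paper: both reduce the loss-maximization to a single CDT instance by parametrizing linear functions by their weight vectors, identifying the version-space and norm constraints as the two quadratic (ellipsoid) constraints with the unit-ball constraint being strictly convex, and the negated weighted squared loss as the quadratic objective. Your change of variables $v=w-u$ and explicit Gram-matrix notation are cosmetic; the paper works directly in $w$, but the two formulations are trivially equivalent, and your extra remarks on nonemptiness, compactness, and the rank-deficient $M$ case are welcome but not points of divergence.
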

\section{Conclusions}
We have shown how transductive abstention can lead to algorithms with  robustness to arbitrary covariate shift and adversarial test examples. A simple min-max approach is given to minimize the worst-case loss and is show how to efficiently implement this approach in certain cases such as linear regression. The min-max approach enjoys both optimal guarantees in terms of test loss and essentially no cost for not knowing the train and test distributions $P$ and $Q$ over $X$. 

In terms of future work, it would be interesting to generalize the abstention model to online and possibly agnostic settings, or at least understand the limits. GKKM showed a lower bound of  $\Omega(\sqrt{\OPT})$ for one agnostic model, though it is not clear what the right agnostic model is. Finally, it would be interesting to see if their are other applications of our min-max approach of formulating a learning problem as an optimization with no unknowns. It enables one to directly achieve upper bounds as good as if one knows the unknowns.

\section{Limitations and Societal Impact}

Our primary contributions in this paper our mathematical; the contributions, limitations and open problems in mathematical terms have already been discussed in the body of the paper. 

Practical machine learning methods may be designed inspired by the algorithms
in this paper. When used appropriately, we expect the societal impact of these
methods to be largely positive. While in many cases, abstention on out of
distribution (OOD) examples may be desirable compared to making an incorrect
prediction, it is important that this approach not be used as an excuse to not
collect representative data. Potential harm may be done to certain groups by
having different rates of abstention among different groups, different rates of
misclassification, and potential de-anonymization by revealing examples where a
classifier abstained to a human.

\bibliography{refs}

\begin{thebibliography}{21}
\providecommand{\natexlab}[1]{#1}
\providecommand{\url}[1]{\texttt{#1}}
\expandafter\ifx\csname urlstyle\endcsname\relax
  \providecommand{\doi}[1]{doi: #1}\else
  \providecommand{\doi}{doi: \begingroup \urlstyle{rm}\Url}\fi

\bibitem[Arnold et~al.(2007)Arnold, Nallapati, and
  Cohen]{arnold2007comparative}
Andrew Arnold, Ramesh Nallapati, and William~W Cohen.
\newblock A comparative study of methods for transductive transfer learning.
\newblock In \emph{Seventh IEEE international conference on data mining
  workshops (ICDMW 2007)}, pages 77--82. IEEE, 2007.

\bibitem[Bartlett and Wegkamp(2008)]{bartlett2008classification}
Peter~L Bartlett and Marten~H Wegkamp.
\newblock Classification with a reject option using a hinge loss.
\newblock \emph{Journal of Machine Learning Research}, 9\penalty0 (8), 2008.

\bibitem[Bienstock(2016)]{bienstock2016note}
Daniel Bienstock.
\newblock A note on polynomial solvability of the {CDT} problem.
\newblock \emph{SIAM Journal on Optimization}, 26\penalty0 (1):\penalty0
  488--498, 2016.

\bibitem[Bousquet and Zhivotovskiy(2021)]{BZ21}
Olivier Bousquet and Nikita Zhivotovskiy.
\newblock Fast classification rates without standard margin assumptions.
\newblock \emph{Information and Inference: A Journal of the IMA}, 04 2021.
\newblock ISSN 2049-8772.
\newblock \doi{10.1093/imaiai/iaab010}.
\newblock URL \url{https://doi.org/10.1093/imaiai/iaab010}.

\bibitem[Celis et~al.(1985)Celis, Dennis, and Tapia]{celis1985trust}
M.~R. Celis, J.~E. Dennis, Jr., and R.~A. Tapia.
\newblock A trust region strategy for nonlinear equality constrained
  optimization.
\newblock \emph{Numerical Optimization 1984}, pages 71--82, 1985.

\bibitem[Chow(1957)]{chow1957optimum}
Chi-Keung Chow.
\newblock An optimum character recognition system using decision functions.
\newblock \emph{IRE Transactions on Electronic Computers}, 4:\penalty0
  247--254, 1957.

\bibitem[Cortes et~al.(2010)Cortes, Mansour, and Mohri]{cortes2010learning}
Corinna Cortes, Yishay Mansour, and Mehryar Mohri.
\newblock Learning bounds for importance weighting.
\newblock In \emph{Advances in Neural Information Processing Systems},
  volume~23, pages 442--450, 2010.

\bibitem[Cortes et~al.(2016)Cortes, DeSalvo, and Mohri]{cortes2016learning}
Corinna Cortes, Giulia DeSalvo, and Mehryar Mohri.
\newblock Learning with rejection.
\newblock In \emph{International Conference on Algorithmic Learning Theory},
  pages 67--82. Springer, 2016.

\bibitem[Fang et~al.(2020)Fang, Zhang, Xie, Lin, Ying, Pang, and
  Ji]{Fangetal2020}
Yicheng Fang, Huangqi Zhang, Jicheng Xie, Minjie Lin, Lingjun Ying, Peipei
  Pang, and Wenbin Ji.
\newblock Sensitivity of chest ct for covid-19: Comparison to rt-pcr.
\newblock \emph{Radiology}, 296\penalty0 (2):\penalty0 E115--E117, 2020.
\newblock \doi{10.1148/radiol.2020200432}.
\newblock PMID: 32073353.

\bibitem[Goldwasser et~al.(2020)Goldwasser, Kalai, Kalai, and
  Montasser]{GKKM:2020}
Shafi Goldwasser, Adam~Tauman Kalai, Yael~Tauman Kalai, and Omar Montasser.
\newblock Selective classification algorithms provably robust to arbitrary
  adversarial examples.
\newblock In \emph{Advances in Neural Information Processing Systems}, 2020.
\newblock URL \url{https://arxiv.org/abs/2007.05145}.

\bibitem[Goodfellow et~al.(2015)Goodfellow, Shlens, and
  Szegedy]{goodfellowICLR15}
Ian~J. Goodfellow, Jonathon Shlens, and Christian Szegedy.
\newblock Explaining and harnessing adversarial examples.
\newblock In \emph{Proceedings of the 3rd International Conference on Learning
  Representations, {ICLR}}, 2015.
\newblock URL \url{http://arxiv.org/abs/1412.6572}.

\bibitem[Gr{\"o}tschel et~al.(2012)Gr{\"o}tschel, Lov{\'a}sz, and
  Schrijver]{gls}
Martin Gr{\"o}tschel, L{\'a}szl{\'o} Lov{\'a}sz, and Alexander Schrijver.
\newblock \emph{Geometric algorithms and combinatorial optimization}, volume~2.
\newblock Springer Science \& Business Media, 2012.

\bibitem[Li et~al.(2011)Li, Littman, Walsh, and Strehl]{li2011knows}
Lihong Li, Michael~L Littman, Thomas~J Walsh, and Alexander~L Strehl.
\newblock Knows what it knows: a framework for self-aware learning.
\newblock \emph{Machine learning}, 82\penalty0 (3):\penalty0 399--443, 2011.

\bibitem[Mansour et~al.(2009)Mansour, Mohri, and Rostamizadeh]{MansourMR09}
Yishay Mansour, Mehryar Mohri, and Afshin Rostamizadeh.
\newblock Domain adaptation: Learning bounds and algorithms.
\newblock In \emph{{COLT} 2009 - The 22nd Conference on Learning Theory,
  Montreal, Quebec, Canada, June 18-21, 2009}, 2009.
\newblock URL
  \url{http://www.cs.mcgill.ca/\%7Ecolt2009/papers/003.pdf\#page=1}.

\bibitem[Mohri et~al.(2018)Mohri, Rostamizadeh, and
  Talwalkar]{mohri2018foundations}
Mehryar Mohri, Afshin Rostamizadeh, and Ameet Talwalkar.
\newblock \emph{Foundations of machine learning}.
\newblock MIT press, 2018.

\bibitem[Quionero-Candela et~al.(2009)Quionero-Candela, Sugiyama, Schwaighofer,
  and Lawrence]{quionero2009dataset}
Joaquin Quionero-Candela, Masashi Sugiyama, Anton Schwaighofer, and Neil~D
  Lawrence.
\newblock \emph{Dataset shift in machine learning}.
\newblock The MIT Press, 2009.

\bibitem[Rivest and Sloan(1988)]{RS88b}
Ronald~L. Rivest and Robert~H. Sloan.
\newblock Learning complicated concepts reliably and usefully (extended
  abstract).
\newblock In Tom Mitchell and Reid Smith, editors, \emph{Proceedings AAAI-88},
  pages 635--640. AAAI, 1988.

\bibitem[Sayedi et~al.(2010)Sayedi, Zadimoghaddam, and Blum]{sayedi2010trading}
Amin Sayedi, Morteza Zadimoghaddam, and Avrim Blum.
\newblock Trading off mistakes and don't-know predictions.
\newblock In \emph{Advances in Neural Information Processing Systems}, pages
  2092--2100, 2010.

\bibitem[Szegedy et~al.(2014)Szegedy, Zaremba, Sutskever, Bruna, Erhan,
  Goodfellow, and Fergus]{szegedy2013intriguing}
Christian Szegedy, Wojciech Zaremba, Ilya Sutskever, Joan Bruna, Dumitru Erhan,
  Ian Goodfellow, and Rob Fergus.
\newblock Intriguing properties of neural networks.
\newblock In \emph{2nd International Conference on Learning Representations,
  {ICLR} 2014, Conference Track Proceedings}, 2014.

\bibitem[Vapnik(1998)]{Vapnik1998}
Vladimir~N. Vapnik.
\newblock \emph{Statistical Learning Theory}.
\newblock Wiley-Interscience, 1998.

\bibitem[Yuan et~al.(2019)Yuan, Tang, Liao, Wang, Feng, Chen, Sun, Lu, and
  Zhang]{Yuan2019StealthyPU}
Kan Yuan, Di~Tang, Xiaojing Liao, Xiaofeng Wang, Xuan Feng, Yi~Chen, Menghan
  Sun, Haoran Lu, and Kehuan Zhang.
\newblock Stealthy porn: Understanding real-world adversarial images for
  illicit online promotion.
\newblock \emph{2019 IEEE Symposium on Security and Privacy (SP)}, pages
  952--966, 2019.

\end{thebibliography}
\bibliographystyle{plainnat}

\newpage
\appendix

\section{Summary of notation}\label{sec:nonation}
{    \renewcommand{\arraystretch}{1.8}
\begin{tabular}{r p{4.5in}}
\toprule
    $X, Y$ & sets of possible examples and labels, resp. \\
$F$ & family of functions $f: X \rightarrow Y$ \\
    $f \in F$ & unknown target function to be learned \\
    $\bxtrain\in X^n, \bytrain \in Y^n$ & $n$ training examples and labels ($\bytrain  = f(\bxtrain)$ for classification)\\
    $\bx\in X^n$ & $n$ (unlabeled) test examples\\
    $\indices \subseteq [n]$ & subset of test examples to abstain on \\
    $P, Q$ & train and test distributions over $X$\\
    $\nu$ & training distribution for regression over $X \times Y$ with marginal $P$ over $X$ and $f(x) \defeq \E{(\xtrain,\ytrain) \sim \nu}{\ytrain|\xtrain} \in F$\\
    $\cost$ & cost of abstaining instead of predicting \\
    $\loss(y, \hat{y})$ & non-negative base loss function $\ell: Y^2\rightarrow \reals_+$\\
    $\loss_\bx(f, h)$ & test loss $\frac{1}{n} \sum_{i} \ell(f(x_i), h(x_i))$ \\
    $\LOSS_\bx(V, h)$ & worst possible loss $\max_{g \in V}\loss_\bx(g, h)$ over $g \in V$\\
    $\loss_\bx(f, h, A)$ & test loss $\frac{c}{n}|A| + \frac{c}{n} \sum_{i\notin A} \ell(f(x_i), h(x_i))$\\
    $\LOSS_\bx(V, h, A)$ & worst possible loss $\max_{g \in V}\loss_\bx(g, h, A)$ over $g \in V$\\
    $\loss_\bx(f, h, \ba)$ & test loss $\frac{1}{n}\sum_i c\,a_i + (1-a_i) \ell(f(x_i), h(x_i))$\\
    $\LOSS_\bx(V, h, \ba)$ & worst possible loss $\max_{g \in V}\loss_\bx(g, h, \ba)$ over $g \in V$\\
    $\VS(\bxtrain, \bytrain) \subseteq F$ & classification version space $\{g \in F: g(\bxtrain) = \bytrain\}$\\
    $\VS_\alpha(\bx, h)$ & regression version space $\{ g \in F ~|~ \ell_{\bx}(g, h) \leq \alpha \}$ (for $\alpha >0$)\\
    $\ERM(\bx, \by)$ & deterministic oracle to $\argmin_{h \in F} \sum_i \ell(y_i,h(x_i))$ for arbitrary $\bx, \by$\\
    $h = \ERM(\bxtrain, \bytrain)$ & predictor learned based on the labeled training data\\
    \bottomrule
\end{tabular}
}

\section{From transduction to generalization}\label{sec:gen}
One can trivially convert from transductive learning to generalization. In particular, given a transductive learner that classifies sets of $n$ test examples, and given $n-1$ test examples, one can output an ordinary classifier $h: X \rightarrow Y$ such that $h(x)$ is simply the transductive learner's label on $x$ when $x$ is added to the $n-1$ examples. The expected errors are the same, as long as the examples are permuted randomly before running the transductive learner. In this section, we point out that this works for abstention as well.

For this section, suppose we have a fixed test distribution $\dtest$ over $X$, fixed functions $f, h : X \rightarrow Y$ with loss $\loss: Y \times Y \rightarrow \reals$, and an abstention algorithm that outputs a subset of test examples indices, i.e., $\indices(\bx) \subseteq [n]$ for each test set $\bx \in X^n$. For this section we will ignore any auxiliary inputs it takes which are chosen independently from $\bx$, such as $h$, the labeled training examples, and a version space.

For any distribution $\dtest$ over $X$, $S \subseteq X$, and $f, h: X \rightarrow Y$ we define the expected loss of $h,S$ with respect to target $f$ as, 
$$\Loss_Q(f, h, S) \defeq \cost \dtest(S) + \E{x \sim \dtest}{\ind{x \notin S} \loss(f(x), h(x))},$$
where $\dtest(S) \defeq \P{x \sim \dtest}{x \in S}$. 
\begin{lemma}\label{lem:trans2gen}
For any $f, h: X \rightarrow Y$, distribution $\dtest$ over $X$ any transductive abstainer $\babfrac: X^n \rightarrow [0,1]^n$, 
$$\E{\bx \sim \dtest^n, i \in [n]}{\Loss_Q\bigl(f, h, \alpha_i\bigr)
} = \E{\bx \sim \dtest^n}{\Loss_\bx(f, h, \babfrac(\bx))},
$$ 
where the expectation is over uniformly random $i$ and $\alpha_i(x) \defeq \abfrac_i(x_1,\ldots x_{i-1},x,x_{i+1}  \ldots, x_n)$.
\end{lemma}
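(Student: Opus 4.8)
The plan is to unpack both sides of the claimed identity and show they are term-by-term equal after averaging over the random index $i$. Recall the definitions: $\Loss_\bx(f, h, \babfrac(\bx)) = \frac{1}{n}\sum_{j=1}^n \bigl(\abfrac_j(\bx)\,\cost + (1-\abfrac_j(\bx))\,\loss(f(x_j), h(x_j))\bigr)$, while $\Loss_Q(f, h, \alpha_i)$ for the \emph{function} $\alpha_i: X \to [0,1]$ should be read as $\E{x \sim \dtest}{\alpha_i(x)\,\cost + (1 - \alpha_i(x))\,\loss(f(x), h(x))}$ (the ``abstainer'' form of the loss from the introduction, consistent with how $\Loss_Q(f,h,\alpha)$ is used in Theorem~\ref{thm:pq-main-gen}), where $\alpha_i(x) = \abfrac_i(x_1, \dots, x_{i-1}, x, x_{i+1}, \dots, x_n)$.

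First I would take the expectation of the right-hand side over $\bx \sim \dtest^n$ and use linearity to write it as $\frac{1}{n}\sum_{j=1}^n \E{\bx \sim \dtest^n}{\abfrac_j(\bx)\,\cost + (1-\abfrac_j(\bx))\,\loss(f(x_j), h(x_j))}$. The key observation is a symmetry/relabeling step: since the coordinates of $\bx$ are i.i.d.\ from $\dtest$, for each fixed $j$ the $j$-th coordinate $x_j$ plays the role of a ``fresh'' sample $x \sim \dtest$ while the remaining $n-1$ coordinates are an independent draw from $\dtest^{n-1}$; renaming $x_j \rightsquigarrow x$ and the other coordinates appropriately shows $\E{\bx \sim \dtest^n}{\abfrac_j(\bx)\,\cost + (1-\abfrac_j(\bx))\,\loss(f(x_j), h(x_j))}$ equals $\E{\bx \sim \dtest^n,\, x \sim \dtest}{\alpha_j(x)\,\cost + (1 - \alpha_j(x))\,\loss(f(x), h(x))}$, where now $\bx$ supplies the ``other'' coordinates feeding into $\alpha_j$ and $x$ is the held-out argument. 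Pulling the inner $x$-expectation in, this is exactly $\E{\bx \sim \dtest^n}{\Loss_Q(f, h, \alpha_j)}$.

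Then I would average over $j$: $\frac{1}{n}\sum_{j=1}^n \E{\bx \sim \dtest^n}{\Loss_Q(f, h, \alpha_j)} = \E{\bx \sim \dtest^n,\, i \in [n]}{\Loss_Q(f, h, \alpha_i)}$ by definition of the uniform expectation over $i$, which is the left-hand side. This closes the argument. I would also remark, as the surrounding text does, that the reduction from transduction to generalization here requires the test set to be permuted uniformly at random before being fed to $\babfrac$ (equivalently, the statement bakes this in by averaging over $i$), so that no coordinate is privileged.

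The main obstacle is mostly bookkeeping rather than mathematics: being careful that $\Loss_Q(f, h, \alpha_i)$ is the ``abstainer-valued'' loss (with $\alpha_i$ a function $X \to [0,1]$) and not the set-valued or vector-valued loss, and that the relabeling of i.i.d.\ coordinates is applied to the correct slot of $\babfrac$. There is no measure-theoretic subtlety since $X$ is assumed countable, and no concentration or generalization bound is invoked — it is a pure identity in expectation. The one place to state explicitly is the exchangeability step, i.e.\ that for i.i.d.\ coordinates the joint law of $(x_j; x_1,\dots,x_{j-1},x_{j+1},\dots,x_n)$ is the same as the joint law of $(x; x_1,\dots,x_{n-1})$ for independent $x \sim \dtest$ and $(x_1,\dots,x_{n-1}) \sim \dtest^{n-1}$, which justifies the renaming inside the expectation.
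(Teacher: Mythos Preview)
Your proposal is correct and is exactly the paper's approach: the paper's entire proof is the one-line remark that the identity follows from linearity of expectation together with the fact that $(x_1,\ldots,x_{i-1},x,x_{i+1},\ldots,x_n)\sim \dtest^n$, which is precisely the exchangeability/relabeling step you spell out. Your write-up simply unpacks that sentence in more detail.
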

\begin{proof}
The proof follows from linearity of expectation, the fact that $(x_1, \ldots x_{i-1}, x, x_{i+1}, \ldots, x_n)\sim \dtest^n$.
\end{proof}
Since $\Loss_Q$ is the generalization loss on future examples $x \sim \dtest$, this means that one can match the expected loss of any transductive abstainer on future examples drawn from $\dtest$.

\section{Comparison to GKKM's PQ-learning}\label{sec:PQcomparison}
The ``PQ learning'' binary classification model of GKKM is different than Chow's model of learning with a fixed cost $\alpha$ of abstention. In this section, we point out how our algorithm can be used to achieve PQ learning bounds (in expectation) that are similar to those of GKKM, and show how to use their PQ learning algorithm to achieve meaningful but suboptimal loss bounds in our setting of a rejection cost.
This section follows GKKM and only considers binary classification with $Y=\{0,1\}$ and $\ell(y, \hat{y})=|y-\hat{y}|$. 

PQ learning, rather than having a single loss with a fixed cost $\alpha$ for abstentions, considers two separate rates which we denote here by $\eps_1$ and $\eps_2$:
\begin{align*}
\eps_1 = \eps_1(Q, h, f, A) &\defeq \Pr_{x \sim Q}[h(x)\neq f(x) \wedge x \notin A]\\
\eps_2 = \eps_2(P, A) &\defeq \Pr_{x \sim P}[x \in A]
\end{align*}
The first is the misclassification rate on future test examples from $Q$  (that are not abstained on), and the second (which they call false rejection rate) is the fraction of future examples \textbf{from $P$} that are abstained on. Their Theorem 5.2 bounds $\eps_1+\eps_2 \leq \tilde{O}(\sqrt{d/n})$ and their Theorem 5.4 shows $\eps_1+\eps_2 \geq \Omega(\sqrt{d/n})$ in the worst case. Hence, in their model, there is a necessary  additional cost to abstaining over the $\tilde{O}(d/n)$ rates common in noiseless learning of a $F$ of VC dimension $d$. 

First note that their bound directly implies a bound on our loss, defined again as:
$$\ell_Q(f, h, A)\defeq c \Pr_{x \sim Q}[x \in A] + \Pr_{x \sim Q}[h(x) \neq f(x) \wedge x \notin A].$$
Their $\eps_1+\eps_2 \leq \tilde{O}(\sqrt{d/n})$ bound gives:
$$\ell_Q(f, h, A) \leq c\left(\eps_2 + |P-Q|_\TV\right) + \eps_1 \leq c |P-Q|_\TV + \tilde{O}\left(\sqrt{d/n}\right).$$
This is because the probability of abstaining under $Q$ is at most the probability of abstaining under $P$ plus $|P-Q|_\TV$. While their Theorem A.5 gives a trade-off between $\eps_1$ and $\eps_2$, it only holds for $\eps_1 \geq \sqrt{d/n}$ which does not improve the above bound. Varying our parameter $\alpha$ is analogous to their trade-off. 

Now, we bound $\E[\eps_1 +\eps_2] \leq \tilde{O}(\sqrt{d/n})$ using our model. To do so, consider the distribution $Q'=(1-\lambda)P+\lambda Q$ for $\lambda = \sqrt{d/n}<1/2$ supposing $n>4d$. Use the $n$ labeled $P$-samples and $n$ unlabeled $Q$-samples to create a labeled training set of $n/2$ $P$-examples and a synthetic test set of $n/2$ unlabeled $Q'$-samples by, for each test example flipping a $\lambda$-biased coin to determine whether it should be from $P$ or $Q$. From the definitions $Q'$ and $\ell_{Q'}$, we also have,
$$\ell_{Q'}(f, h, A) \geq (1-\lambda) c  \eps_2(P, A) + \lambda\eps_1(Q, h, f, A).$$
Now, let us choose $\alpha$ so $\lambda = (1-\lambda)c = \sqrt{d/n}$. 
Rearranging the above gives,
$$\eps_1 + \eps_2 \leq \frac{1}{\lambda} \ell_{Q'}(f, h, A).$$
Suppose one finds $h, A$ such that,
\begin{equation}\label{eq:gocolt}
\ell_{Q'}(f, h, A) \leq c |P-Q'|_\TV + \gamma,
\end{equation}
for some $\gamma$. Combining with the fact that
$|P-Q'|_\TV\leq \lambda$, gives:
$$\ell_{Q'}(f, h, A) \leq c \lambda + \gamma.$$
Putting this together with the chosen values of $\alpha$ from $\lambda = (1-\lambda)c = \sqrt{d/n}$, gives,
$$\eps_1 + \eps_2 \leq c + \frac{\gamma}{\lambda} \leq 
\frac{\sqrt{d/n}}{1-\sqrt{d/n}} + \frac{\gamma}{\lambda} \leq \tilde{O}(\sqrt{d/n}) + \frac{\gamma}{\lambda}.$$
Our Theorem \ref{thm:pq-main-gen} shows that our algorithm will output $h, A$ satisfying (\ref{eq:gocolt}, in expectation) for $\gamma=\tilde{O}(d/n)$. This implies $\E[\eps_1+\eps_2]\leq \tilde{O}(\sqrt{d/n})$, similar to GKKM. High probability bounds can be achieved by Markov's inequality.

\paragraph{Intuition.} The heart of the difference between the two models becomes clear in regions where $P$ and $Q$ overlap, e.g., where $P(x)=\sqrt{d/n}Q(x)$. On such examples, $PQ$-learning either abstains and suffers a $\sqrt{d/n}$ false rejection rate or classifies and may suffer $\sqrt{d/n}$ error rate. In Chow's model with a cost for abstention, the learner can selectively abstain without having to distinguish which examples are from $P$ versus $Q$ which is impossible. See the lower-bound of GKKM for further details.

\section{Transductive bounds for binary classification with $\dtest=\dtrain$}\label{ap:gen-bounds-classification}
In this section, we bound the expected worst-case test error for binary classification, given training and test sets of size $n$. This is a standard step used in proving generalization bounds, but we give the analysis for completeness. Recall that $L$ is defined in Eq.~(\ref{eq:L}).
\begin{lemma}\label{lem:gen-bounds-classification}
For any $F$ of VC dimension $d$, any $f \in F$, and any distribution $\dtrain$ over $X$, and any $\delta\in (0,1]$,
\begin{align}
\label{eq:genhp-f}
\P{\bx, \bxalt \sim \dtrain^n}{\LOSS_{\bxalt}(\VS(\bx, f(\bx)), f) <  \frac{d\lg 2n + \lg 1/\delta}{n}} &\geq 1-\delta\\
\label{eq:genhp}
\P{\bx, \bxalt \sim \dtrain^n}{\max_{h \in \VS(\bx, f(\bx))}\LOSS_{\bxalt}(\VS(\bx, f(\bx)), h) <  \frac{2d \lg 2n +  \lg 1/2\delta}{n}} &\geq 1-\delta\\
\E{\bx, \bxalt \sim \dtrain^n}{\max_{h \in \VS(\bx, f(\bx))}\LOSS_{\bxalt}(\VS(\bx, f(\bx)), h)} &\leq \frac{2d \lg 2n}{n}.\label{eq:genexp}
\end{align}
\end{lemma}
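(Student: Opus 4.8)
The plan is to prove all three parts by the classical double-sampling (symmetrization) argument, using that $f$ is realizable. The starting point is the standard coupling: $(\bx,\bxalt)\sim\dtrain^n\times\dtrain^n$ has the same law as drawing $2n$ i.i.d.\ points $z_1,\dots,z_{2n}\sim\dtrain$ and then, independently for each $i\in[n]$, swapping $z_i$ with $z_{n+i}$ with probability $1/2$, taking the first half as $\bx$ and the second as $\bxalt$. Thus it suffices to bound each probability conditionally on the multiset $\{z_1,\dots,z_{2n}\}$, the only remaining randomness being the $n$ independent fair coins; and since the quantities $\LOSS_{\bxalt}(\cdot,\cdot)$ depend only on how functions label those $2n$ points, Sauer--Shelah lets us restrict attention to the at most $(2n)^d$ distinct labelings $F$ induces on them.

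For (\ref{eq:genhp-f}): condition on the $2n$ points and fix $g\in F$; call $z_j$ \emph{bad} for $g$ if $g(z_j)\neq f(z_j)$. If $g\in\VS(\bx,f(\bx))$ then no bad point lies in $\bx$, so on a matched pair $\{z_i,z_{n+i}\}$ with exactly one bad point the coin is forced to send that point to $\bxalt$ (probability $1/2$, independently over such pairs), while a pair with two bad points makes the event impossible. Hence, with $r$ the number of pairs having exactly one bad point, the event ``$g\in\VS(\bx,f(\bx))$ and $g$ differs from $f$ on $\ge k$ points of $\bxalt$'' has conditional probability $2^{-r}\ind{r\ge k}\le 2^{-k}$. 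A union bound over the $\le(2n)^d$ labelings gives $\P{\LOSS_{\bxalt}(\VS(\bx,f(\bx)),f)\ge k/n}\le(2n)^d2^{-k}$, and $k=d\lg 2n+\lg(1/\delta)$ yields (\ref{eq:genhp-f}).

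For (\ref{eq:genhp}): since $\loss_\bx$ is symmetric in its two function arguments, $\max_{h\in\VS(\bx,f(\bx))}\LOSS_{\bxalt}(\VS(\bx,f(\bx)),h)$ equals the maximum of $\loss_{\bxalt}(g,h)$ over \emph{unordered} pairs $\{g,h\}\subseteq\VS(\bx,f(\bx))$. Rerun the argument above with ``bad point'' replaced by ``point where $g$ and $h$ disagree'': if $g,h\in\VS(\bx,f(\bx))$ then $g=h=f$ on $\bx$, so every disagreement point of $g$ and $h$ must land in $\bxalt$, and the identical coin-forcing reasoning bounds, for each fixed pair, the conditional probability that both are in the version space and disagree on $\ge k$ points of $\bxalt$ by $2^{-k}$. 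A union bound over the $\le\binom{(2n)^d}{2}\le\tfrac12(2n)^{2d}$ unordered pairs of labelings gives $\tfrac12(2n)^{2d}2^{-k}$, and $k=2d\lg 2n+\lg(1/2\delta)$ yields (\ref{eq:genhp}).

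For (\ref{eq:genexp}): let $W\defeq\max_{h\in\VS(\bx,f(\bx))}\LOSS_{\bxalt}(\VS(\bx,f(\bx)),h)$, which is $\{0,1/n,\dots,1\}$-valued; the pair argument above in fact shows $\P{W\ge j/n}\le\tfrac12(2n)^{2d}2^{-j}$ for every integer $j\ge1$, so $\E{W}=\tfrac1n\sum_{j\ge1}\P{nW\ge j}\le\tfrac1n\sum_{j\ge1}\min\!\big(1,\tfrac12(2n)^{2d}2^{-j}\big)$. Summing the ``$1$''-terms and the geometric tail separately --- the two branches cross near $j=2d\lg 2n$, and $2^\theta\le 1+\theta$ on $[0,1]$ absorbs the fractional part --- bounds the sum by $2d\lg 2n$, giving (\ref{eq:genexp}). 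I expect the only real work here to be this constant bookkeeping (the exact Sauer--Shelah form, the $\binom{\cdot}{2}$ supplying the $\tfrac12$ inside the logarithm, and shaving the additive $O(1/n)$); the conceptual content --- the swap coupling and the fact that version-space membership forces every discrepancy point into the ghost sample $\bxalt$ --- is routine.
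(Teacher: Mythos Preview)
Your proposal is correct and follows essentially the same approach as the paper: symmetrization over the combined $2n$-sample, Sauer--Shelah to bound the number of relevant labelings by $N=(2n)^d$, a $2^{-k}$ bound on the probability that all disagreement points land in $\bxalt$, a union bound (over single functions for (\ref{eq:genhp-f}) and over $\binom{N}{2}$ unordered pairs for (\ref{eq:genhp})), and then the tail-sum computation with the $2^\gamma-\gamma\le 1$ trick for (\ref{eq:genexp}). The only cosmetic difference is that you use the swap-within-pairs coupling while the paper uses the full-permutation version of symmetrization; both yield the same $2^{-k}$ per-function bound and identical constants.
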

\begin{proof}
We first establish Eq.~(\ref{eq:genhp-f}).
By Sauer's Lemma, there are at most $N=(2n)^d$ labelings of the $2n$ examples $(\bx, \bxalt) \in X^{2n}$.
For any $k \geq 1$, we claim:
$$\P{\bx, \bxalt \sim \dtrain^n}{\max_{g \in \VS(\bx, f(\bx))}\sum_i |g(\xalt_i)-f(\xalt_i)| \geq k} \leq N2^{-k}.$$
To see the above, as is standard, one can imagine permuting these $2n$ examples without changing their joint distribution. For each of the $< N$ labelings with more than $k$ disagreements $g \neq f$, the probability that all of these disagreements are in $\bxalt$ is at most $2^{-k}$. By the union bound, the probability that any $g$ have all disagreements in $\bxalt$ is $\leq N 2^{-k}$, which is at most $\delta$ for $k\geq d\lg 2n + \lg \frac{1}{\delta}$ as in Eq.~(\ref{eq:genhp}) of the Lemma.

Similarly, for Eq.~(\ref{eq:genhp}), by the union bound over the ${N \choose 2}$ pairs of classifiers,
$$\P{\bx, \bxalt \sim \dtrain^n}{\max_{g, h \in \VS(\bx, f(\bx))}\sum_i |g(\xalt_i)-h(\xalt_i)| \geq k} \leq {N \choose 2} 2^{-k}.$$
By the union bound, the probability that any pair have all disagreements in $\bxalt$ is $\leq N^2 2^{-k-1}$, which is at most $\delta$ for $k\geq 2d\lg 2n + \lg \frac{1}{2\delta}$ as in Eq.~(\ref{eq:genhp}) of the Lemma.

For Eq.~(\ref{eq:genexp}), note that for non-negative integer random variable $W$, $\E[W] = \sum_{k=1}^\infty \P{W\geq i}$, so,
\begin{align*}
\E{\bx, \bxalt \sim \dtrain^n}{\max_{g, h \in \VS}\sum_i |g(\xalt_i)-h(\xalt_i)|} &\leq \sum_{k=1}^\infty \P{\bx, \bxalt \sim \dtrain^n}{\max_{g, h \in \VS}\sum_i |g(\xalt_i)-h(\xalt_i)| \geq k}\\
&\leq \sum_{k=1}^\infty \min\left(1,  {N \choose 2} 2^{-k}\right).\\
\end{align*}
Letting $K= \left\lfloor\lg {N \choose 2}\right\rfloor = \lg {N \choose 2} - \gamma$ for $\gamma = \lg {N \choose 2} \in [0,1)$, the last quantity above is, 
$$K + \sum_{k=K+1}^{\infty} {N \choose 2} 2^{-k} = K  + {N \choose 2}2^{-K} =  \lg {N \choose 2} - \gamma + \lg {N \choose 2}2^{-\lg {N \choose 2} + \gamma} =\lg {N \choose 2} - \gamma + 2^{\gamma}.
$$
Using the fact that $2^\gamma - \gamma \leq 1$ for $\gamma \in [0,1]$, this gives a bound of at most $1+\lg {N \choose 2}=\lg 2 {N \choose 2} \leq \lg N^2$ which implies Eq.~(\ref{eq:genexp}) of the Lemma since $N=(2n)^d$.

\end{proof}

\section{High-probability lower bound}

In some scenarios, one's goal is minimize expected cost, such as when one regularly trains classifiers and costs are additive. 
In other prediction scenarios, it is desirable to have high probability guarantees on the accuracy of one's classifier. As noted, Theorem \ref{thm:pq-main} states bounds on \textit{expected} loss rather than more standard high probability bounds. Unfortunately, the following lemma shows that this is for good reason.
\begin{lemma}\label{lem:lower}
Let $X=\{1, 2\},$ and $F$ be consist of all 4 binary functions on $X$. Fix $P(1)=1$ and $Q(1)=1/2$, so $P$ is concentrated on $1$ and $Q$ is uniform. Finally let $\mu$ be the uniform distribution over $F$. There is a constant $\kappa>0$ such that, for any $c \in (0, 1/2), n \geq 4$, for any transductive selective classification algorithm $L$ selecting $(h, \ba) \defeq L(\bxtrain, f(\bxtrain), \bx)$:
$$\P{f \sim \mu, \bxtrain \sim P^n, \bx\sim Q^n}{\ell_\bx(f, h, \ba) \geq c|P-Q|_\TV + \frac{c}{\sqrt{n}}}\geq \kappa.$$
\end{lemma}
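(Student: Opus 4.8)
The plan is to exploit the fact that the training set carries no information about $f(2)$, while the test set contains, with constant probability, noticeably more than half of its points equal to $2$; on those points the algorithm is forced either to abstain (paying $\cost$ per point) or to guess $f(2)$ (and be wrong half the time), and when there are $\ge n/2+\sqrt n$ of them this already costs $\ge \cost/2 + \cost/\sqrt n$.

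First I would fix the information structure. Since $P$ is the point mass at $1$, $\bxtrain=(1,\dots,1)$ with probability $1$, so the algorithm's input $(\bxtrain,f(\bxtrain),\bx)$ is a (randomized) function of $f(1)$ and $\bx$ only. Because $f\sim\mu$ makes $f(1)$ and $f(2)$ independent fair coins, independent of $\bx\sim Q^n$ and of the algorithm's coins, the output $(h,\ba)$ — in particular the value $h(2)$ — is independent of $f(2)$; hence $\Pr[h(2)\neq f(2)\mid \bx]=\tfrac12$ for every fixed test set $\bx$, no matter how $L$ picks $h(2)$. Also record that $|P-Q|_\TV=\tfrac12$ here. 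Next I would do the loss bookkeeping conditioned on $\bx$: writing $m=|\{i:x_i=2\}|$ and dropping the nonnegative contributions of the test points equal to $1$,
\[
\ell_\bx(f,h,\ba)\ \ge\ \frac1n\sum_{i:\,x_i=2}\bigl(a_i\cost+(1-a_i)\,|f(2)-h(2)|\bigr).
\]
On the event $\{h(2)\neq f(2)\}$ each summand equals $1-a_i(1-\cost)\ge \cost$ (using $a_i\in[0,1]$ and $0<\cost<\tfrac12$), so there $\ell_\bx\ge m\cost/n$; therefore on $\{h(2)\neq f(2)\}\cap\{m\ge n/2+\sqrt n\}$ we get $\ell_\bx\ \ge\ \tfrac{\cost}{2}+\tfrac{\cost}{\sqrt n}\ =\ \cost\,|P-Q|_\TV+\tfrac{\cost}{\sqrt n}$.

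Finally I would combine the two pieces. The event $\{m\ge n/2+\sqrt n\}$ depends on $\bx$ only, and conditioned on it $\Pr[h(2)\neq f(2)]=\tfrac12$, so
\[
\Pr\Bigl[\ell_\bx(f,h,\ba)\ \ge\ \cost\,|P-Q|_\TV+\tfrac{\cost}{\sqrt n}\Bigr]\ \ge\ \tfrac12\,\Pr_{\bx\sim Q^n}\bigl[m\ge n/2+\sqrt n\bigr],
\]
where $m\sim\mathrm{Bin}(n,\tfrac12)$. It remains to show $\Pr_{\bx\sim Q^n}[m\ge n/2+\sqrt n]\ge\kappa_1$ for a universal constant $\kappa_1>0$ and all $n\ge 4$ (note that $n\ge4$ is exactly the condition making $n/2+\sqrt n\le n$, so the event is nonempty); then $\kappa=\kappa_1/2$ works. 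I expect this anti-concentration bound to be the only real obstacle, and a mild one: for large $n$ it follows from the de Moivre–Laplace / Berry–Esseen estimate $\Pr[m\ge n/2+\sqrt n]\to 1-\Phi(2)>0.02$, and for the remaining finitely many small $n$ one checks directly that the probability is strictly positive; alternatively one can cite a standard reverse-Chernoff bound for the binomial. Pinning down a clean constant uniform in $n$, rather than merely an asymptotic one, is the part requiring a little care; everything else is the bookkeeping above.
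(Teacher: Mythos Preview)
Your proposal is correct and follows essentially the same approach as the paper: both arguments use that $f(2)$ is independent of the algorithm's output, so with probability $1/2$ the loss on the $m$ test points equal to $2$ is at least $cm/n$, and then couple this with the binomial anti-concentration event $\{m\ge n/2+\sqrt n\}$. The paper phrases the first step via counts $r,s$ of positive/negative predictions among the $m$ test $2$'s and the inequality $c(m-r-s)+\max(r,s)\ge cm$, which also covers algorithms that could predict differently on distinct copies of $2$; your version specialises to a single value $h(2)$, which is all the lemma's model requires, and you spell out the anti-concentration step (Berry--Esseen plus a finite check) more carefully than the paper's ``well-known'' remark.
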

\begin{proof}
Let $m$ be the number of test 2's. Suppose the selective classifier $h$ predict positively on $r$ of these $m$, negatively on $s$ of them, and abstains on the rest ($r, s \geq 0$ can be arbitrary reals such that $r+s \leq m$ if the classifiers make weighted predictions or if fractional abstentions are allowed). Imagine choosing $f(2)$ after choosing $f(1)$ and $\bxtrain, \bx$. Since the training data reveals nothing about $f(2)$, it is conditionally uniform. Thus with probability $1/2$ over $\mu$, the algorithm's total loss will be $c(m-r-s) + \max(r, s)$. However, 
 $$c(m-r-s) + \max(r, s)  \geq c(m - r-s) + \frac{r+s}{2} = cm +(r+s)\left(\frac{1}{2}-c\right) \geq cm,$$
 since $c \in (0,1/2)$. Thus, with probability at least $1/2$, the algorithm's loss will be $\geq cm$ regardless of its predictions and abstentions. It is well-known that with positive constant probability, say $\geq 2\kappa,$ since $m$ is distributed like a standard binomial distribution (hence with standard deviation $\sqrt{n}/2$), $m \geq n/2 + \sqrt{n}$ for $n \geq 4$. Combined with the above, gives that with probability $\geq \kappa$, the average loss will be $\geq c/2 + c/\sqrt{n}=c|P-Q|_\TV + c/\sqrt{n}$.
 \end{proof}

\section{Deferred proofs}\label{sec:deferred_proofs}
We now prove the remaining deferred proofs.

\subsection{Proofs of main theorems from the introduction}

The main theorems are straightforward applications of the other theorems and lemmas in the paper.

\begin{proof}[Proof of Theorem \ref{thm:pq-main}]
By Lemma \ref{lem:general-pq},
$$\E{\bx \sim \dtest^n}{
\min_\indices \LOSS_\bx(V, h, \indices)} \leq \cost |\dtrain -\dtest|_\TV + \E{\bxalt \sim \dtrain^n}{L_\bxalt(V, h)},$$
and by Eq.~(\ref{eq:genexp}) of Lemma \ref{lem:gen-bounds-classification},
$$\E{\bxtrain,\bxalt \sim \dtrain^n}{L_\bxalt(\VS(\bxtrain, f(\bxtrain)), h)} \leq \frac{2d \lg 2n}{n}.$$
By Lemma \ref{lem:implementation-details} and Lemma \ref{lem:erm}, $\alg$ together with $\FLIP$ find a solution within $1/n$ of optimal, and the proof is completed using the fact that,
$$\frac{2d \lg 2n}{n} + \frac{1}{n} \leq \frac{2d \lg 3n}{n}.$$
\end{proof}

\begin{proof}[Proof of Theorem \ref{thm:pq-main-gen}]
This follows just as in the above proof, except using the expectation bound from Lemma \ref{lem:gen-bounds-classification} and using Lemma \ref{lem:tight-pq} instead of Lemma \ref{lem:general-pq}. By Lemma \ref{lem:trans2gen}, the expected error of $\alpha_1$ defined in that lemma is the same as the expected transductive loss $\loss_\bx(f, h, \ba(\bx))$. (If the algorithm $\alg$ is not symmetric, then one can shuffle the inputs first to make it symmetric.)
\end{proof}

\begin{proof}[Proof of Theorem \ref{thm:adv-main}]
This follows from Theorem \ref{thm:adv-loss} and Lemmas \ref{lem:implementation-details} and \ref{lem:erm} and again the fact that,
$$\frac{2d \lg 2n + \lg 1/2\delta}{n} + \frac{1}{n} = \frac{2d \lg 2n + \lg 1/\delta}{n}.$$
\end{proof}
\begin{proof}[Proof of Theorem \ref{thm:regression}]
Let $h=\ERM(\bxtrain, \bytrain)$ which is efficiently computable for linear regression. 
Clearly, the class of linear functions $F_1 = \{ x \mapsto w \cdot x : x, w \in B_d(1) \}$ is convex. It is well-known that it has Rademacher complexity $1/\sqrt{n}$. Thus Lemma \ref{lem:rad} shows that, with probability $\geq 1-\delta$, the version space $V=\VS_{\eps}(\bxtrain, h)$ both contains $f$ and has $\max_{g \in V} \ell_\bz(g, h) \leq 14\eps$, for $\eps = O(\sqrt{\frac{1}{n}\log \frac{1}{\delta}})$. Thus, assuming that the events required for us to be able to apply the conclusion of Lemma 10 hold (which happens with probability at least $1 - \delta$),  Lemma \ref{lem:general-pq} shows that,
$$\E{\bx \sim Q^n}{\min_A \LOSS_\bx(V, h, A)} \leq \alpha|P-Q|_\TV + \E{\bz \sim P^n}{\LOSS_\bz(V, h)} \leq \alpha|P-Q|_\TV + 14 \eps.$$
And hence, if $\alg$ minimizes the loss upper-bound of $\hat{A}$ to $1/n$, we will have a loss upper-bound of $c|P-Q|_\TV + 14\eps + 1/n$. By Lemma \ref{lem:reg-max}, one can efficiently maximize loss over the version space if one can solve the CDT problem exactly. As discussed, CDT can be solved to within $\eps$ accuracy in time $\poly(\log 1/\eps)$. This is more than adequate for the approximate reduction required by Section \ref{sec:algorithms} (see also Appendix~\ref{app:approxCDT} for details regarding the approximation of the CDT problem). To finish the proof, one simply observes that as the loss function is bounded, it suffices to set $\delta = 1/n$.
\end{proof}

\subsection{Proofs from Section \ref{sec:inf-theory}: information-theoretic loss bounds}\label{sec:joint}
In addition to proving Lemma \ref{lem:general-adv}, 
we expand on it to consider the possibility of jointly optimizing $(h, A)$ to minimize the worst-case loss. In particular, the Lemma below includes Lemma \ref{lem:general-adv} as its second part.
\begin{lemma}\label{lem:detailed-general-adv}[Adversarial loss (expanded)]
For any $n \in \nats, V \subseteq F$, $f \in V$, $\bxalt, \bx \in X^n$,
\begin{align}\label{eq:biden1}
\Loss_\bx(f, h^*, \indices^*) 
&\leq \frac{\cost }{n}\bigl|\{i: x_i \neq \xalt_i\}\bigr| + \LOSS_\bxalt(V, f) &\text{ for all }~ (h^*,\indices^*)\in \argmin_{h \in F, \indices \subseteq [n]}\LOSS_\bx(V, h, \indices)\\
\label{eq:biden1big}
\Loss_\bx(f, h, \indices^*) 
&\leq \frac{\cost }{n}\bigl|\{i: x_i \neq \xalt_i\}\bigr| + \LOSS_\bxalt(V, h) &\text{ for all }~ h: X \rightarrow Y, \indices^*\in \argmin_{\indices \subseteq [n]}\LOSS_\bx(V, h, \indices).
\end{align}
\end{lemma}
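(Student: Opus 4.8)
The plan is to prove both displays by the same short comparison argument: whatever the minimizer on the left‑hand side does, it does at least as well as the specific choice of abstaining on exactly the corrupted coordinates $\indices' \defeq \{i : x_i \neq \xalt_i\}$ (together with the reference predictor $f$ for \eqref{eq:biden1}), and the worst‑case loss of that choice already meets the right‑hand bound.

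\textbf{Step 1 (abstaining on $\indices'$ decouples $\bx$ from $\bxalt$).} Fix a reference predictor: an arbitrary $h : X \to Y$ for \eqref{eq:biden1big}, or $h' \defeq f$ for \eqref{eq:biden1} (a legitimate choice since $f \in V \subseteq F$). For every $g \in V$ and every index $i \notin \indices'$ we have $x_i = \xalt_i$, hence $\loss(g(x_i), h'(x_i)) = \loss(g(\xalt_i), h'(\xalt_i))$. Using $\loss \geq 0$ to extend the sum from $[n]\setminus\indices'$ to all of $[n]$,
\[ \Loss_\bx(g, h', \indices') = \frac{\cost}{n}\,|\indices'| + \frac1n\sum_{i\notin\indices'}\loss\bigl(g(\xalt_i),h'(\xalt_i)\bigr) \;\le\; \frac{\cost}{n}\,|\indices'| + \Loss_\bxalt(g, h'). \]
Taking the maximum over $g \in V$ gives $\LOSS_\bx(V, h', \indices') \le \frac{\cost}{n}\bigl|\{i: x_i\neq\xalt_i\}\bigr| + \LOSS_\bxalt(V, h')$.

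\textbf{Step 2 (optimality of the argmin and $f \in V$).} For \eqref{eq:biden1big}: since $\indices^*$ minimizes $\LOSS_\bx(V,h,\cdot)$ over subsets, $\LOSS_\bx(V,h,\indices^*) \le \LOSS_\bx(V,h,\indices')$, and since $f \in V$, $\Loss_\bx(f,h,\indices^*) \le \LOSS_\bx(V,h,\indices^*)$; chaining these with Step 1 (using $h'=h$) yields \eqref{eq:biden1big}. For \eqref{eq:biden1}: since $(h^*,\indices^*)$ minimizes $\LOSS_\bx(V,\cdot,\cdot)$ jointly over $h \in F$ and $\indices \subseteq [n]$, $\LOSS_\bx(V,h^*,\indices^*) \le \LOSS_\bx(V,f,\indices')$; combining with $\Loss_\bx(f,h^*,\indices^*)\le \LOSS_\bx(V,h^*,\indices^*)$ and Step 1 (using $h'=f$) gives \eqref{eq:biden1}, now with $\LOSS_\bxalt(V,f)$ in place of $\LOSS_\bxalt(V,h)$.

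\textbf{On the main point.} There is no real obstacle in the derivation; the content is conceptual rather than technical. The reference set $\indices'$ is defined in terms of the unknown clean test set $\bxalt$, so this is a pure comparison/existence argument about the value attained by the minimizer — it is deliberately not an algorithm, which is exactly why the learner can later (in Section \ref{sec:algorithms}) achieve the same guarantee by computing $\indices^*$ from $\bx$ alone. I would close by noting that the argument is unchanged if $\indices'$ is replaced by its $\{0,1\}$‑indicator vector, which gives the fractional version $\Loss_\bx(f,h,\hat\ba) \le \frac{\cost}{n}\bigl|\{i:x_i\neq\xalt_i\}\bigr| + \LOSS_\bxalt(V,h)$ for any $\hat\ba \in \argmin_{\ba\in[0,1]^n}\LOSS_\bx(V,h,\ba)$ — the form consumed by Lemma \ref{lem:implementation-details}.
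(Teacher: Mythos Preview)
Your proof is correct and is essentially identical to the paper's own argument: define the set of corrupted indices (the paper calls it $M$, you call it $\indices'$), use $f\in V$ to pass from $\Loss_\bx$ to $\LOSS_\bx$, use optimality of the argmin to compare against the specific choice $(\indices',f)$ or $(\indices',h)$, then replace $\bx$ by $\bxalt$ on the uncorrupted coordinates and invoke $\loss\ge 0$ to extend the sum. Your closing remark on the fractional version is a harmless and useful addendum not present in the paper's proof.
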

\begin{proof}
The idea is that, by minimizing $\LOSS_\bx(V, h, \indices)$, the resulting loss upper-bound is as low as if one knew $f$ and which points were modified $M\defeq \{i: x_i \neq \xalt_i\}$ and abstained on them. Formally, by definition of $\LOSS$, for all $(h^*,\indices^*)\in \argmin_{h \in F, \indices \subseteq [n]}\LOSS_\bx(V, h, \indices)$,
$$\Loss_\bx(f, h^*, \indices^*) \leq \LOSS_\bx(V, h^*, \indices^*) = \min_{h, \indices} \LOSS_\bx(V, h, \indices) \leq \LOSS_\bx(V, f, M).$$
Since $\bx$ and $\bxalt$ agree outside of $M$,
$$\LOSS_\bx(V, f, M) = \frac{\cost}{n}|M| + \max_{g \in V} \frac{1}{n}\sum_{i \notin M} \loss(g(\xalt_i), f(\xalt_i)) \leq  \frac{\cost}{n}|M| + \max_{g \in V} \Loss_\bxalt(g, f) =  \frac{\cost}{n}|M| + \LOSS_\bxalt(V, f).$$
In the above we have used the fact that loss is non-negative. This establishes Eq.~(\ref{eq:biden1}). Similarly,
$$\Loss_\bx(f, h, \indices^*) \leq \LOSS_\bx(V, h, \indices^*) = \min_\indices \LOSS_\bx(V, h, \indices) \leq \LOSS_\bx(V, h, M).$$
and,
$$\LOSS_\bx(V, h, M) = \frac{\cost}{n}|M| + \max_{g \in V} \frac{1}{n}\sum_{i \notin M} \loss(g(\xalt_i), h(\xalt_i)) \leq  \frac{\cost}{n}|M| + \max_{g \in V} \Loss_\bxalt(g, h) =  \frac{\cost}{n}|M| + \LOSS_\bxalt(V, h).$$
This proves Eq.~(\ref{eq:biden1big}).
\end{proof}

\paragraph{Selective prediction versus abstention.} 
The difference between the two bounds is that in the first case, the learner jointly optimizes for $h$ and $A$, which may be called (transductive) \textit{selective prediction}, while in the second case, $h$ is first fit from the training data and $A$ is selected afterwards, which we refer to as (transductive) \textit{abstention}. Abstention is practically appealing in that it is a post-processing step that can be added to any classifier. As we shall see for classification, the bound (\ref{eq:biden1}) is not significantly better than the transductive abstention bounds (\ref{eq:biden1big}). 

\subsection{Covariate shift analysis: Proof of Lemma \ref{lem:tight-pq}}

Before proving Lemma \ref{lem:tight-pq}, we state a simpler lemma.

\begin{lemma}\label{lem:general-pq}[PQ loss]
For any distributions $\dtrain, \dtest$ over $X$, any $h: X \rightarrow Y$, $n \in \nats, V \subseteq F$, $f \in V$, 
$$\E{\bx \sim \dtest^n}{\Loss_\bx(f, h, \indices^*)}\leq\E{\bx \sim \dtest^n}{
\min_\indices \LOSS_\bx(V, h, \indices)} \leq \cost |\dtrain -\dtest|_\TV + \E{\bxalt \sim \dtrain^n}{L_\bxalt(V, h)},$$
where the above holds simultaneously for all $\indices^* \in \argmin_{\indices \subseteq [n]} \LOSS_\bx(V, h, \indices)$. 
\end{lemma}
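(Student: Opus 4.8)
The plan is to bound $\E_{\bx \sim \dtest^n}[\min_\indices \LOSS_\bx(V,h,\indices)]$ by exhibiting a \emph{particular} (not necessarily optimal) choice of abstention set $\indices$ on the test sample and comparing the resulting quantity to the corresponding quantity on a coupled draw from $\dtrain^n$. The first inequality is immediate since $f \in V$ gives $\Loss_\bx(f,h,\indices^*) \leq \LOSS_\bx(V,h,\indices^*) = \min_\indices \LOSS_\bx(V,h,\indices)$ for any $\indices^*$ in the argmin, so the work is entirely in the second inequality.

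The key device is a maximal coupling of $\dtrain$ and $\dtest$: let $(\xalt,x)$ be a pair of random variables with marginals $\dtrain$ and $\dtest$ respectively such that $\Pr[\xalt \neq x] = |\dtrain - \dtest|_\TV$, and draw $n$ i.i.d.\ copies $(\xalt_i,x_i)$ to get coupled samples $\bxalt \sim \dtrain^n$, $\bx \sim \dtest^n$. Now on the draw $\bx$, choose $\indices$ to be exactly the set of coordinates $i$ where $x_i \neq \xalt_i$ (this is a valid, if suboptimal, choice, so $\min_\indices \LOSS_\bx(V,h,\indices)$ is at most $\LOSS_\bx(V,h,\indices)$ for this $\indices$). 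Then, exactly as in the proof of Lemma \ref{lem:general-adv}/\ref{lem:detailed-general-adv} with $M = \{i : x_i \neq \xalt_i\}$, nonnegativity of the loss gives $\LOSS_\bx(V,h,M) \leq \frac{\cost}{n}|M| + \LOSS_\bxalt(V,h)$. Taking expectations over the coupling, $\E[\frac{\cost}{n}|M|] = \cost \Pr[x_1 \neq \xalt_1] = \cost|\dtrain-\dtest|_\TV$ by linearity, and $\E[\LOSS_\bxalt(V,h)] = \E_{\bxalt \sim \dtrain^n}[L_\bxalt(V,h)]$ since $\bxalt$ has the right marginal; chaining the inequalities yields the claim. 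The statement holds simultaneously for all $\indices^*$ in the argmin because the first inequality used nothing about which minimizer is chosen.

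The main (minor) obstacle is simply being careful that the coupling argument is legitimate: the quantity $\min_\indices \LOSS_\bx(V,h,\indices)$ depends only on $\bx$, so its expectation is unchanged when we compute it under the coupled distribution rather than the product distribution $\dtest^n$; and likewise $\E[L_\bxalt(V,h)]$ is unchanged. Everything else is an application of the deterministic inequality already established in Lemma \ref{lem:detailed-general-adv} (equation \eqref{eq:biden1big}) together with the defining property of the maximal (total-variation) coupling. I would note in passing that this is the $k=1$ case of the sharper Lemma \ref{lem:tight-pq}, whose proof would replace the maximal coupling by a coupling tuned to the event $Q(x) > k\cdot P(x)$.
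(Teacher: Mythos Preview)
Your proposal is correct and follows essentially the same approach as the paper: both use a maximal coupling of $\dtrain$ and $\dtest$ (the paper spells out the coupling explicitly in a footnote rather than invoking it by name), set the abstention set to $M=\{i:x_i\neq\xalt_i\}$, invoke the deterministic bound $\LOSS_\bx(V,h,M)\leq\frac{\cost}{n}|M|+\LOSS_\bxalt(V,h)$ from the adversarial lemma, and take expectations using $\E|M|=n|\dtrain-\dtest|_\TV$. Your added remark that the coupling is legitimate because $\min_\indices\LOSS_\bx(V,h,\indices)$ depends only on $\bx$ is a nice point of care that the paper leaves implicit.
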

Note that this lemma follows directly from a tighter bound we prove in Lemma \ref{lem:tight-pq}, the proof of this lemma serves as a ``warm up'' for that Lemma \ref{lem:tight-pq}'s proof.
\begin{proof} Let $\mathcal{A}(\bx)=\argmin_{\indices \subseteq [n]} \LOSS_\bx(V, h, \indices) \subseteq 2^{[n]}$. The term $\E{\bx \sim \dtest^n}{\Loss_\bx(f, h, \indices^*)}$ in the lemma is formally,
\begin{align}\label{eq:useme}
\E{\bx \sim \dtest^n}{\max_{\indices^* \in \mathcal{A}(\bx)} \Loss_\bx(f, h, \indices^*)}\leq \E{\bx \sim \dtest^n}{
\max_{\indices^* \in \mathcal{A}(\bx)} \LOSS_\bx(V, h, \indices^*)} =
\E{\bx \sim \dtest^n}{
\min_\indices \LOSS_\bx(V, h, \indices)}.
\end{align}
Thus, since we are minimizing over $A$, it suffices to give a (randomized) procedure for selecting $\indices$ knowing $\bx$ \text{and even $\dtrain, \dtest$} that achieves in expectation, $$\E{\bx \sim \dtest^n, A}{
\LOSS_\bx(V, h, \indices)} \leq \alpha|\dtrain-\dtest|_\TV +\E{\bxalt \sim \dtrain^n}{L_\bxalt(V, h)}.$$ 
To see how, note that it is possible to pick $\bxalt \sim \dtrain^n$ by taking modifying an expected $|\dtrain -\dtest|_\TV$ fraction of the points in $\bx$.\footnote{Specifically, for each $i$, choose $\xalt_i = x_i$ with probability $\min(1, \dtrain(\xalt_i)/\dtest(\xalt_i))$ and otherwise choose $\xalt_i$ from the ``adjustment'' distribution $\rho(x) \propto \max( \dtrain(x)-\dtest(x), 0)$.} As in the proof of Lemma \ref{lem:general-adv}, letting $M\defeq \{i: x_i \neq \xalt_i\}$ be the set of modified indices,
$$\min_{\indices} \LOSS_\bx(V, h, \indices) \leq \LOSS_\bx(V, h, M) = \LOSS_\bxalt(V, h, M) \leq \frac{\cost}{n} |M| + \LOSS_\bz(V, h).$$
The proof is completed by using the fact that $\E{|M|}=n\cdot |\dtrain-\dtest|_\TV$.
\end{proof}

\begin{proof}[Proof of Lemma \ref{lem:tight-pq}]
We show the inequality for any $k \geq 1$. We claim it suffices to exhibit joint distribution 
$(\bx, \bxalt, \indices) \sim \rho$ such that the marginal distribution over $\bxalt$ is $\dtrain^n$ and over $\bx$ is $\dtest^n$ and such that:
$$\E{(\bx, \bxalt, \indices) \sim \rho}{
\LOSS_\bx(V, h, \indices)}\leq \alpha\kdiv_k(\dtrain\|\dtest) + k\E{(\bx, \bxalt, \indices) \sim \rho}{L_\bxalt(V, h)}.$$
This is because, by Eq.~(\ref{eq:useme}), $\E{\bx \sim \dtest^n}{\Loss_\bx(f, h, \indices^*)}$ is at most,
\begin{align*}
\E{\bx \sim \dtest^n}{
\min_\indices \LOSS_\bx(V, h, \indices)} 
&\leq \E{(\bx, \bxalt, \indices) \sim \rho}{\LOSS_\bx(V, h, \indices)}\\
&\leq \alpha\kdiv_k(\dtrain\|\dtest) + k\E{(\bx, \bxalt, \indices) \sim \rho}{L_\bxalt(V, h)}\\
&= \alpha\kdiv_k(\dtrain\|\dtest) + k\E{\bxalt\sim \dtest^n}{L_\bxalt(V, h)}.
\end{align*}
We define $\rho$ by defining a procedure for generating $(\bx, \bxalt, \indices) \sim \rho$. Begin with $\bx \sim \dtest^n$. The procedure abstains independently on each test example with probability $\alpha(x)\defeq\max\left(1 - k \frac{\dtrain(x)}{\dtest(x)}, 0\right)$.  
The probability of abstaining on test examples $x \sim \dtest$ is thus $\kdiv_k(\dtrain\|\dtest)$: $$\E{\bx \sim \dtest^n}{\frac{|A|}{n}}= \sum_{x \in X}  \dtest(x)\max\left(1 - k \frac{\dtrain(x)}{\dtest(x)}, 0\right) = \sum_{x  \in X} \max\bigl(\dtest(x) - k \dtrain(x), 0\bigr) = \kdiv_k(\dtrain\|\dtest).$$
Thus the expected cost due to abstaining is $c\kdiv_k(\dtrain\|\dtest)$ and it suffices to show that,
\begin{align}\label{eq:toshowzz}
\E{(\bx, \bxalt, \indices) \sim \rho}{\max_{g \in V}\sum_{i \notin \indices}\loss(f(x_i), h(x_i))} \leq k\E{(\bx, \bxalt, \indices) \sim \rho}{L_\bxalt(V, h)}.
\end{align}
To complete the description of $\rho$, we now explain how to generate $\bxalt$. For each $i \in A$, we will choose $\xalt_i$ independently from distribution $\mu$ which will be specified shortly. For each $i \notin A$, choose to ``copy'' $\xalt_i=x_i$ with probability $1/k$ and otherwise, with probability $1-1/k$, choose $\xalt_i \sim \mu$.

For any $\xalt \in X$, the probability of choosing any $\xalt_i=\xalt$ is the sum of: (a) the probability of copying $\xalt_i$ from $x_i=\xalt$ which is the probability of choosing $x_i=\xalt$ ($\dtest(\xalt)$) times the probability of not abstaining ($1-\alpha(\xalt)$) times the probability of copying ($1/k$); plus (b) the probability of choosing $\xalt_i\sim \mu$ to be $\xalt$ which is 
the probability of choosing $\xalt$ from $\mu$ ($\mu(\xalt)$) times the probability of \textit{not} copying (this value $\beta$ is not crucial but it is $\beta \defeq 1-\frac{1-\kdiv_k(\dtrain\|\dtest)}{k}$ since probability of copying is the product of the probability of not abstaining $1-\kdiv_k(\dtrain\|\dtest)$ and copying $1/k$). This yields:
$$\forall i \in [n]:~\P{(\bx, \bxalt, \indices) \sim \rho}{\xalt_i = \xalt} = \dtest(\xalt)(1-\alpha(\xalt))\frac{1}{k} + \mu(\xalt)\beta.$$
Since the probability of not abstaining is $1-\alpha(x) = \min(k\dtrain(x)/\dtest(x), 1)$, the above is
$$\forall i \in [n]:~\P{(\bx, \bxalt, \indices) \sim \rho}{\xalt_i = \xalt}
=\min(\dtrain(\xalt), \dtest(\xalt)/k)+\mu(\xalt)\beta,$$
which can be made to be $\dtrain(\xalt)$ by an appropriate choice of $\mu$, in particular $\mu(\xalt)=\frac{1}{\beta}\max(0, \dtrain(\xalt)-\dtest(\xalt)/k)$.

Since the distribution of $\xalt_i$ is independent across $i$, the above reasoning implies that $\rho$'s marginal distribution over $\bxalt$ is $\dtrain^n$. Again, define $M\defeq \{i \in [n]: \xalt_i \neq x_i\}$. For any $\bx, \indices$, pick any $g_{\bx, \indices}^* \in \argmax_V \sum_{i \notin \indices} \ell(g(x_i), h(x_i))$. Then we have,
\begin{align*}
\E{(\bx, \bxalt, \indices)\sim \rho}{\LOSS_\bxalt(V, h)}&=
\E{(\bx, \bxalt, \indices)\sim \rho}{\max_{g \in V}\sum_i \ell(g(\xalt_i), h(\xalt_i))} &\text{(by definition of $\LOSS$)}\\
&\geq \E{(\bx, \bxalt, \indices)\sim \rho}{\sum_{i \notin M} \ell(g_{\bx, \indices}^*(\xalt_i), h(\xalt_i))}\\
&= \E{(\bx, \bxalt, \indices)\sim \rho}{\sum_{i \notin M} \ell(g_{\bx, \indices}^*(x_i), h(x_i))} &\text{(because $x_i=\xalt_i$ for $i\notin M$)}\\
&\geq \E{(\bx, \bxalt, \indices)\sim \rho}{\sum_{i \notin A} \frac{1}{k}\ell(g_{\bx, \indices}^*(x_i), h(x_i))} &\text{(because $\P{i \notin M\mid i \notin A}\geq 1/k$)}\\
&=  \frac{1}{k}\E{(\bx, \bxalt, \indices)\sim \rho}{\max_{g \in V}
\sum_{i \notin A}\loss(g(x_i), h(x_i))} &\text{(by definition of $g^*_{\bx, \indices}$)}
\end{align*}
This is exactly what we needed to show for Eq.~(\ref{eq:toshowzz}).
\end{proof}

\subsection{Proof of Lemma \ref{lem:implementation-details} from Section \ref{sec:algorithms}: Reduction}\label{ap:algorithm}

Lemma \ref{lem:implementation-details} analyzes an algorithm with an oracle $\Osep$ that does not determine membership. It is more common to use what we refer to as a ``separation-membership'' oracle for convex set $K\subseteq \reals^n$ (though it is often called simply a separation oracle) is an oracle that given $\ba \in \reals^n$, in unit time, can both identify which points are in $K$ and which are not, and also for $\ba \notin K$, can find $\bv \in \reals^n$ such that $\bb \cdot \bv < \ba \cdot \bv$ for all $\bb \in K$. We first argue that the Ellipsoid method would succeed using a separation-membership oracle, which requires just showing circumscribed and inscribed balls.
\begin{lemma}\label{lem:ellipsoid}
Let $\ell:Y^2\rightarrow [0,1]$ be a bounded loss. Fix any $V \subseteq Y^X$, $h \in V$, $\bx \in X^n$, and $\eps>0$. Let $\OPT \defeq \min_{\ba \in [0,1]^n}\LOSS_\bx(V, h, \ba)$. Then the Ellipsoid algorithm run with a separation-membership oracle to
$K(\eps)\defeq\{\ba \in [0,1]^n: \LOSS_\bx(V, h, \ba) \leq \OPT+\eps\}$ will output $\ba \in K(\eps)$ in time $\poly(n \log 1/\eps)$.
\end{lemma}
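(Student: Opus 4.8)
The plan is to reduce to the textbook guarantee of the Ellipsoid method for convex feasibility: given a separation-membership oracle for a convex set $K$ that is sandwiched between a Euclidean ball of radius $r$ and one of radius $R$, the method outputs a point of $K$ in $O(n^2\log(R/r))$ oracle calls. So I would verify, for $K=K(\eps)$, that it is convex, that it sits inside a ball of radius $R=O(\sqrt n)$, and that it contains a ball of radius $r=\Omega(\eps)$ (with the abstention cost $\cost$ treated as a fixed constant); then $\log(R/r)=O(\log n+\log(1/\eps))$ and the iteration count is $\poly(n,\log 1/\eps)$. Convexity is immediate: for each fixed $g\in V$ the map $\ba\mapsto\loss_\bx(g,h,\ba)=\frac1n\sum_i\bigl(a_i\cost+(1-a_i)\loss(g(x_i),h(x_i))\bigr)$ is affine in $\ba$, so $\LOSS_\bx(V,h,\cdot)=\max_{g\in V}\loss_\bx(g,h,\cdot)$ is convex as a pointwise supremum of affine functions, and $K(\eps)$ is a sublevel set of this convex function intersected with the cube $[0,1]^n$. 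It is also nonempty, since any minimizer $\ba^*$ of $\LOSS_\bx(V,h,\cdot)$ over $[0,1]^n$ lies in it.

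The circumscribed ball is trivial: $K(\eps)\subseteq[0,1]^n\subseteq B(\tfrac12\one,\tfrac{\sqrt n}{2})$. The inscribed ball is the one step needing care, because the minimizer $\ba^*$ may lie on the boundary of the cube, so a ball around it would leave $[0,1]^n$. I would handle this by shrinking $\ba^*$ toward the center: put $\ba'=(1-\eta)\ba^*+\eta\cdot\tfrac12\one$ for a small parameter $\eta=\Theta(\eps)$ chosen below; then every coordinate of $\ba'$ lies in $[\eta/2,1-\eta/2]$, so the ball $B(\ba',\eta/2)$ is contained in the cube. To keep the objective small throughout this ball I would use Lipschitzness of $\LOSS_\bx(V,h,\cdot)$: each coordinate of the gradient of $\loss_\bx(g,h,\cdot)$ is $\frac1n\bigl(\cost-\loss(g(x_i),h(x_i))\bigr)$, of absolute value at most $\max(\cost,1)/n$, so $\LOSS_\bx(V,h,\cdot)$ changes by at most $\frac{\max(\cost,1)}{n}$ times $\ell_1$-distance, hence at most $\frac{\max(\cost,1)}{\sqrt n}$ times $\ell_2$-distance. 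Taking $\eta=\min\!\bigl(1,\tfrac{\eps}{2\max(\cost,1)}\bigr)$ makes the move from $\ba^*$ to $\ba'$ cost at most $\eps/4$ in objective and any further move of $\ell_2$-length $\eta/2$ cost at most $\eps/4$ more, so $\LOSS_\bx(V,h,\ba)\le\OPT+\eps$ for every $\ba\in B(\ba',\eta/2)$; thus $B(\ba',\eta/2)\subseteq K(\eps)$ and we may take $r=\eta/2$.

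Finally I would assemble the standard Ellipsoid run: start from $B(\tfrac12\one,R)$, and at each step call the separation-membership oracle once and perform $\poly(n)$ arithmetic on numbers of $\poly(n,\log 1/\eps)$ bit length; after $O(n^2\log(R/r))=\poly(n,\log 1/\eps)$ steps the method either returns a point the oracle certifies to lie in $K(\eps)$, or certifies that $K(\eps)$ is contained in an ellipsoid of volume below that of $B(\mathbf 0,r)$ — which cannot happen since $K(\eps)$ contains a radius-$r$ ball by the previous paragraph. Hence it returns some $\ba\in K(\eps)$, in total time $\poly(n,\log 1/\eps)=\poly(n\log 1/\eps)$. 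I expect the inscribed-ball step to be the only real obstacle — making rigorous the ``shrink a possibly-boundary minimizer toward the center while losing only $O(\eps)$ in objective'' argument via the Lipschitz estimate — and everything else is the off-the-shelf Ellipsoid guarantee.
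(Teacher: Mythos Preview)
Your proposal is correct and follows the same overall strategy as the paper: verify convexity, exhibit circumscribing and inscribed balls, and invoke the standard Ellipsoid guarantee. The only substantive difference is in the inscribed-ball step. You shrink the minimizer $\ba^*$ toward the center of the cube to move it away from the boundary, then use an $\ell_2$ Lipschitz estimate to control the objective on a small ball around the shifted point. The paper avoids this shrinking maneuver entirely: it observes directly that the $\ell_\infty$-box $Z=\{\ba\in[0,1]^n:|a_i-a^*_i|\le\eps\text{ for all }i\}$ lies in $K(\eps)$ (since the objective changes coordinatewise by at most $\tfrac{1}{n}|a_i-a^*_i|\cdot|\cost-\ell(g(x_i),h(x_i))|\le\tfrac{\eps}{n}$, summing to $\le\eps$), and that $Z$, even after intersection with the cube, still contains an axis-aligned cube of side $\eps$ and hence an $\ell_2$-ball of radius $\eps/2$. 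This is slightly cleaner because it uses the $\ell_\infty$ structure directly and needs no recentering; your approach is a bit more general in that it tracks the dependence on $\cost$ explicitly via $\max(\cost,1)$, which the paper tacitly absorbs by assuming $\cost,\ell\in[0,1]$.
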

\begin{proof}
In order to bound the runtime of the Ellipsoid algorithm (see, e.g., \cite{gls}), we need only to bound the radii of containing and contained balls for $K(\eps)$. Clearly $K(\eps)$ is contained in the ball of radius $R=\sqrt{n}$ around the origin. We next argue that $K(\eps)$ contains a ball of radius $\geq r= \eps/2$. To see this, let $\ba^* \in \argmin_{\ba \in [0,1]^n}\LOSS_\bx(V, h, \ba)$. We claim,
$$Z \defeq \bigr\{\ba \in [0,1]^n: a_i \in [a^*_i - \eps, a^*_i + \eps] \text{ for all } i\in [n]\bigr\} \subseteq K(\eps).$$
This is because, for $\ba \in K(\eps)$, 
\begin{align*}
    \LOSS_\bx(V, h, \ba) &= \max_{g \in V} \frac{1}{n}\sum_i \alpha\cdot a_i + (1-a_i)\ell(g(x_i), h(x_i)) \\
    &\leq \max_{g \in V} \frac{1}{n}\sum_i \alpha\cdot a^*_i + (1-a^*_i)\ell(g(x_i), h(x_i)) + |a_i-a^*_i|\cdot |c-\ell(g(x_i), h(x_i)| \\
    &\leq \LOSS_\bx(V, h, \ba) + \eps.
\end{align*}
The last step follows from the fact that $|a_i-a^*_i|\leq \eps$ and that $c, \ell(g(x_i)-h(x_i)) \in [0,1]$. Thus $Z \subseteq K(\eps)$. Also, it is not difficult to see that $Z$ contains a cube of side $\eps$ and thus also ball of radius $r=\eps/2$. So $R/r = 2\sqrt{n}/\eps$ and the ellipsoid algorithm runs in $\poly(n\log 1/\eps)$ time and queries to a separation oracle outputs $\hat\ba \in K(\eps)$.
\end{proof}
Using this, we are now ready to prove Lemma \ref{lem:implementation-details}.
\begin{proof}[Proof of Lemma \ref{lem:implementation-details}]
Let $\OPT \defeq \min_{\ba \in [0,1]^n}\LOSS_\bx(V, h, \ba)$ and, for any $\delta \geq 0$,
$$K(\delta)\defeq\{\ba \in [0,1]^n: \LOSS_\bx(V, h, \ba) \leq \OPT+\delta\}.$$ It suffices is to output $\hat\ba \in K(1/n)$. To do so, we fix $\eps\defeq 1/(3n)$ simulate running the ellipsoid algorithm on $K(\eps)$, which Lemma \ref{lem:ellipsoid} shows would find $\hat\ba\in K(\eps)$ in $\leq T=\poly(n)$ oracle calls and runtime, using an actual separation-membership oracle to $K(\eps)$.

A separation-membership oracle both \textit{separates}: for $\ba\notin K(\eps)$ it finds a vector $\bv$ such that $\bv \cdot \bb < \bv \cdot\ba$ for all $\bb \in K(\eps)$, and computes \textit{membership}: identifying whether or not $\ba \in K$. We first argue that $\Osep$ separates any $\ba \notin K(2\eps)$. First, if $\ba \notin [0,1]^n$, $\bv$ trivially separates $\ba$ from $[0,1]^n \supseteq K(\eps)$. Next, we argue that $\Osep(\ba)$ separates any $\ba \in [0,1]^n \setminus K(2\eps)$ from $K(\eps)$. 
To see this, by definition of $\ell_\bx(g, h, \ba)$,
$$\forall \bb \in [0,1]^n, ~~\bv \cdot \frac{\ba - \bb}{n} = \ell_\bx(g, h, \ba) - \ell_\bx(g, h, \bb).$$
And thus,
$$\forall \bb \in K(\eps), ~~\bv \cdot \frac{\ba - \bb}{n} \geq \ell_\bx(g, h, \ba) - (\OPT+1/3n).$$
On the other hand, by assumption on $\Omaxloss$, for any $\ba \notin K(2\eps)$,
$$\ell_\bx(g, h, \ba) \geq \LOSS_\bx(V, h, \ba) - 1/3n > (\OPT + 2/3n) - 1/3n = \OPT + 1/3n.$$
Combining these gives $\bv \cdot (\ba-\bb) > 0$ as needed for all $\bb \in K(\eps)$ and $\ba\in [0,1]^n \setminus K(2\eps)$. 

The above has shown that $\Osep$ separates all $\ba \notin K(2\eps)$ from $K(\eps)$, thus it only fails to be a separation-membership oracle to $K(\eps)$ for $\ba \in K(2\eps)$. Such a failure must occur on one of the first $T$ steps if we run the Ellipsoid algorithm using $\Osep$, because we know the Ellipsoid algorithm would otherwise find a point in $K(\eps)$. But such a ``failure'' $\ba \in K(2\eps)$ is useful to us as it is in $K(1/n)$. In particular, we adapt the Ellipsoid algorithm as follows. We run it for periods $t=1,2,\ldots, T$ using $\Osep$ pretending $\Osep$ is a separation-membership oracle. Let $\ba^{(t)}$ be the $t$-th input to $\Osep$. For $a^{(t)} \in [0,1]^n$, denote $g^{(t)}\defeq \Omaxloss(\bxtrain,\bytrain, h, \ba^{(t)})$.
At the end, the algorithm returns $a^{(t)}$ with smallest $\ell_\bx( g^{(t)}, h, \ba^{(t)})$. Again using the assumption on $\Omaxloss$, we have that, 
$$\LOSS_\bx(V, h, \ba^{(t)})-\eps \leq \ell_\bx( g^{(t)}, h, \ba^{(t)}) \leq \LOSS_\bx(V, h, \ba^{(t)}).$$
This means that, among the $\ba^{(t)}$, the algorithm outputs one within $\eps$ of the smallest $\LOSS_\bx(V, h, \ba^{(t)}).$ In particular, since one of them has $\LOSS_\bx(V, h, \ba^{(t)}) \leq \OPT + 2\eps$, it outputs one with $\LOSS_\bx(V, h, \ba^{(t)}) \leq \OPT + 3\eps=\OPT + 1/n$ as required.
\end{proof}
\subsection{The flipping trick: Proof of Lemma \ref{lem:erm}}\label{ap:flip}

Next, we prove Lemma \ref{lem:erm} that shows how to use ERM to approximately maximize test loss, an idea due to GKKM.
\begin{proof}[Proof of Lemma \ref{lem:erm}]
Any $g$ minimizing loss on the artificial dataset must be in $\VS(\bxtrain, \bytrain)$ because if $g$ erred on any training example, then its weighted loss would be at least $4n^2$, while loss $\leq 3n^2$ is achievable (any $g \in \VS$). 
Further, for any $g \in \VS(\bxtrain, \bytrain)$,
\begin{align*}
    \Loss_\bx(g, h, a)-\Loss_\bx(\hat{g}, h, a)
    &= \frac{1}{3n^2}\sum_i 3n(1-a_i)\left(\loss(g(x_i), h(x_i)) -\loss(\hat{g}(x_i),h(x_i))\right) \\
    &\leq \frac{1}{3n^2}\sum_i 1 + \lfloor 3n(1-a_i)\rfloor \left(\loss(g(x_i), h(x_i)) -\loss(\hat{g}(x_i),h(x_i))\right)\\ &= \frac{1}{3n} + \frac{1}{3n^2} \sum_i \lfloor 3n(1-a_i)\rfloor \left(\loss(\hat{g}(x_i), 1-h(x_i)) -\loss(g(x_i),1-h(x_i))\right).
\end{align*}
But the term is at most $1/3n$, as required, since the last summation above is the error difference on the artificial dataset, which is non-positive by definition of $\ERM$.
\end{proof}

\subsection{Proofs from Section \ref{sec:regression}: Regression}

\begin{proof}[Proof of Lemma~\ref{lem:reg-max}]
	Note that any $h \in F$ can be parametrized by a vector $w \in \reals^d$, such that $\Vert w \Vert \leq 1$. Let $w_h$ denote such a parameterization of $h$. Thus, the set $\VS_\alpha(\bxtrain, h)$ can be characterized as functions represented by $w \in \reals^d$ subject to the following two quadratic constraints:
	\begin{align}
		\frac{1}{\ntrain} \sum_{i=1}^\ntrain (w \cdot \xtrain_i - w_h \cdot \xtrain_i)^2 &\leq \alpha, \label{eqn:qc1} \\
		\sum_{i=1}^d w_i^2 &\leq 1. \label{eqn:qc2}
	\end{align}

	These are both ellipsoid constraints; the latter is simply constraining $w$ to lie in the unit ball. In particular, we note that the function $w \mapsto \Vert w \Vert^2$ is
	strictly convex. Recall the definition:
	\begin{align*}
		\LOSS_\bx(\VS_\alpha(\bxtrain, h), h, \ba) &= \frac{c}{n} \sum_{i} a_i + \max_{g \in \VS_\alpha(\bxtrain, h)} \frac{1}{n} \sum_{i=1}^n (1 - a_i) \cdot \ell(g(x_i), h(x_i)).
	\end{align*}

	As $\ba$ is fixed and known, the maximizers $g$ of
	$\LOSS_\bx(\VS_\alpha(\bxtrain, h), h, \ba)$ are obtained by minimizing the
	following quadratic function with respect to $w$:
	\begin{align*}
		-\frac{1}{n} \sum_{i=1}^n (1 - a_i) (w \cdot x_i - w_h \cdot x_i)^2,
	\end{align*}
	subject to the quadratic inequality constraints~\eqref{eqn:qc1}
	and~\eqref{eqn:qc2}. Noting that the inequality constraint~\eqref{eqn:qc2}
	is represented by a strictly convex quadratic function, this an an instance
	of the CDT problem, and can be solved by using the oracle. 

\end{proof}


\section{Regression : Version Space} \label{app:regression}

Let $Y=[-1,1]$ and consider
the squared loss function $\ell : Y^2 \rightarrow \reals_+$, $\ell(y, \hat{y})
= (y - \hat{y})^2$. 

For regression, since labels are noisy, we will not be able to use an exact
version space and instead will need to define an $\epsilon$-approximate version
space for $\epsilon \geq 0$: $\VS_\epsilon(\bx, h) = \{ g \in F ~|~ \ell_{\bx}(g,
h) \leq \epsilon \}$.  We first discuss how to bound the version space for
regression assuming that the family $F$ has low ``Rademacher complexity'',
which is the case for linear functions. We then show how to efficiently
\emph{maximize} loss over this version space, for linear functions.  Also, we
assume that $F$ is \textit{convex}, which means that for any $f, g \in F$,
$\epsilon f + (1 - \epsilon) g \in F$ for every $\epsilon \in [0, 1]$.

For a class of functions $G$, where each $g \in G$ is $g : X \rightarrow [0, 1]$, for a set $S = \{x_1, \ldots, x_m\} \subseteq X$, the empirical Rademacher complexity of $G$ with respect to $S$ is defined as, 
\begin{align*}
	\widehat{\RAD}_S(G) = \E_{\sigma_i \sim \{-1, 1\}} \left[ \sup_{g \in G} \frac{1}{n} \sum_{i=1}^n \sigma_i g(x_i) \right],
\end{align*}
where the $\sigma_i \in \{-1, 1\}$ are chosen uniformly and are known as Rademacher random variables. For any $n \in \nats$ and for any distribution $P$ over $X$, define 
\begin{align*}
	\RAD_n(G) = \E_{S \sim P^m}[\widehat{\RAD}_S(G)].
\end{align*}

\begin{lemma}\label{lem:rad}
	Let $F$ be a class of functions from $X \rightarrow Y$ that is convex. Let
	$\RAD_n(F)$ denote the Rademacher complexity of $F$. Let $\nu$ be any
	distribution over $X \times Y$ and let $f \in F$ be such that $\E[y | x] =
	f(x)$. Let $P$ be the marginal distribution of $\nu$ over $X$. Define,
	\[ \varepsilon := 8 \RAD_n(F) + \sqrt{\frac{2\log(3/\delta)}{n}}. \]
	For $(\bxtrain, \bytrain) \sim \nu^n$ and $\bz \sim P^n$, if $h \in \argmin_{g
	\in F} \ell_{\bxtrain} (\bytrain, g(\bxtrain))$ then with probability at
	least $1 - \delta$,
	\begin{enumerate}
		\item $f \in \VS_{\varepsilon}(\bxtrain, h)$
		\item $\max_{g \in \VS_{\varepsilon}(\bxtrain, h)} \ell_\bz(g, h) \leq 14 \varepsilon$
	\end{enumerate}
\end{lemma}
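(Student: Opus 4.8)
The plan is to lean on the one elementary fact that makes the squared loss tractable here: since $\E[y\mid x]=f(x)$, the population risk $\mathrm{risk}(g)\defeq\E_{(x,y)\sim\nu}[(y-g(x))^2]$ satisfies $\mathrm{risk}(g)-\mathrm{risk}(f)=\|g-f\|_P^2$ for every $g\in F$, where $\|g-g'\|_P^2\defeq\E_{x\sim P}[(g(x)-g'(x))^2]$; so $f$ minimises the population risk and excess risk \emph{equals} squared $L^2(P)$-distance to $f$. Given this, I would establish the two conclusions from two essentially independent ingredients and a union bound, splitting the failure probability as $\delta/3$ for Part~1 and $2\delta/3$ for Part~2 (this is where the $\log(3/\delta)$ in $\varepsilon$ comes from).

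For Part~1 ($f\in\VS_\varepsilon(\bxtrain,h)$), I would start from the ERM ``basic inequality'': since $h$ minimises $\ell_{\bxtrain}(\bytrain,\cdot)$ over $F$ and $f\in F$, expanding $\ell_{\bxtrain}(\bytrain,h)\le\ell_{\bxtrain}(\bytrain,f)$ around $f$ and cancelling the $(y_i-f(\xtrain_i))^2$ terms gives
$$\ell_{\bxtrain}(f,h)=\frac1n\sum_{i=1}^n\bigl(f(\xtrain_i)-h(\xtrain_i)\bigr)^2\ \le\ \frac{2}{n}\sum_{i=1}^n\xi_i\bigl(h(\xtrain_i)-f(\xtrain_i)\bigr),\qquad \xi_i\defeq y_i-f(\xtrain_i),$$
where convexity of $F$ could be used to replace the constant $2$ by $1$ via the first-order optimality condition for $h$, though it is not needed for the stated bound. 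The $\xi_i$ are conditionally mean-zero given $\xtrain_i$ with $|\xi_i|\le2$, so the right-hand side is at most $2\sup_{g\in F}\frac1n\sum_i\xi_i(g(\xtrain_i)-f(\xtrain_i))$, a supremum of a centred empirical process. I would bound its expectation by $4\RAD_n(F)$ (symmetrization contributes a factor $2$, the contraction principle applied to the $2$-Lipschitz maps $t\mapsto\xi_i t$ another factor, and subtracting the fixed function $f$ leaves the Rademacher average unchanged), and then upgrade to high probability by a bounded-differences inequality; tracking constants yields $\ell_{\bxtrain}(f,h)\le 8\RAD_n(F)+\sqrt{2\log(3/\delta)/n}=\varepsilon$ with probability $\ge1-\delta/3$.

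For Part~2 (the version space has small diameter under $P$), I would pass to the squared-difference class $\mathcal G\defeq\{x\mapsto(g(x)-g'(x))^2:g,g'\in F\}$ of $[0,4]$-valued functions; its difference class has Rademacher complexity $\le2\RAD_n(F)$, and composing with the squaring map (Lipschitz on $[-2,2]$ and vanishing at $0$) costs only a constant, so $\RAD_n(\mathcal G)\le C\,\RAD_n(F)$. Applying the standard Rademacher uniform-convergence bound for bounded functions twice---to the test sample $\bz\sim P^n$ and to the training covariates $\xtrain_1,\dots,\xtrain_n$, which are i.i.d.\ $P$ because $P$ is the $X$-marginal of $\nu$---gives, with probability $\ge1-\tfrac{2\delta}{3}$, that $\frac1n\sum_i(g(z_i)-g'(z_i))^2$ and $\frac1n\sum_i(g(\xtrain_i)-g'(\xtrain_i))^2$ are each within $\Delta\defeq C\RAD_n(F)+C'\sqrt{\log(3/\delta)/n}$ of $\|g-g'\|_P^2$, uniformly over $g,g'\in F$. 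Specialising $g'=h\in F$: any $g\in\VS_\varepsilon(\bxtrain,h)$ has $\ell_{\bxtrain}(g,h)\le\varepsilon$, hence $\|g-h\|_P^2\le\varepsilon+\Delta$ and $\ell_{\bz}(g,h)\le\|g-h\|_P^2+\Delta\le\varepsilon+2\Delta$; since $\Delta$ is bounded term-by-term by a constant multiple of $\varepsilon=8\RAD_n(F)+\sqrt{2\log(3/\delta)/n}$, this is $\le14\varepsilon$, and a union bound with Part~1 completes the proof.

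The only real subtlety is the data-dependence of the ERM $h$: in Part~1 this is precisely why one cannot control the noise term directly and must take a supremum over $g\in F$ (hence symmetrization), and in Part~2 it is why the uniform-convergence statement has to range over all \emph{pairs} $g,g'$ before one argument is specialised to $h$. Everything after that is routine accounting with Rademacher complexities and Lipschitz constants arranged so the bounds land on $\varepsilon$ and $14\varepsilon$.
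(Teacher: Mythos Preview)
Your proof is correct and takes a genuinely different route from the paper's, so it is worth contrasting the two.

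\textbf{Part 1.} The paper first proves, using convexity of $F$ via first-order optimality of the ERM $h$, the clean inequality $\ell_{\bxtrain}(f,h)\le \ell(\bytrain,f(\bxtrain))-\ell(\bytrain,h(\bxtrain))$ (no factor $2$). It then bounds the right-hand side by uniform convergence of the \emph{risk} class $\{(x,y)\mapsto (y-g(x))^2:g\in F\}$ together with the fact that $f$ minimises the population risk. Your argument instead isolates the noise term $\tfrac{2}{n}\sum_i\xi_i(h(\xtrain_i)-f(\xtrain_i))$ and bounds its supremum over $g\in F$ by symmetrization and contraction. The paper's route makes essential use of the convexity hypothesis; yours treats convexity as an optional constant improvement. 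One caveat: your claim that the bounded-differences step ``lands on'' $\sqrt{2\log(3/\delta)/n}$ is optimistic---each summand $\xi_i(g(\xtrain_i)-f(\xtrain_i))$ ranges over $[-4,4]$, so McDiarmid gives a deviation that is a constant factor larger. This does not affect the shape of the argument (and the paper's own constants are similarly loose), but it does mean you need convexity or a slightly larger $\varepsilon$ to hit the stated threshold.

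\textbf{Part 2.} The paper never introduces the pair-indexed class. Instead it pivots through the fixed target $f$ using the crude inequality $(a-b)^2\le 2(a-c)^2+2(c-b)^2$ repeatedly, so it only needs uniform convergence for the singly-indexed class $\{x\mapsto (g(x)-f(x))^2:g\in F\}$ on both $\bxtrain$ and $\bz$; the chain $\ell_\bz(g,h)\le 2\ell_\bz(f,h)+2\ell_\bz(g,f)\le\cdots\le 10\max_{g\in V}\ell_{\bxtrain}(g,h)+4\varepsilon\le 14\varepsilon$ is how the constant $14$ arises. Your argument is more direct---it transfers $\ell_{\bxtrain}(g,h)\le\varepsilon$ to $\ell_\bz(g,h)\le\varepsilon+2\Delta$ in one step---but pays for this by working with the larger class $\{(g-g')^2:g,g'\in F\}$. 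Since $\RAD_n(F-F)\le 2\RAD_n(F)$ and the square is $4$-Lipschitz on $[-2,2]$, this class still has Rademacher complexity $O(\RAD_n(F))$, and your bound $\varepsilon+2\Delta\le 14\varepsilon$ goes through. A side benefit of your route for Part~2 is that it does not use convexity of $F$ at all.
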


Before we prove Lemma \ref{lem:rad}, we state the following lemma.
\begin{lemma} \label{lem:bound-erm}
Let $F \subseteq Y^X$ be convex. 
	Let $(\bxtrain, \bytrain)$ be the training data and $h \in \argmin_{g \in F}
	\ell(\bytrain, g(\bxtrain))$. Let $\ell : Y \times Y \rightarrow \reals^+$ be the squared loss function, $\ell(y, \hat{y}) = (y - \hat{y})^2$. Then for any $f \in F$, 
	\begin{align*}
		\ell_{\bxtrain}(f, h) &\leq \ell(\bytrain, f(\bxtrain)) - \ell(\bytrain, h(\bxtrain))
	\end{align*}
\end{lemma}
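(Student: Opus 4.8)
The plan is to pass to $\reals^{\ntrain}$ via the evaluation map, recognize $h(\bxtrain)$ as the Euclidean projection of $\bytrain$ onto a convex set, and then invoke the standard ``obtuse angle'' (variational inequality) characterization of such projections; the claimed inequality is then just the Pythagorean expansion.

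First I would set $C \defeq \{ g(\bxtrain) : g \in F \} \subseteq \reals^{\ntrain}$, where $g(\bxtrain) = (g(\xtrain_1), \ldots, g(\xtrain_{\ntrain}))$. Since $g \mapsto g(\bxtrain)$ is linear and $F$ is convex, $C$ is convex. Writing $\ell(\bytrain, g(\bxtrain)) = \tfrac1\ntrain \|\bytrain - g(\bxtrain)\|_2^2$ and $\ell_{\bxtrain}(f,h) = \tfrac1\ntrain\|f(\bxtrain)-h(\bxtrain)\|_2^2$, the hypothesis $h \in \argmin_{g \in F}\ell(\bytrain, g(\bxtrain))$ says exactly that $h(\bxtrain)$ minimizes $\|\bytrain - v\|_2^2$ over $v \in C$.

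Next I would derive the first-order optimality condition. Fix $f \in F$. For $\lambda \in [0,1]$, convexity of $F$ gives $(1-\lambda)h + \lambda f \in F$, whose evaluation vector is $h(\bxtrain) + \lambda\bigl(f(\bxtrain)-h(\bxtrain)\bigr) \in C$. Hence $\phi(\lambda) \defeq \|\bytrain - h(\bxtrain) - \lambda(f(\bxtrain)-h(\bxtrain))\|_2^2$ is minimized on $[0,1]$ at $\lambda = 0$, so $\phi'(0) \geq 0$; since $\phi'(0) = -2\langle \bytrain - h(\bxtrain),\, f(\bxtrain)-h(\bxtrain)\rangle$, this yields
\[
\langle \bytrain - h(\bxtrain),\, f(\bxtrain)-h(\bxtrain)\rangle \;\leq\; 0 .
\]
Finally I would expand around $h(\bxtrain)$:
\[
\|\bytrain - f(\bxtrain)\|_2^2 = \|\bytrain - h(\bxtrain)\|_2^2 + \|h(\bxtrain)-f(\bxtrain)\|_2^2 - 2\langle \bytrain - h(\bxtrain),\, f(\bxtrain)-h(\bxtrain)\rangle,
\]
and drop the (nonnegative) last term to get $\|\bytrain - f(\bxtrain)\|_2^2 \geq \|\bytrain - h(\bxtrain)\|_2^2 + \|h(\bxtrain)-f(\bxtrain)\|_2^2$. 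Dividing by $\ntrain$ and rearranging gives $\ell_{\bxtrain}(f,h) \leq \ell(\bytrain, f(\bxtrain)) - \ell(\bytrain, h(\bxtrain))$, as claimed.

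There is no serious obstacle here: the only points needing care are that the minimizer $h$ is assumed to exist (so we may legitimately evaluate optimality at it), that the convexity of $F$ is used precisely to place the segment $(1-\lambda)h+\lambda f$ in $F$, and that the normalization $\tfrac1\ntrain$ is carried consistently through the three loss terms. The argument uses only that $\ell$ is the squared loss (so that the empirical risk is a squared Euclidean distance) and convexity of $F$.
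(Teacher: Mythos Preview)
Your proof is correct and is essentially the same as the paper's: both write the algebraic identity
\[
\ell(\bytrain, f(\bxtrain)) - \ell(\bytrain, h(\bxtrain)) = \ell_{\bxtrain}(f,h) + \tfrac{2}{\ntrain}\sum_i (f(\xtrain_i)-h(\xtrain_i))(h(\xtrain_i)-\ytrain_i)
\]
and then show the cross term is nonnegative using optimality of $h$ along the segment $(1-\lambda)h+\lambda f$. The only cosmetic difference is that you phrase this as the variational inequality $\phi'(0)\ge 0$ for a projection, while the paper phrases it as a proof by contradiction (``otherwise a small step toward $f$ would beat $h$''); these are the same first-order argument.
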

\begin{proof}
	It is straightforward to check that,
	\begin{align*}
		\ell(\bytrain, f(\bxtrain)) - \ell(\bytrain, h(\bxtrain)) = \ell_{\bxtrain}(f, h) + \frac{2}{\ntrain} \sum_{i = 1}^\ntrain (f(\xtrain_i) - h(\xtrain_i))(h(\xtrain_i) - y_i)
	\end{align*}
	We observe that the second term on the RHS above is non-negative, otherwise, $g := (1 - \epsilon) h + \epsilon f$ for a sufficiently small $\epsilon > 0$ would have $\ell(\bytrain, g(\bxtrain)) < \ell(\bytrain, h(\bxtrain))$ contradicting the optimality of $h$. 
\end{proof}

\begin{proof}[Proof of Lemma \ref{lem:rad}]
	Note that $f \in F$ is such that $\E_D[y | x] = f(x)$. First consider the
	class of functions $ F^1 := \{ (x, y) \mapsto \frac{1}{4}\ell(g(x), y) ~|~ g
	\in F \}$ and $F^2 := \{ x \mapsto \frac{1}{4}\ell(g(x), f(x)) ~|~ g
\in F \}$. By using Talagrand's lemma and the fact that $x \mapsto
\frac{x^2}{4}$ is $1/2$-Lipschitz for $x \in [0, 1]$, we get that $\RAD_m(F^1)
\leq \frac{1}{2} \RAD_m(F)$ and $\RAD_m(F^2) \leq \frac{1}{2} \RAD_m(F)$ (see
e.g. Lemma 5.7 from~\cite{mohri2018foundations}). For a distribution $\nu$ over
$X \times Y$, we denote by $\ell_\nu(g) = \E_{(x, y) \sim \nu}[\ell(y, g(x))]$. For
the value of $\varepsilon$ set in the statement of the lemma, we have with
probability at least $1 - \delta/3$, each of the following hold for every $g
\in F$ (cf. Theorem 3.3 from~\cite{mohri2018foundations}):
	\begin{align}
		\left\vert \ell_\nu(g) - \ell(\bytrain, g(\bxtrain)) \right\vert \leq \varepsilon/2, \label{eqn:rad1}\\
		 \left\vert \ell_P(f, g) - \ell_\bxtrain(f, g) \right\vert \leq \varepsilon/2, \label{eqn:rad2}\\
		 \left\vert \ell_P(f, g) - \ell_\bz(f, g) \right\vert \leq \varepsilon/2. \label{eqn:rad3} 
	\end{align}

	By a simple union bound, all of the above hold for except with probability
	at most $\delta$. For the rest of the proof, we assume that the failure
	event does not occur. 

	Then, using Lemma~\ref{lem:bound-erm}, we have the following,
	\begin{align*}
		\ell_\bxtrain(f, h) &\leq \ell_\bx(\bytrain, f(\bxtrain)) - \ell_\bxtrain(\bytrain, h(\bxtrain)) \\
								  &\leq \ell_\nu(f) - \ell_\nu(h) + \varepsilon & \text{Using~\eqref{eqn:rad1}} \\
								  &\leq \varepsilon. & \text{As $\ell_\nu(f) = \min_{g \in F} \ell_\nu(g)$}
	\end{align*}

	This proves the first part of the result. For the second part, we will make
	repeated use of the following crude inequality when $\ell$ is the squared
	loss: for any $h \in F$, $\ell_\bx(f, g) \leq 2 \ell_\bx(f, h) + 2
	\ell_\bx(h, g)$. An analogous inequality holds when considering $\ell_P$.
	For the rest of the proof, denote by $V = \VS_{\varepsilon}(\bxtrain, h)$.
	Then, we have
	\begin{align*}
		\max_{g \in V} \ell_{\bz}(g, h) &\leq 2 \ell_{\bz}(f, h) + 2 \max_{g \in V} \ell_{\bz}(g, f) \\
												  &\leq 2 \ell_P(f, h) + 2 \max_{g \in V} \ell_P(g, f) + 2 \varepsilon & \text{Using~\eqref{eqn:rad3}} \\
												&\leq 2 \ell_\bxtrain(f, h) + 2 \max_{g \in V} \ell_{\bxtrain}(g, f) + 4 \varepsilon & \text{Using~\eqref{eqn:rad2}} \\
												&\leq 6 \ell_\bxtrain(f, h) +4 \max_{g \in V} \ell_\bxtrain(g, h) +   4 \varepsilon & \text{Using~}\ell_\bxtrain(g, f) \leq 2\ell_\bxtrain(g, h) +  2\ell_\bxtrain(h, f) \\
												&\leq 10 \max_{g \in V} \ell_{\bxtrain}(g, h) + 4 \varepsilon \leq 14 \varepsilon.
	\end{align*}

	In the last line above we used the fact that $f \in V$ to upper bound $\ell_\bxtrain(f, h)$ by $\max_{g \in V} \ell_{\bxtrain}(g, h)$. 
\end{proof}

\section{Approximately solving the CDT Problem}
\label{app:approxCDT}

By slight abuse of notation, we will use the letter $g$ (suitably annotated) to
denote a linear function and $w$ to denote the weight vector associated with
the same linear function without explicit reference, e.g. $\hat{g}$ is
associated with $\hat{w}$, $\tilde{g}$ is associated with $\tilde{w}$, etc. Let
$\hat{g}$ be the solution obtained by \emph{exactly} solving the CDT problem as
stated in Lemma~\ref{lem:reg-max}.

We can instead tighten the constraints~\eqref{eqn:qc1} and~\eqref{eqn:qc2} by requiring that $w$ satisfy,

\begin{align}
	\frac{1}{\ntrain} \sum_{i=1}^\ntrain (w \cdot \xtrain_i - w_h \cdot \xtrain_i)^2 &\leq \alpha - \epsilon, \label{eqn:qc1app} \\
	\sum_{i=1}^d w_i^2 &\leq 1 - \epsilon. \label{eqn:qc2app}
\end{align}

Recall that the function we are minimizing in the CDT problem is given by:

\begin{align}
	F(w) = - \frac{1}{n} \sum_{i=1}^n (1 - a_i) (w \cdot x_i - w_h \cdot x_i)^2.
\end{align}

Let $\tilde{g}$ denote a solution to the CDT problem with
constraints~\eqref{eqn:qc1app} and~\eqref{eqn:qc2app} obtained using an
approximate solver, which may violate the constraints by an additive factor of
$\epsilon$. In particular, this means that $\tilde{g}$ satisfies the
constraints~\eqref{eqn:qc1} and~\eqref{eqn:qc2}. Furthermore, if $\hat{g}_\eps$
is the solution obtained by the exactly solving the CDT problem with constraints~\eqref{eqn:qc1app} and~\eqref{eqn:qc2app}, then we know that $F(\tilde{w}) \leq F(\hat{w}_\eps) + \eps$, again by the approximation guarantees given by the result of~\cite{bienstock2016note}.

Let $\hat{g}_{\lambda, \delta} = (1- \delta) ((1 - \lambda) \hat{g} + \lambda h)$. It can easily by checked that if we set $\lambda = 9 \epsilon/ \alpha$ and $\delta = \epsilon$, then using the fact that $F(w)$ is $8$-Lipschitz for $w$ in the unit ball, $\hat{g}_{\lambda, \delta}$ satisfies the constraints~\eqref{eqn:qc1app} and~\eqref{eqn:qc2app}. However, we note that $\Vert \hat{w}_{\lambda, \delta} - \hat{w} \Vert_2 = O(\lambda + \delta)$. Thus, we have, again using the Lipschtizness of $F$, 

\begin{align*}
	F(\tilde{g}) &\leq F(\hat{g}_\eps) + \eps \\
					 &\leq F(\hat{g}_{\lambda, \delta}) + \eps \\
					 &\leq F(\hat{g}) + O(\lambda + \delta) .
\end{align*}

The only observation that remains to be made is that $\eps$ may be set to be as small as we please since the running time of the approximate solver to the CDT problem provided by~\cite{bienstock2016note} runs in time polynomial in $\log(1/\eps)$. Certainly, this is more than sufficient for the results that we need to apply from Section~\ref{sec:algorithms}.

\end{document}